\newcommand*{\addFileDependency}[1]{
\typeout{(#1)}
\@addtofilelist{#1}
\IfFileExists{#1}{}{\typeout{No file #1.}}
}
\title{Precision-Recall Divergence Optimization for Generative Modeling with GANs and Normalizing~Flows}
\author{%
 Alexandre Verine \\
    LAMSADE, CNRS,\\ 
    Universit\'e Paris-Dauphine-PSL, \\ 
    Paris, France \\
  \texttt{alexandre.verine@dauphine.psl.eu} \\
  \And
  Benjamin Negrevergne\\
    LAMSADE, CNRS,\\ 
    Universit\'e Paris-Dauphine-PSL, \\ 
    Paris, France \\
  \texttt{benjamin.negrevergne@dauphine.psl.eu} \\
  \And
  Muni Sreenivas Pydi \\
    LAMSADE, CNRS,\\ 
    Universit\'e Paris-Dauphine-PSL, \\ 
    Paris, France \\
  \texttt{muni.pydi@dauphine.psl.eu} \\
  \And
  Yann Chevaleyre \\
    LAMSADE, CNRS,\\ 
    Universit\'e Paris-Dauphine-PSL, \\ 
    Paris, France \\
  \texttt{yann.chevaleyre@dauphine.psl.eu} \\
}
\begin{document}

\maketitle

\begin{abstract}
Achieving a balance between image quality (precision) and diversity (recall) is a significant challenge in the domain of generative models. Current state-of-the-art models primarily rely on optimizing heuristics, such as the Fr\'echet Inception Distance. While recent developments have introduced principled methods for evaluating precision and recall, they have yet to be successfully integrated into the training of generative models. Our main contribution is a novel training method for generative models, such as Generative Adversarial Networks and Normalizing Flows, which explicitly optimizes a user-defined trade-off between precision and recall.  More precisely, we show that achieving a specified precision-recall trade-off corresponds to minimizing a unique $f$-divergence from a family we call the \mbox{\em PR-divergences}. Conversely, any $f$-divergence can be written as a linear combination of PR-divergences and  corresponds to a weighted precision-recall trade-off. Through comprehensive evaluations, we show that our approach improves the performance of existing state-of-the-art models like BigGAN in terms of either precision or recall when tested on datasets such as ImageNet.

\end{abstract}

\section{Introduction}
Evaluation of generative models has always been a challenging task. The metric used must reflect both the quality of the sample generated (precision) and how much the sample covers the targeted probability distribution (recall).  Inception Score (IS) or Fréchet Inception Distance (FID) have been introduced and are now widely used by the community to select the best models.  Typically, these metrics are favored because they {\em "correlate well with the visual fidelity of the samples"} and are {\em "sensitive to both the addition of spurious modes as well as mode dropping"} \cite{sajjadi_assessing_2018}. However as pointed by 
\citet{kynkaanniemi_improved_2019}, {\em FID and IS group these two aspects into a single value without a clear trade-off.} Depending on the use-case, generative models might require a good precision (high-resolution image and video generation, artistic synthesis, 3D model design) or a good recall (data augmentation, drug discovery, anomaly detection). For that reason,  a number of more principled methods \cite{sajjadi_assessing_2018, kynkaanniemi_improved_2019, djolonga_precision-recall_2020, simon_revisiting_2019, cheema_precision_2023} have emerged to assess precision and recall (hereafter abbreviated by P\&R) independently, however these methods cannot be optimized during training, because they are not  differentiable or because they are too computationally demanding.

To enhance either the precision or the recall of a particular model, an array of strategies and techniques have been employed. Usually these techniques involve altering the latent distribution {\em a posteriori} (e.g. truncation and temperature post-training). Other methods exists such as through rejection sampling, boosting, or instance selection \cite{ardizzone_training_2021, issenhuth_latent_2022, devries_instance_2020, grover_boosted_2017, stimper_resampling_2022, cranko_boosted_2019, midgley_bootstrap_2022}. Although these approaches may utilize proxies of P\&R, they are not theoretically grounded in the principled methods of evaluating these metrics, leaving their alignment with the foundational concepts of P\&R unverified. This leads to the following question.

\begin{question}\label{question: how to train model}
Is it possible to train a generative model that achieves a specified trade-off between precision and recall?
\end{question}

In addition to this question, another of our goals is to understand existing generative models in terms of P\&R. Modern generative models, such as Generative Adversarial Networks (GANs) or Normalizing Flows (NFs), are typically designed to minimize specific divergence measures. Given a target distribution $P$ and a set of parameterized distributions $\{P_\theta | \theta\in \Theta\}$, a generative model aims to find the best fit $\wh P = P_{\theta^*}$ that minimizes a divergence between $P$ and $\wh P$. These divergences induce different behaviors at convergence. For instance, optimizing the Kullback-Leibler (KL) divergence tends to favor mass-covering models \cite{minka_divergence_2005}, contrasting with the mode-seeking behavior observed with  other generative models, leading to the infamous problem of mode collapse. Transitioning from one divergence to another does indeed alter the model's behavior in terms of precision and recall, however, the implicit trade-offs made during the optimization of a general divergence remain somewhat ambiguous. This motivates the following question. 
\begin{question}\label{question: D lambda PR}
What precision-recall trade-off does an arbitrary \fdiv minimize?
\end{question}

In this paper, we bridge the gap between principled methods of P\&R evaluation and controlling their trade-off. In doing so, we address Questions~\ref{question: how to train model} and \ref{question: D lambda PR} by making the following contributions:

\begin{itemize}[itemsep=3pt,topsep=3pt,parsep=0pt,partopsep=0pt,leftmargin=1cm]
    \item We show that achieving a specified precision-recall trade-off corresponds to minimizing a particular \fdiv between $P$ and $\wh P$. Specifically, in Theorem~\ref{thm:DPR} we give a family of $f$-divergences (denoted by $\DPR$, $\lambda\in [0, \infty]$) that are associated with various points along the precision-recall curve of the generative model. 
    \item We show that any arbitrary $f$-divergence can be written as a linear combination of $f$-divergences from the $\DPR$ family. This result makes explicit, the implicit precision-recall trade-offs made by generative models that minimize an arbitrary $f$-divergence.
    \item We propose a novel approach to train or fine-tune a generative model on notoriously hard to train \fdivs, with guarantees on divergences defined by the  lipschitz constant of $f$.
    \item We use this approach to train models on a user specified trade-off between P\&R by minimizing $\DPR$ for any given $\lambda$. We specifically focus on GANs and NFs. For instance, Figure~\ref{fig:realnvp2D} shows how our model performs under various settings of $\lambda$. With a high $\lambda$, we can train the model to favor precision over recall and vice-versa. 
    \item Through extensive experiments, we show that our approach enables effective model training to minimize the PR-Divergence, particularly for fine-tuning pre-trained models, with a notable impact from the choice of trade-off parameters $\lambda$, while also demonstrating scalability with larger dimensions and datasets.
\end{itemize}

\begin{figure*}[t!]\label{fig:realnvp2D}
    \centering
    \subfigure[ $\lambda = 0.1$]{\includegraphics[width=0.185\textwidth]{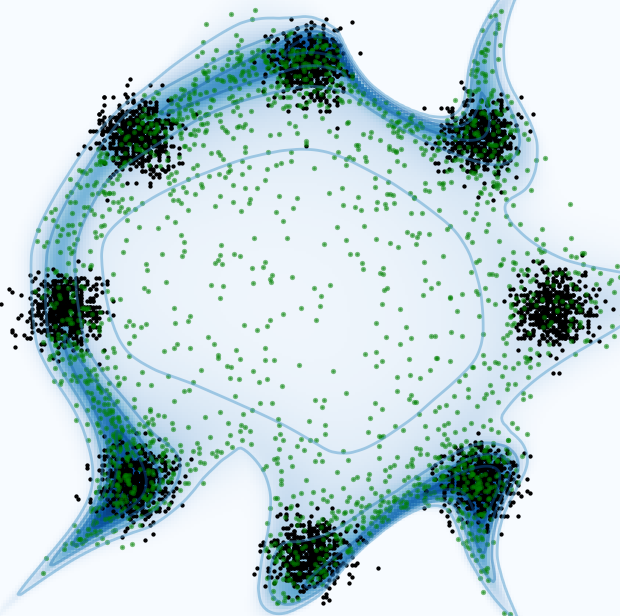} \label{fig:l01}}
    \subfigure[ $\lambda = 1$]{\includegraphics[width=0.185\textwidth]{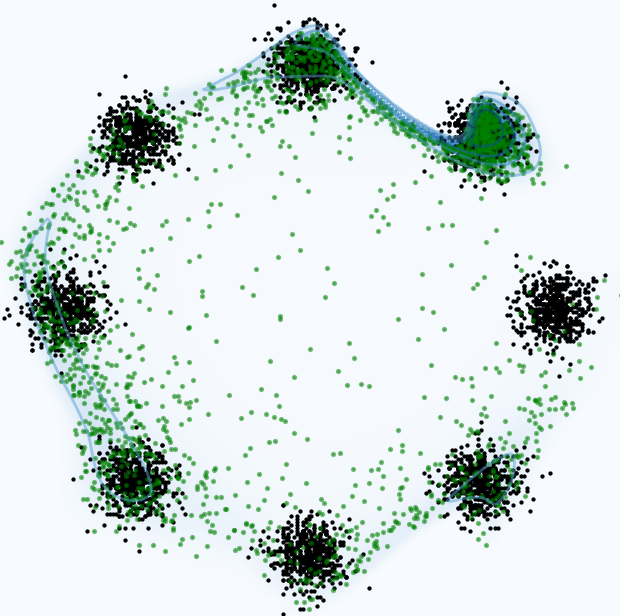}\label{fig:l1}}
    \subfigure[ $\lambda = 10$]{\includegraphics[width=0.185\textwidth]{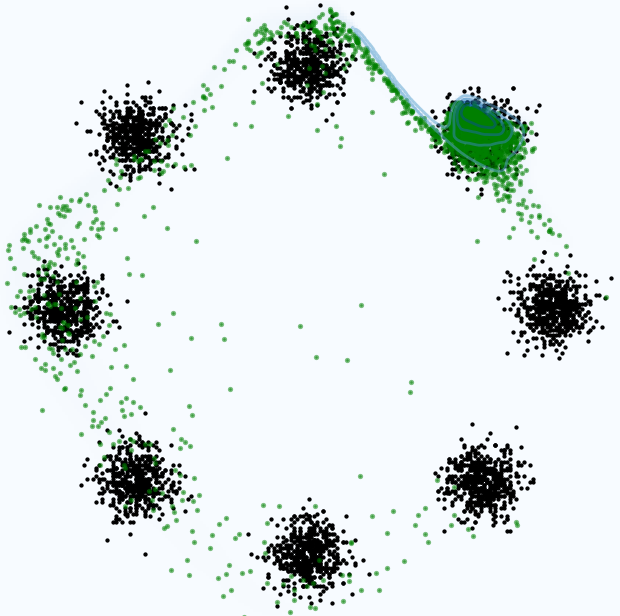}\label{fig:l10}}
    \hfill\vline\hfill
    \subfigure[c][ $\KL$]{\includegraphics[width=0.185\linewidth]{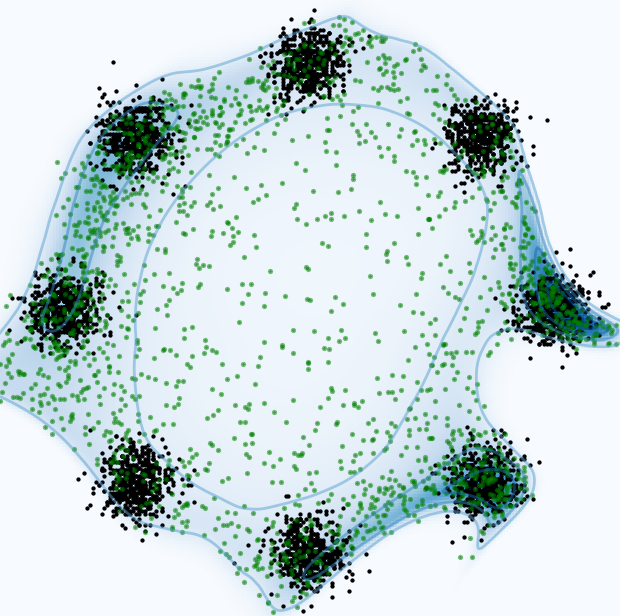} \label{fig:kl}}
    \subfigure[c][ $\rKL$]{\includegraphics[width=0.185\linewidth]{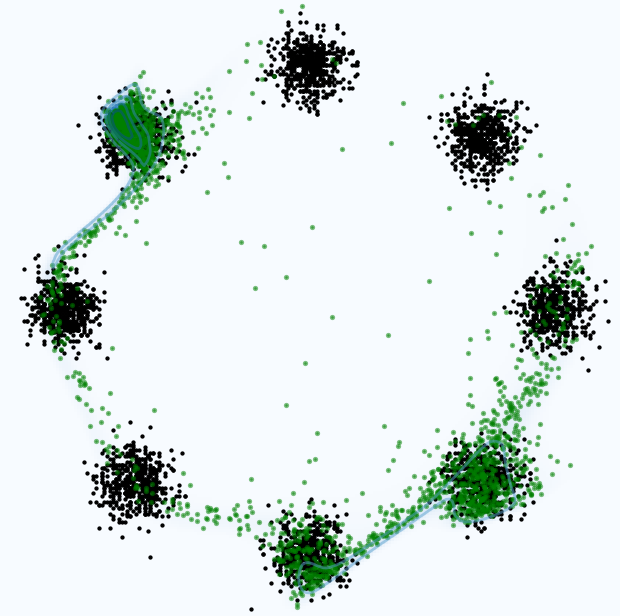}  \label{fig:rkl}}
    \caption{
    NFs - RealNVP \cite{dinh_density_2017} trained on 2D Gaussians. Fig.~\ref{fig:l01} to Fig.~\ref{fig:l10}: models  trained to minimize $\DPR$.
    Fig.~\ref{fig:kl} and Fig.~\ref{fig:rkl}: models trained to minimize $\KL$ and $\rKL$ respectively.
    Samples drawn from the true distribution $P$ are shown in black, samples drawn from the estimated distribution $\wh P$ are shown in green and the log-likelihood of $\whP$ is shown in blue (darker means higher density). (a) $\lambda = 0.1$, favors  recall over  precision. (b) $\lambda = 1$, balanced precision vs. recall trade-off. (c) $\lambda = 10$, which favors precision over recall. As observed in \cite{minka_divergence_2005}, and demonstrated in Theorem~\ref{cor:KLrKLPR}, the $\KL$ (\ref{fig:kl}) is mass covering and the $\rKL$ (\ref{fig:rkl}) in mode-seeking. The corresponding PR-Curve are presented in Figure~\ref{fig:PRGaus}. 
    }
    \vspace{-3mm}
\end{figure*}

\section{Related works  }

\paragraph{Generative model evaluation metrics:} IS or FID are widely adopted due to their ability to assess visual fidelity and sensitivity to mode variations. However, these metrics fall short in providing a trade-off between P\&R. Principled approaches were introduced by \citet{djolonga_precision-recall_2020} and \citet{sajjadi_assessing_2018}, and later extended by \citet{simon_revisiting_2019}, providing a definition P\&R independently using PR-Curves detailed in Section~\ref{subsec:PR}.  We adopt this extended approach in our work. In image generative modeling,  the current consensus for P\&R evaluation are two methods. First, the method presented by \citet{kynkaanniemi_improved_2019}, which provides a simpler evaluation of P\&R based on the estimation of the support of the distribution using a k-NN algorithm, akin to the most recent work by \citet{cheema_precision_2023}. Second, a method introduced by \cite{naeem_reliable_2020}, computing the Density (D) and the Coverage (C) which account for local density and are robust to outliers in contrast to \cite{kynkaanniemi_improved_2019}. In natural language processing, one popular method, MAUVE \cite{pillutla_mauve_2021}, consists of the area under PR-Curves defined by \cite{djolonga_precision-recall_2020}. However, because of their mathematical foundation, these methods remain unsuitable for training due to non-differentiability and high computational requirements.
\paragraph{Trade-offs and techniques in generative model:} The challenge of balancing  P\&R in generative model training has led to a variety of techniques. Methods for enhancing precision often involve manipulating the latent distribution post-training, such as hard truncation \cite{karras_style-based_2019,brock_large_2019, karras_analyzing_2020, sauer_stylegan-xl_2022} or standard deviation adjustment, also called soft truncation \cite{kingma_glow_2018}. One can also use rejection sampling in the image space \cite{azadi_discriminator_2019, tanielian_learning_2020}, in the latent space \cite{stimper_resampling_2022, issenhuth_latent_2022}, or in the dataset \cite{devries_instance_2020}. There are also numerous strategies to prevent mode collapse and boost recall \cite{metz_unrolled_2017, mescheder_adversarial_2018, che_mode_2017}, gradient-boosting methods \cite{grover_boosted_2017, cranko_boosted_2019, giaquinto_gradient_2020} and mixture based latent models \cite{ ben-yosef_gaussian_2018, pandeva_mmgan_2019}. However, it should be noted that these approaches lack a foundation in the principled methods of P\&R evaluation. This underscores the need for a theoretically grounded approach to balance these crucial aspects in generative models, a gap that our current work aims to address. Note also that all these methods can be applied to any model, and in particular to the one trained using our proposed method.
\paragraph{Divergence training in generative models:}Modern generative models, such as GANs and NFs, often employ specific divergence measures for model optimization. With works such as \citet{grover_flow-gan_2018, nowozin_f-gan_2016}, models can be trained with a variety of divergences such as \fdiv, thus observing different results of P\&R. Another notable work by \citet{midgley_flow_2022} proposed training with $\alpha$-Divergence, under the assumption of access to data density. The work of \cite{li_mode-seeking_2023}, closely aligned with our approach, defines different orders of mode seeking and evaluates the corresponding \fdivs at these levels. Although its methodology employs the training of $f$-GAN models on simple datasets to illustrate the mode-seeking property, it does not offer the flexibility to establish a user-defined trade-off. The implicit trade-offs made by these divergence measures is still a challenge: the recent work of \cite{siry_theoretical_2023} shows the links between  P\&R and any DeGroot's divergences.

\section{Background}

\paragraph{Notation:}
 Throughout the paper, we use $\setX \subset \mathbb R^d$ to refer to the input space and $\mathcal Z \subset \mathbb R^{m}$ to the latent space of the model we consider. 
We also denote $\setP(\setX)$ and $\setP(\setZ)$ the set of all probability measures over measurable subsets of $\setX$ and $\setZ$ respectively. $P$ and $\whP$ are consistently used to denote the target and estimated distributions (both members of $\setP(\setX)$). We also assume that $P$ and $\wh P$ share the same support in $\setX$, and that they admit densities denoted by the corresponding lower case letters $p$ and $\whp$, respectively. 
Finally, for any function $f \in \mathbb R \to \mathbb R$, we define the convex conjugate (or Fenchel transform) of $f$ given by $f^*(t) = \sup_{u \in \reals} \left\{tu-f(u)\right\}$.

\subsection{\fdivs}
\label{subsec:fdiv}

Given a convex lower semi-continuous (l.s.c) function $f:\reals^+ \rightarrow \reals$ satisfying $f(1)=0$, the \fdiv between two probability distributions $P$ and $\whP$ is defined as follows.
\begin{align}
\label{eq:fdiv}
	\Df (P\Vert\whP ) = \int_{\Xset} \whp(\vx) f\left(\frac{p(\vx)}{\whp(\vx)}\right) \d \vx.
\end{align}
$\Df$ is invariant to an affine transformation in $f$ i.e., $\Df (P\Vert\whP ) = D_{f^\dag} (P\Vert\whP )$ for $f^\dag(u) = f(u) + c(u-1)$ for any constant $c\in \reals$.
Many well-known statistical divergences such as the KL divergence ($\KL$), the reverse KL ($\rKL$) or the Total Variation ($\TV$) are \fdivs (see Table~\ref{tab:fdiv}). Importantly, any $\Df$  admits a dual variational form \cite{nguyen_surrogate_2009}, with \cal T denoting the set of all measurable functions $\cal X \to \mathbb R$:
\begin{equation}
\begin{aligned}
\label{eq:dual}
	\Df(P \Vert \wh P)
     =\sup_{\T \in \cal T} \Ddual, \quad \mbox{where}\quad \Ddual = \E_{\vx\sim P}\left[T(\vx) \right] - \E_{\vx\sim \wh P}\left[ f^*(T(\vx))\right]
\end{aligned}
\end{equation}

We use $\Topt\in \cal T$ to denote the function that achieves the supremum in \eqref{eq:dual}.

\begin{table*}[ht]
\caption{List of common \fdivs. The generator $f$ is given with its Fenchel conjugate $f^*$. The optimal discriminator $\Topt$ is given to compute the likelihood ratio $p(\vx)/\whp(\vx) = \nabla f^*(\Topt(\vx))$. Then $f''(1/\lambda)/\lambda^3$ is given to compute the $\Df$ as a combination of $\DPR$ using Theorem~\ref{thm: weighted PRD}.}
\label{tab:fdiv}
\begin{small}
\begin{sc}
\begin{center}

\begin{tabular*}{\textwidth}{l @{\extracolsep{\fill}} ccccc}
\toprule
Divergence & Notation & $f(u)$ & $f\s(t)$ & $\Topt(\vx)$ & $f''(1/\lambda)/\lambda^3$ \\
\midrule \addlinespace[0.5em]
KL & $\KL $ & $u\log u$& $\exp(t-1)$ & $1 + \log p({\vx})/\whp(\vx)$ & $1/\lambda^2$ \\ \addlinespace[0.4em]
Reverse KL & $\rKL$ & $ -\log u$ & $-1 - \log -t$ & $-\whp(\vx)p(\vx)$ & $1/\lambda$ \\ \addlinespace[0.4em]
$\chi^2$-Pearson & $\Dchi $ & $(u-1)^2$ & $t^2/4 + t$ & $2\left(p(\vx)\whp(\vx)-1 \right)$ & $2/\lambda^3$ \\ \addlinespace[0.4em]
\bottomrule
\end{tabular*}
\end{center}
\end{sc}
\end{small}
\end{table*}

\subsection{Generative models}

\paragraph{Generative Adversarial Networks (GANs):} A GAN consists of two functions, generator $G: \setZ \to \setX$, discriminator $T: \setX \to \mathbb R$, as well as a prior distribution $Q\in \setP(\setZ)$ which is usually the standard normal $\mathcal N (0, \mathcal I_m)$.
The estimated data distribution $\whP_G$ is the push-forward distribution of $Q$ by $G$.
In the original work of \citet{goodfellow_generative_2014}, a specific $\Df$ is used to optimize the divergence between $P$ and $\whP_G$. In this paper, we consider the more general framework of \citet{nowozin_f-gan_2016}
that can be used to train a GAN with any $\Df$ by solving the following min-max optimization problem:
 \begin{align}
\min_{G}\max_{\T}\Ddu_{f,T}(P \Vert \wh P_G)\
=\min_{G}\max_{\T} \E_{\vx\sim P}\left[T(\vx) \right] - \E_{\vx\sim \wh P_G}\left[ f^*(T(\vx))\right],
\end{align}

\paragraph{Normalizing Flows (NFs):} An NF consists of an invertible function $G: \setZ \to \setX$ and a prior distribution $Q\in \setP(\setZ)$.
As for GANs, the estimated distribution $\whP_G$ is the push-forward of $Q$ by $G$.
However, because $G$ is invertible, it is possible to compute the density $\hat{p}(\cdot )$  using a simple change of variable formula, $\hat{p} (\vx) = q(G^{-1}(\vx)) \vert \det \Jac_{G^{-1}}(\vx) \vert$, where $\det \Jac_{G^{-1}}(\vx)$ is the determinant of the Jacobian matrix of $G^{-1}$ at $\vx$, and train the generator using the $\KL$ between $P$ and $\wh P_{F}$, or equivalently, by maximizing the log-likelihood:
 \begin{align}
\min_{G}\KL(P \Vert \wh P_G) =  H(P) - \max_{G} \E_{\vx\sim P}\left[ \log \hat p_{G}(\vx)\right],
\end{align}
where $H(P)$ is the continuous entropy of $P$.  In practice, 
the generator function $G$ is typically represented by neural networks such as GLOW \citep{kingma_glow_2018}, RealNVP \cite{rezende_variational_2016}, or ResFlow \cite{behrmann_invertible_2019, chen_residual_2020} for which $\det \Jac_{G^{-1}}(\vx)$ is easy to compute. \citet{grover_flow-gan_2018} showed that it is possible to train an NF using most $f$-divergences. In practice, the log-likelihood is added to the min-max objective to stabilize learning. This gives us the following optimization problem:
\begin{align}
    \min_{G}\max_{\T}\Ddu_{f,T}(P \Vert \wh P_G)   -\gamma \E_{\vx\sim P}\left[ \log \hat p_{G}(\vx)\right],
\end{align}
\subsection{Precision-Recall curve for generative models}
\label{subsec:PR}
Generative models are usually evaluated using a single criterion such as the IS \citep{salimans_improved_2016} or the FID \cite{heusel_gans_2017}. However, these criteria are unable to distinguish between the distinct failure modes of low precision (i.e., failure to produce quality samples) and low recall (i.e., failure to cover all modes of $P$). The following definition was introduced by \citet{sajjadi_assessing_2018} and later extended by \citet{simon_revisiting_2019}.

\begin{definition}[PRD set, adapted from \citet{simon_revisiting_2019}]
For $P, \wh P\in \setP(\setX)$, {\em the P\&R set} $\PRd$ is defined as the set of pairs of precision $\alpha$ and recall $\beta$ in $\mathbb R^+~\times~\mathbb R^+$ such that there exist $\mu\in \setP(\setX)$ for which $P \ge \beta\mu$ and $\wh P \ge \alpha\mu$.
The {\em precision-recall curve} (or PR curve) is defined as $\partial \PRd = \{ (\alpha, \beta) \in \PRd \mid \nexists (\alpha', \beta')\mbox{ with }\alpha' \ge \alpha\mbox{ and }\beta' \ge \beta \}$.
\end{definition}

An equivalent definition of $\PRd$ is found in \citet{sajjadi_assessing_2018}, where $(\alpha, \beta)\in\PRd$ if $P$ and $\wh P$ can be decomposed as in \eqref{eq: PR sajjadi} for some common component $\mu\in \setP(\setX)$ and complementary components $\nu_P, \nu_{\wh P} \in \setP(\setX)$.
\begin{align}\label{eq: PR sajjadi}
    P = \alpha \mu +(1-\alpha)\nu_{P} \et \whP = \beta \mu +(1-\beta)\nu_{\whP}.
\end{align}

\citet{simon_revisiting_2019} show that the PR curve is parameterized by $\lambda\in [0,+\infty]$  as $\partial\PRd = \left\{\alpha_\lambda(P\Vert\whP),\beta_\lambda(P\Vert\whP)~\vert~\lambda \in [0,+\infty]\right\}$, with $\alpha_\lambda(P\Vert\whP) = \int_{\Xset}\min\left(\lambda p(\vx), \whp(\vx)\right) \dx$ and
$\beta_\lambda(P\Vert \wh P)  = \alpha_\lambda(P\Vert \wh P)/\lambda$.
Here, $\lambda$ is called the {\em trade-off parameter} and can be used to adjust the sensitivity to precision or recall.

An illustration of the PR curve is given in Figure~\ref{fig:1DPR} for a target distribution $P$ that is a mixture of two Gaussians and two candidate models $\wh P_1$ and $\wh P_2$. 
We can see on Figure~\ref{fig:1DPR} that $\wh  P_1$ offers better results for large values of $\lambda$ (with high sensitivity to precision) whereas $\wh P_2$ offers better results for low values of  $\lambda$ (with high sensitivity to recall).

\begin{figure}[t]
	\subfigure[t][Distributions]{
		\includegraphics[width=0.30\linewidth]{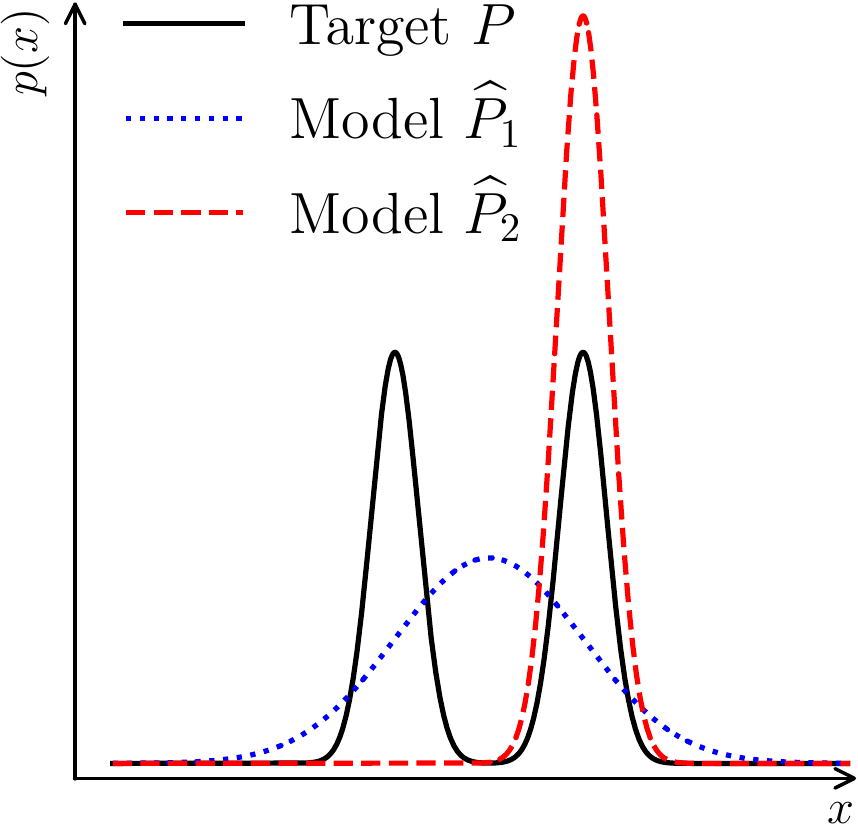}
  \label{fig:1DPRa}
	}
	\subfigure[t][$\partial\PRd$ curves]{
		\includegraphics[width=0.30\linewidth]{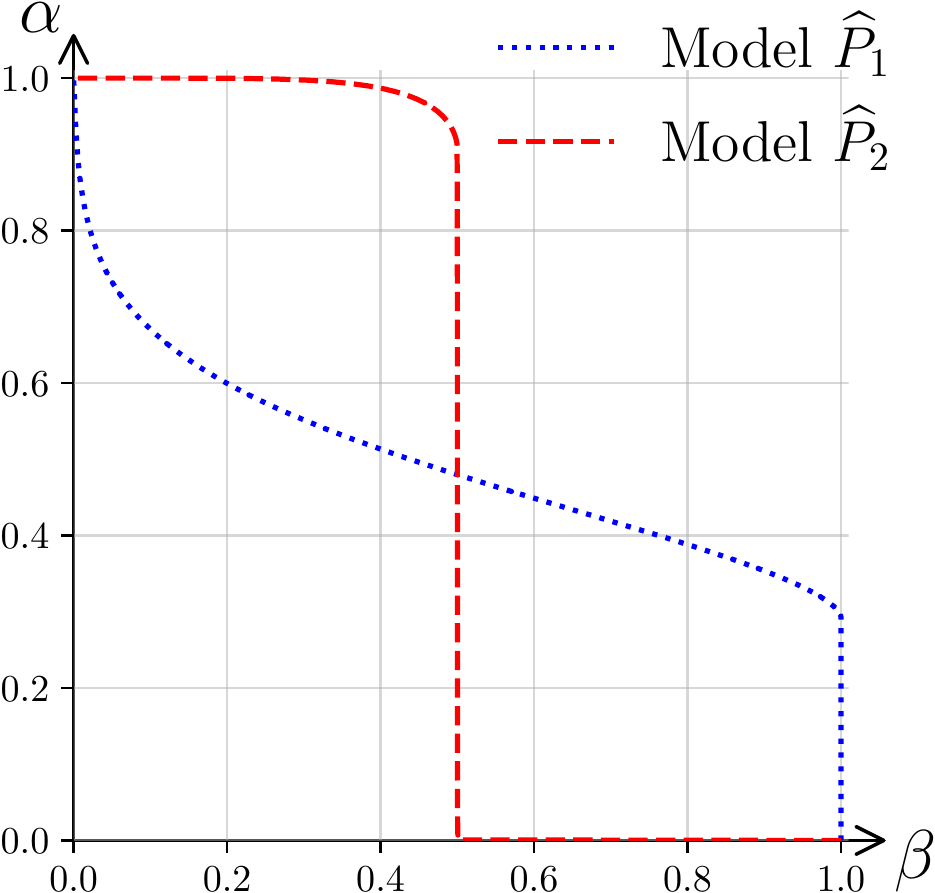}
  \label{fig:1DPRb}
	}
 	\subfigure[t][$\partial\PRd$ for Fig.~\ref{fig:realnvp2D}]{
		\includegraphics[width=0.32\linewidth]{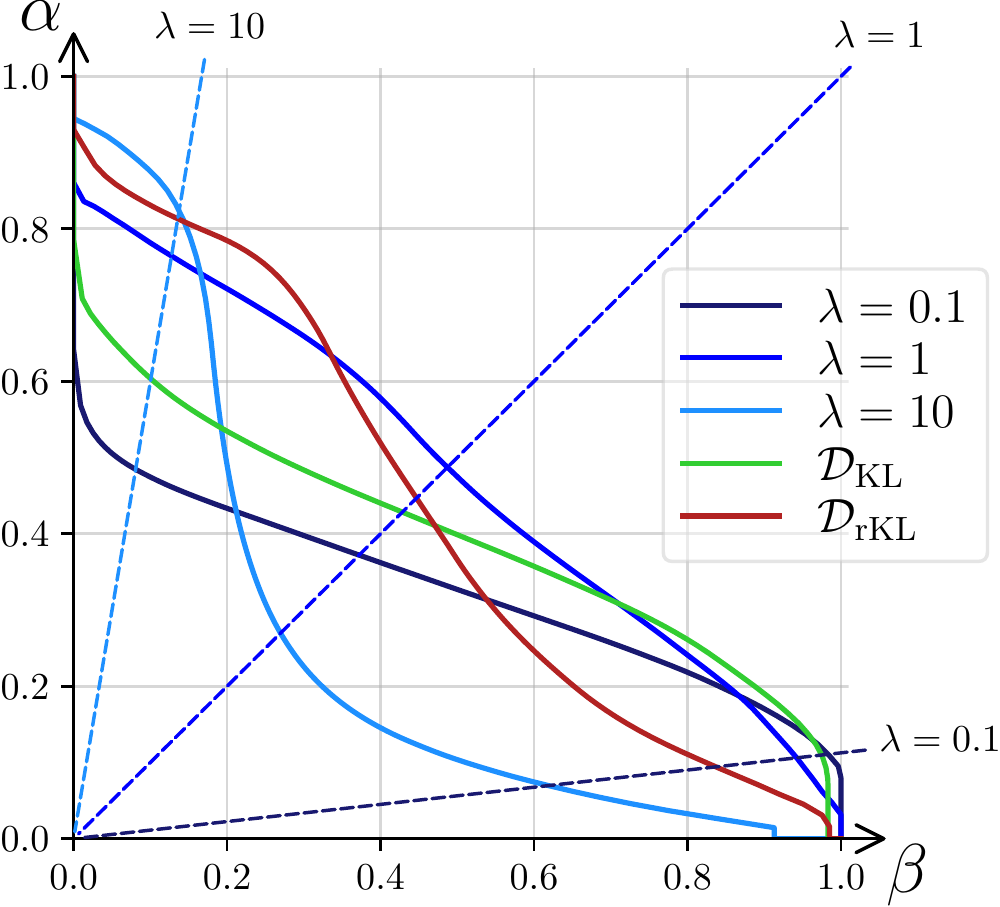}
  \label{fig:PRGaus}
	}
	\caption{PR curves for two models $\wh P_1$ and $\wh P_2$ of $P$. Figure~\ref{fig:1DPRa} shows $\wh P_1, \wh P_2$ and $P$. Figure~\ref{fig:1DPRb} shows PR curves for $\wh P_1, \wh P_2$ against $P$. $\whP_1$ has good recall since it covers both modes of $P$, but low precision since it generates points between the modes. $\whP_2$ has good precision since it does not generate samples outside of $P$ but low recall since it can generate samples from only one mode. Figure~\ref{fig:PRGaus} are the PR curves corresponding to Figure~\ref{fig:realnvp2D} and are detailed in Section~\ref{sec:xp}. }
    \label{fig:1DPR}
\end{figure}

\section{Precision and Recall trade-off as an \fdiv}\label{sec:PRdiv}
In this section, we formalize the link between P\&R trade-off and \fdivs, and address Question~\ref{question: D lambda PR}. We will exploit this link in Section~\ref{sec: optimizing for PR} to train models that optimize a particular P\&R trade-off. 

\subsection{Precision-Recall as an \fdiv}\label{subsec: DPR defs and link}

We start by introducing the {\em PR-Divergence} as follows. 
\begin{definition}[PR-divergence]
Given a trade-off parameter $\lambda \in [0,+\infty]$, the {\em PR-divergence} (denoted by $\DPR$) is defined as $D_{f_\lambda}$ for $f_\lambda : \mathbb R^+ \to \mathbb R$ given by $f_\lambda(u) =\max(\lambda u, 1) - \max(\lambda, 1)$
for $\lambda\in \mathbb R^+$ and $f_\lambda(u) = 0$ for $\lambda = +\infty$.
\end{definition}
Note that $f_\lambda$ is continuous, convex, and satisfies $f_\lambda(1)=0$ for all $\lambda$. A graphical representation of $f_\lambda$ can be found in Appendix~\ref{app:prop:DPR}. The following proposition gives some properties of $\DPR$.

\begin{proposition}[Properties of the PR-Divergence]\label{prop:DPR}
~
	\begin{itemize}
            \itemsep0em 
		\item The Fenchel conjugate $f_\lambda^*$ of $f_\lambda$ is defined on $\dom\left(f_\lambda\s\right) = \left[ 0, \lambda\right]$ and given by, $f_\lambda\s\left(t\right) = t/\lambda$ for $\lambda\leq 1$ and $f_\lambda\s\left(t\right) = t/\lambda+\lambda-1$ otherwise. 
	\item The optimal discriminator for the dual form is $\Topt(\vx) = \lambda \sign\left\{p(\vx)/\whp(\vx) - 1\right\}$.
            \item $\DPR(\whP \Vert P) = \lambda 
    \fullDPR{\frac{1}{\lambda}}(P\Vert \whP)$.
		\end{itemize}
\end{proposition}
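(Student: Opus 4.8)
I would establish the three items separately; only the first requires real work, the other two being short consequences of it and of facts recalled in Section~\ref{subsec:fdiv}.

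\textbf{Part (i): the Fenchel conjugate.} For $\lambda\in(0,\infty)$ the generator $f_\lambda$ is piecewise linear --- constant and equal to $1-\max(\lambda,1)$ on $[0,1/\lambda]$, affine with slope $\lambda$ on $[1/\lambda,\infty)$, with a single kink at $u=1/\lambda$. I would compute $f_\lambda^*(t)=\sup_{u\geq 0}\{tu-f_\lambda(u)\}$ directly by treating the two linear pieces. On the slope-$\lambda$ piece, $u\mapsto tu-f_\lambda(u)$ is affine with slope $t-\lambda$, so the supremum is $+\infty$ whenever $t>\lambda$ and is otherwise attained at the kink; on the flat piece the slope is $t$, so for $t\in[0,\lambda]$ the maximum over that piece is again at $u=1/\lambda$. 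Plugging $u=1/\lambda$ into $tu-f_\lambda(u)$ gives $t/\lambda-1+\max(\lambda,1)$, and separating the cases $\lambda\le 1$ and $\lambda>1$ to evaluate $\max(\lambda,1)$ yields the two announced expressions on $\dom(f_\lambda^*)=[0,\lambda]$. The degenerate case $\lambda=+\infty$, where $f_\infty\equiv 0$, is handled directly.

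\textbf{Part (ii): the optimal discriminator.} Once $f_\lambda^*$ is known, $\Topt$ comes from the pointwise form of \eqref{eq:dual}: for each $\vx$ one maximizes $t\,p(\vx)-f_\lambda^*(t)\,\whp(\vx)$ over $t$, and the maximizer is characterized by $p(\vx)/\whp(\vx)\in\partial f_\lambda^*(\Topt(\vx))$, equivalently $\Topt(\vx)\in\partial f_\lambda\big(p(\vx)/\whp(\vx)\big)$. Since $f_\lambda$ is piecewise linear, $\partial f_\lambda$ is single-valued away from the kink, so reading it off at the density ratio gives $\Topt$; the value on the kink set is immaterial, as it carries no $P$- or $\whP$-mass under the standing assumption that $P,\whP$ share support and admit densities. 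This part is a short corollary of part (i) and of the variational characterization of \fdivs recalled in Section~\ref{subsec:fdiv}.

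\textbf{Part (iii): the reversal identity.} I would first record the general fact that for any generator $g$, the reversed generator $\tilde g(u):=u\,g(1/u)$ is again a valid generator and $D_g(\whP\Vert P)=D_{\tilde g}(P\Vert\whP)$; this is one change of variables in \eqref{eq:fdiv}, using $\whp(\vx)\,\tilde g\big(p(\vx)/\whp(\vx)\big)=p(\vx)\,g\big(\whp(\vx)/p(\vx)\big)$. Specializing to $g=f_\lambda$ and simplifying, $\tilde f_\lambda(u)=\max(\lambda,u)-\max(\lambda,1)\,u$, whereas $\lambda f_{1/\lambda}(u)=\max(u,\lambda)-\max(1,\lambda)$; hence $\tilde f_\lambda(u)-\lambda f_{1/\lambda}(u)=-\max(\lambda,1)(u-1)$, an affine term of the form $c(u-1)$. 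Since $\Df$ is invariant under such affine modifications of its generator (Section~\ref{subsec:fdiv}) and $D_{cg}=c\,D_g$ for $c>0$ by linearity of the integral, we get $D_{\tilde f_\lambda}(P\Vert\whP)=\lambda D_{f_{1/\lambda}}(P\Vert\whP)$, i.e. $\DPR(\whP\Vert P)=\lambda\,\fullDPR{\frac{1}{\lambda}}(P\Vert\whP)$. The corners $\lambda\in\{0,+\infty\}$ follow from $f_0\equiv f_\infty\equiv 0$ together with the convention $0\cdot\infty=0$. The main (and only real) obstacle is the case analysis in part (i) --- in particular pinning down the effective domain $[0,\lambda]$ and the $\lambda=+\infty$ corner; parts (ii) and (iii) are then quick.
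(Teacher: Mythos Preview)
Your proof is correct. Parts (i) and (ii) follow the paper's approach in Appendix~\ref{app:prop:DPR}: a two-piece analysis of the piecewise-affine $f_\lambda$ to compute the conjugate, and then $\Topt$ read off as (a selection from) $\partial f_\lambda$ at the density ratio. One small caveat on (i): with the supremum restricted to $u\ge 0$ as you describe it, the conjugate is actually finite for every $t<0$ (constant, equal to $\max(\lambda,1)-1$, attained at $u=0$), so $\dom(f_\lambda^*)=[0,\lambda]$ does not literally follow; you need either to extend $f_\lambda$ to all of $\reals$ or to remark that negative values of $T$ are never optimal in the dual. The paper's own treatment of this endpoint is equally informal. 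Part (iii) is where you genuinely depart from the paper. The paper expands the integral $\int p(\vx)\,f_\lambda(\whp/p)\,\d\vx$ directly, factors out $\lambda$, and recognises $\whp(\vx)\,f_{1/\lambda}(p/\whp)$ in what remains---a short, hands-on computation tailored to this particular generator. You instead invoke the general perspective identity $D_g(\whP\Vert P)=D_{\tilde g}(P\Vert\whP)$ for $\tilde g(u)=u\,g(1/u)$, compute $\tilde f_\lambda$, and verify that $\tilde f_\lambda-\lambda f_{1/\lambda}$ is an affine term $c(u-1)$, finishing via the affine-invariance of $\Df$ recalled in Section~\ref{subsec:fdiv}. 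Your route is slightly longer but makes transparent which structural properties of \fdivs carry the argument and would transfer verbatim to any other generator; the paper's direct manipulation is more self-contained and avoids the auxiliary lemma.
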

Observe that $\DPR(\whP \Vert P) \neq \DPR(P \Vert \whP)$ in general, but for $\lambda=1$, $ \fullDPR{1}(P\Vert \whP)= \fullDPR{1}(\whP\Vert P) = \TV(P\Vert \whP)/2$.
Having defined the PR-divergence, we can now show that P\&R w.r.t $\lambda$ can be expressed as a function of the divergence between $P$ and $\wh P$.

 \begin{theorem}[P\&R as a function of $\DPR$]\label{thm:DPR}
Given $P,\wh P\in \setP(\setX)$ and $\lambda \in \left[0,+\infty\right]$, the PR curve $\partial\PRd$ is related to the PR-divergence $\DPR(P\Vert \whP)$ as follows.
\begin{align}
     \alpha_\lambda(P\Vert \wh P) &= \min(1, \lambda)-\DPR(P\Vert \whP).
\end{align}
Conversely, suppose that there exists a strictly decreasing linear function $h: [0,1]\to \reals^+$ and an \fdiv $\Df$ such that $h(\alpha_\lambda(P\Vert \wh P)) = \Df(P\Vert \whP)$ for all $P,\wh P\in \setP(\setX)$, then $f(u) = c_1 f_\lambda(u) + c_2(u-1)$.
\end{theorem}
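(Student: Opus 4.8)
The plan is to establish the forward direction by a direct computation relating $\DPR(P\Vert\whP)$ to the known closed form $\alpha_\lambda(P\Vert\whP) = \int_\Xset \min(\lambda p(\vx), \whp(\vx))\,\dx$ from \citet{simon_revisiting_2019}, and then to prove the converse by a rigidity argument that pins down $f$ from the functional equation $h(\alpha_\lambda(P\Vert\whP)) = \Df(P\Vert\whP)$.

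For the forward direction, I would start from the definition $\DPR(P\Vert\whP) = \int_\Xset \whp(\vx)\, f_\lambda\!\left(\tfrac{p(\vx)}{\whp(\vx)}\right)\dx$ with $f_\lambda(u) = \max(\lambda u, 1) - \max(\lambda, 1)$. Substituting gives $\whp(\vx) f_\lambda(p/\whp) = \max(\lambda p(\vx), \whp(\vx)) - \max(\lambda,1)\,\whp(\vx)$. Integrating over $\Xset$ and using $\int \whp = 1$, we get $\DPR(P\Vert\whP) = \int_\Xset \max(\lambda p(\vx),\whp(\vx))\,\dx - \max(\lambda,1)$. Now apply the elementary identity $\max(a,b) = a + b - \min(a,b)$ with $a = \lambda p(\vx)$, $b = \whp(\vx)$: the integral becomes $\lambda + 1 - \alpha_\lambda(P\Vert\whP)$, so $\DPR(P\Vert\whP) = \lambda + 1 - \max(\lambda,1) - \alpha_\lambda(P\Vert\whP) = \min(\lambda,1) - \alpha_\lambda(P\Vert\whP)$, which rearranges to the claimed formula. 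The cases $\lambda = 0$ and $\lambda = +\infty$ are checked separately from the conventions ($f_\infty \equiv 0$, and $\alpha_0 = 0$).

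For the converse, the hypothesis is that $h\circ\alpha_\lambda = \Df$ as functionals on all pairs $(P,\whP)$, with $h$ strictly decreasing and affine, say $h(s) = a - bs$ with $b > 0$. Combining with the forward direction, $\Df(P\Vert\whP) = a - b\min(\lambda,1) + b\,\DPR(P\Vert\whP)$ for all $P,\whP$. Since both $\Df$ and $\DPR$ vanish when $P = \whP$, the constant $a - b\min(\lambda,1)$ must be $0$, giving $\Df = b\,\DPR$ as functionals, i.e. $D_f = D_{b f_\lambda}$. The remaining task is to conclude that two $f$-divergences agree as functionals on $\setP(\Xset)$ only if their generators differ by an affine term $c(u-1)$ — this is the standard identifiability fact for $f$-divergences, which I would prove (or cite) by testing on two-point distributions: for $P,\whP$ supported on $\{x_1,x_2\}$ with masses $(p_1,p_2)$ and $(\whp_1,\whp_2)$, the identity $\whp_1 f(p_1/\whp_1) + \whp_2 f(p_2/\whp_2) = \whp_1 (bf_\lambda)(p_1/\whp_1) + \whp_2 (bf_\lambda)(p_2/\whp_2)$ holds for all admissible masses; letting the second point carry vanishing mass and varying the ratio $u = p_1/\whp_1$ over $(0,\infty)$ forces $f(u) - bf_\lambda(u)$ to be affine in $u$, and $f(1) = f_\lambda(1) = 0$ fixes the form as $f(u) = b f_\lambda(u) + c_2(u-1)$; absorbing $b$ into $c_1$ gives the stated conclusion.

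The main obstacle is the identifiability step in the converse: one has to be careful that the two-point (or general finite-support) test distributions are genuinely admissible under the paper's standing assumption that $P$ and $\whP$ share the same support and admit densities, so strictly speaking the argument should be run with densities on $\Xset \subset \reals^d$ that approximate two-point masses (e.g. sums of two narrow bumps with mass ratios $(u t, \ldots)$ and taking a limit), rather than literal atoms. Handling this approximation cleanly — ensuring the convex, l.s.c. regularity of $f$ lets us pass to the limit and recover affineness of $f - bf_\lambda$ pointwise on all of $(0,\infty)$ — is the technical heart of the proof; everything else is bookkeeping with the $\min/\max$ identity.
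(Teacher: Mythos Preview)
Your proposal is correct and follows essentially the same route as the paper. The forward direction is the same $\min/\max$ computation (the paper starts from $\alpha_\lambda$ and arrives at $\DPR$, you start from $\DPR$ and arrive at $\alpha_\lambda$, which is immaterial), and the converse is argued identically: evaluate at $P=\whP$ to pin down the affine $h$, deduce $\Df = b\,\DPR$ as functionals, and invoke the identifiability of $f$-divergence generators up to an affine term $c(u-1)$. The paper simply cites this last step as ``the uniqueness theorem of $f$-divergence'' without further justification, so your two-point/approximation sketch is in fact more detailed than what the paper provides; the density-vs-atoms concern you raise is real but is not addressed in the paper either.
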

A direct consequence of Theorem~\ref{thm:DPR} is that minimizing $\DPR$ is equivalent to maximizing $\alpha_\lambda$. 
In a more explicit way, Theorem~\ref{thm:DPR} suggests that $\DPR$ is the {\em only} \fdiv (up to an affine transformation) for which this property holds. This makes $\DPR$ a uniquely suitable candidate for training a generative model with a specific P\&R trade-off. 
The proof of Theorem~\ref{thm:DPR} is in  Appendix~\ref{app:thm:DPR}

\subsection{Relation between PR-divergences and other \fdivs}
In the previous subsection, we showed that for each trade-off parameter $\lambda$, there exists a $\DPR$ that corresponds to optimizing for it. This raises the converse question of what trade-off is achieved by optimizing for an arbitrary \fdiv. We answer this by showing in the following theorem that any \fdiv can be expressed as a weighted sum of PR-divergences. 

\begin{theorem}[\fdiv as weighted sums of PR-divergences]\label{thm: weighted PRD}\footnote{An equivalent result can be found in Corollary 19 of \cite{siry_theoretical_2023}}
For any  $P,\wh P\in \setP(\setX)$ supported on all of $\setX$ and any $\lambda  \in \mathbb R^+ $, with $m =\min_{\setX}\left(\whp(\vx)/p(\vx)\right)$ and $M =\max_{\setX}\left(\whp(\vx)/p(\vx)\right)$:
\begin{equation}\label{thm:fdivPR}
    \Df(P\Vert\whP) = \int^{M}_{m}\frac{1}{\lambda^3}f''\left(\frac{1}{\lambda}\right)\DPR(P\Vert\whP) \d\lambda,
    \end{equation}
\end{theorem}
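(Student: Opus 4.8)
The plan is to reduce the identity to a pointwise statement about the generator functions, using the linearity of the $f$-divergence in $f$ and the fact that both sides of \eqref{thm:fdivPR} are integrals against $\whp(\vx)\,\d\vx$ of a function of the likelihood ratio $r(\vx) = p(\vx)/\whp(\vx)$. Concretely, write $\Df(P\Vert\whP) = \int_\setX \whp(\vx)\, f(r(\vx))\,\d\vx$ and, using Theorem~\ref{thm:DPR} together with the definition $\alpha_\lambda(P\Vert\whP) = \int_\setX \min(\lambda p(\vx),\whp(\vx))\,\d\vx$, express $\DPR(P\Vert\whP) = \min(1,\lambda) - \alpha_\lambda(P\Vert\whP) = \int_\setX \whp(\vx)\, f_\lambda(r(\vx))\,\d\vx$ where $f_\lambda(u) = \max(\lambda u,1) - \max(\lambda,1)$. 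So it suffices to prove the scalar identity
\begin{equation}\label{eq:scalar}
	f(u) = \int_m^M \frac{1}{\lambda^3} f''\!\left(\frac1\lambda\right) f_\lambda(u)\,\d\lambda
\end{equation}
for every $u$ in the range $[m,M]$ of $r$ (after possibly discarding an affine term in $f$, which does not change $\Df$); interchanging the $\lambda$-integral with the $\vx$-integral by Fubini then gives the theorem.

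To establish \eqref{eq:scalar} I would substitute $s = 1/\lambda$, so $\d\lambda = -\d s/s^2$ and the integral becomes $\int_{1/M}^{1/m} s\, f''(s)\, f_{1/s}(u)\,\d s$. The next step is to understand $f_{1/s}(u) = \max(u/s, 1) - \max(1/s,1)$ as a function of $s$ for fixed $u$: it is piecewise linear in $s$, with a kink where the two arguments of each $\max$ switch, i.e. at $s = u$ (for the first term, assuming $u$ is between $m$ and $M$ so the kink lies in the integration range) and at $s = 1$. A short computation shows $f_{1/s}(u)$ equals, up to an affine function of $s$, the "hinge" $(u - s)_+$ or $(s-u)_+$ depending on which side of $1$ we are; the affine-in-$s$ pieces integrate against $s f''(s)$ to something affine in $u$ (by integrating by parts twice, using $\int s f''(s)\,\d s$ and $\int s^2 f''(s)\,\d s$), which can be absorbed into the $c_2(u-1)$ freedom. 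The essential content is then the identity $\int (u-s)_+\, g(s)\,\d s$-type formula: differentiating the right side of \eqref{eq:scalar} twice in $u$ should return $f''(u)$, since $\partial_u^2 f_\lambda(u) = \lambda\,\delta(\lambda u - 1) \cdot(\text{const})$ picks out $\lambda = 1/u$ and the Jacobian factors are exactly arranged so that $\frac{1}{\lambda^3}f''(1/\lambda)$ evaluated at $\lambda = 1/u$ times the appropriate factor yields $f''(u)$. I would therefore prove \eqref{eq:scalar} by showing both sides have the same second derivative in $u$ and that the lower-order terms match at the endpoints, or equivalently by directly computing the elementary integral after the change of variables.

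The main obstacle I anticipate is handling the boundary terms and the affine indeterminacy cleanly: the formula only holds modulo an affine term $c_2(u-1)$ (as Theorem~\ref{thm:DPR} already signals), and one must check that the contributions of the $\max(\lambda,1)$ constant term in $f_\lambda$, of the endpoints $\lambda = m$ and $\lambda = M$, and of the integration-by-parts boundary terms involving $f'(1/m), f'(1/M)$ all either cancel or collapse into such an affine term — and that this affine term genuinely vanishes in $\Df$ by the affine-invariance noted after \eqref{eq:fdiv}. A secondary technical point is justifying Fubini and the manipulation of the distributional second derivative; this is routine under the standing assumption that $p,\whp$ are positive densities on all of $\setX$ with $0 < m \le M < \infty$, which guarantees the integrand is bounded and the $\lambda$-range is a compact subinterval of $(0,\infty)$ where $f''$ is locally integrable (as $f$ is convex).
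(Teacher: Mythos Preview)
Your proposal is correct and takes essentially the same approach as the paper: reduce to a pointwise identity about the generators (up to an affine term $c(u-1)$, absorbed by the affine invariance of $\Df$), exploit the piecewise-linear structure of $f_\lambda$ with kinks at $\lambda=1/u$ and $\lambda=1$, and close with Fubini. The paper's only organizational difference is that it works ``forward'': it starts with a generic weight $c''(\lambda)$, explicitly splits $\int c''(\lambda)f_\lambda(u)\,\d\lambda$ at the two kinks and integrates by parts twice to obtain $u\,c(1/u)-c(1)$ plus an affine-in-$u$ term, and then determines $c''(\lambda)=\lambda^{-3}f''(1/\lambda)$ by differentiating the relation $f(u)=u\,c(1/u)-c(1)$ twice; your proposed second-derivative-in-$u$ verification (via the distributional identity $\partial_u^2 f_\lambda(u)=\lambda\,\delta(u-1/\lambda)$) is a slightly slicker variant of the same computation. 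One small slip to fix: since $m,M$ bound $\whp/p$, the range of $r=p/\whp$ is $[1/M,\,1/m]$ rather than $[m,M]$ --- which is exactly the $s$-range you obtain after your substitution $s=1/\lambda$, so the kink at $s=u$ does lie in the integration domain as needed.
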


As a sanity check, observe that the weights $f''(1/\lambda)/\lambda^3$ remain invariant under an affine transformation in $f$ much like $D_f$.
\begin{corollary}[$\KL$ and $\rKL$ as an average of $\DPR$]\label{cor:KLrKLPR}
The $\KL$ Divergence and the $\rKL$ can be written as a weighted average of PR-Divergence $\DPR$ :
  \begin{align}
      \KL(P\Vert\whP)=\int_{m}^{M}\frac{1}{\lambda^2} \DPR(P\Vert\whP)\d\lambda, \et \rKL(P\Vert\whP)=\int_{m}^{M} \frac{1}{\lambda} \DPR(P\Vert\whP)\d\lambda.
   \end{align}
\end{corollary}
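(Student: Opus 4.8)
The plan is to obtain Corollary~\ref{cor:KLrKLPR} as an immediate specialization of Theorem~\ref{thm: weighted PRD}: once the decomposition $\Df(P\Vert\whP) = \int_m^M \frac{1}{\lambda^3} f''(1/\lambda)\, \DPR(P\Vert\whP)\, \d\lambda$ is available, it only remains to plug in the generators of $\KL$ and $\rKL$ and simplify the weight $\frac{1}{\lambda^3} f''(1/\lambda)$. First I would recall from Table~\ref{tab:fdiv} that $\KL(P\Vert\whP) = \Df(P\Vert\whP)$ with $f(u) = u\log u$, which is convex and l.s.c.\ on $\reals^+$ and satisfies $f(1)=0$, so Theorem~\ref{thm: weighted PRD} applies verbatim (the hypothesis that $P,\wh P$ be supported on all of $\setX$ is the standing assumption of that theorem and is simply inherited). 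Then $f'(u) = \log u + 1$ and $f''(u) = 1/u$, hence $f''(1/\lambda) = \lambda$ and the weight equals $\frac{1}{\lambda^3}\cdot\lambda = \frac{1}{\lambda^2}$, which gives the first claimed identity.

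The second identity follows the same way. I would recall that $\rKL(P\Vert\whP) = \Df(P\Vert\whP)$ with $f(u) = -\log u$, again convex and l.s.c.\ on $\reals^+$ with $f(1)=0$. Here $f'(u) = -1/u$ and $f''(u) = 1/u^2$, so $f''(1/\lambda) = \lambda^2$ and the weight is $\frac{1}{\lambda^3}\cdot\lambda^2 = \frac{1}{\lambda}$; substituting into Theorem~\ref{thm: weighted PRD} yields $\rKL(P\Vert\whP) = \int_m^M \frac{1}{\lambda}\,\DPR(P\Vert\whP)\,\d\lambda$. Both weight functions already appear in the last column of Table~\ref{tab:fdiv}, so this step amounts to reading off that column and checking two elementary derivatives.

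One small point of care concerns the word ``average'': the weights $1/\lambda^2$ and $1/\lambda$ do not integrate to $1$ over $[m,M]$ in general, so these are strictly speaking weighted combinations rather than convex averages; if a genuine average is wanted one normalizes by $\int_m^M \lambda^{-2}\,\d\lambda = 1/m - 1/M$ (respectively $\int_m^M \lambda^{-1}\,\d\lambda = \log(M/m)$). Beyond this terminological remark there is essentially no obstacle: the entire content of the corollary is the computation of two second derivatives, and the only thing to verify is that $u\log u$ and $-\log u$ are valid $f$-divergence generators — which is standard — so that Theorem~\ref{thm: weighted PRD} applies without modification.
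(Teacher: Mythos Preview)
Your proposal is correct and follows essentially the same route as the paper: apply Theorem~\ref{thm: weighted PRD} and compute the weight $f''(1/\lambda)/\lambda^3$ for the generators $u\log u$ and $-\log u$. The paper's own proof is equally brief and, for the $\rKL$ case, additionally remarks that one could alternatively use the symmetry $\DPR(\whP\Vert P)=\lambda\,\fullDPR{1/\lambda}(P\Vert\whP)$ from Proposition~\ref{prop:DPR} instead of recomputing $f''$.
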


As we can see in this Corollary, both $\KL$ and $\rKL$ can be decomposed into a sum of PR-divergences terms $\DPR$, each weighted with $1/\lambda^2$ and $1/\lambda$ respectively.
Note that if $\lambda<1$ then $1/\lambda>1/\lambda^2$, and conversely, if $\lambda>1$. $\KL$ is thus associating more weights on recall than $\rKL$.
This explains the {\em mass covering} behavior observed in NFs trained with $\KL$. Comparatively, the $\rKL$ assigns more weight to terms with a large lambda, leading to the {\em mode seeking} behavior empirically observed with flows trained with the $\rKL$ \cite{midgley_bootstrap_2022}. However, as it will be observed in the Section~\ref{sec:xp}, $\rKL$ is still favoring low values of $\lambda$. Other weights for other \fdivs are in Appendix~\ref{app:subsec:fdiv}.

\section{Minimizing the Precision-Recall divergence}\label{sec: optimizing for PR}

In this section, we address Question~\ref{question: how to train model} by introducing a new method that can be used to optimize a model with a specific precision-recall trade-off $\lambda$. A first naive strategy to achieve this is to use the $f$-GAN framework introduced by \citet{nowozin_f-gan_2016}, and minimize the dual variational form of $\DPR$ presented in Theorem~\ref{thm:DPR}. Together, this results in solving the min-max problem:
\begin{align}\label{eq:naiveapproach}
    \min_{G}\max_{\T}\Ddu_{f_\lambda,T}(P \Vert \wh P)
    =\min_{G}\max_{\T} \E_{\vx\sim P}\left[T(\vx) \right] - \E_{\vx\sim \wh P}\left[ f^*_\lambda(T(\vx))\right],
\end{align}
where $G$ and $T$ are both represented using neural networks. In practice, this strategy would fail because training a neural network with loss functions such as $f\s_{\mathrm{TV}}$ (or in this case $f\s_{\lambda}$) is notoriously difficult due to vanishing gradients\footnote{The explanation for this vanishing gradient phenomenon primarily relies on the fact that the optimal discriminator for these functions is $\Topt(\vx)=\sign(\whp(\vx)/p(\vx)-1)$. To approximate it, \cite{nowozin_f-gan_2016} recommends the use of a \emph{tanh} activation function that is known to induce vanishing gradients. } that lead to poor training performance \cite{nowozin_f-gan_2016, grover_flow-gan_2018}. Instead,  training neural networks on functions $f\s$ such as $f\s_{\mathrm{KL}}$  or $f\s_{\chi^2}$, results in much better empirical performance \cite{nowozin_f-gan_2016, li_mode-seeking_2023, tao_chi-square_2018}.

To avoid these issues, we show how to train the discriminator using an auxiliary divergence (based on a function $g \neq f$) to better estimate the target $f$-divergence. The main idea is to choose an auxiliary $g$-divergence  that is adequate for training $T$ (e.g. $\KL$ or $\Dchi$) and use it to compute the {\em likelihood ratio} $p(\vx) / \wh p(\vx)$.

Because, at optimality, we have $\nabla g^*(\Topt(\vx)) = p(\vx) / \wh p(\vx)$, we can then compute the \fdiv as follows: 

\begin{align}
    \Df(P\Vert \whP)=\int \whp(\vx)f\left(\frac{p(\vx)}{\whp(\vx)}\right)\dx=\int \whp(\vx)f(\nabla g\s(\Topt(\vx)))\dx.
\end{align} 


In practice, we do not have access to $\Topt$. Instead, we have a discriminator $T$ trained to maximize $\D_{g,T}^{\mathrm{dual}}$. At any time during training, we can estimate the $f$-divergence based on $T$ using the {\em primal estimate}, which we define as follows.
\begin{definition}[Primal estimate  $\D^{\mathrm{primal}}_{f, T}$]
Let $P, \whP\in \setP(\setX)$.
For any function $T:\mathcal{X}\to \mathbb{R}$, $f : \mathbb{R}^+\to \mathbb{R}$, and $g : \mathbb{R}^+\to \mathbb{R}$, we define the primal estimate $\D^{\mathrm{primal}}_{f, T}$ as follows.
\begin{align}
    \D^{\mathrm{primal}}_{f, T}(P \Vert \wh P)=\int_{\Xset} \whp(\vx)f\left(r(\vx)\right)\d\vx,
\end{align}
where  $r:\mathcal{X}\rightarrow\mathbb{R}^+$ is given by, $r(\vx) = \nabla g\s (T(\vx))$.  
\end{definition}

The success of this approach depends on how well $r$ approximates $p(x)/\wh p(x)$, which depends on $T$. We first show that the approximation error of $r$ measured in terms of the Bregman divergence \cite{amari_methods_2007} (also defined in Appendix~\ref{app:subsec:bregman}). It corresponds \emph{exactly} to the approximation error of $\D_{g,T}^{\mathrm{dual}}$, so minimizing the latter will also minimize the former.  
\begin{theorem}[Error of the estimation of an \fdiv under the dual form.]\label{thm:breg}
For any discriminator $T:\mathcal{X}\rightarrow\mathbb{R}$ and $r\left(\vx\right)=\nabla f\s (T(\vx))$,
\begin{align}
\D_{g}(P\Vert \whP)-\D_{g,T}^{\mathrm{dual}}(P\Vert \whP)=\mathbb{E}_{\whP}\left[\breg_{g}\left(r(\vx),\frac{p(\vx)}{\whp(\vx)}\right)\right].
\end{align}
\end{theorem}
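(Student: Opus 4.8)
The plan is to reduce the claimed identity to a pointwise application of the equality case of Fenchel--Young, and then integrate against $\whP$. First I would write both quantities on the left-hand side as integrals. By definition, $\D_g(P\Vert\whP)=\int_{\Xset}\whp(\vx)\,g\!\left(p(\vx)/\whp(\vx)\right)\d\vx$, and by the dual form \eqref{eq:dual},
$$\D_{g,T}^{\mathrm{dual}}(P\Vert\whP)=\int_{\Xset}p(\vx)\,T(\vx)\,\d\vx-\int_{\Xset}\whp(\vx)\,g^*(T(\vx))\,\d\vx.$$
Subtracting and pulling $\whp(\vx)$ out of both integrals (valid since $\whp>0$ on the common support), the left-hand side equals $\int_{\Xset}\whp(\vx)\left[\,g(u(\vx))-u(\vx)\,T(\vx)+g^*(T(\vx))\,\right]\d\vx$, where $u(\vx):=p(\vx)/\whp(\vx)$. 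It therefore suffices to establish the pointwise identity $g(u)-u\,t+g^*(t)=\breg_g(r,u)$ with $t:=T(\vx)$ and $r:=\nabla g^*(t)=r(\vx)$.

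The crux is the equality case of Fenchel--Young. Since $g$ is convex and lower semi-continuous and $r=\nabla g^*(t)$, we have $t\in\partial g(r)$; assuming $g$ is differentiable at $r$ this reads $t=\nabla g(r)$, and the Fenchel--Young inequality holds with equality, i.e.\ $g(r)+g^*(t)=r\,t$, so $g^*(t)=r\,t-g(r)$. Plugging this into the bracket gives
$$g(u)-u\,t+r\,t-g(r)=g(u)-g(r)-\nabla g(r)\,(u-r),$$
which is exactly the Bregman divergence $\breg_g(r,u)$ generated by $g$ (with the convention of Appendix~\ref{app:subsec:bregman}, in which the gradient is evaluated at the first argument). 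Integrating this equality against $\whp(\vx)\,\d\vx$ yields $\D_g(P\Vert\whP)-\D_{g,T}^{\mathrm{dual}}(P\Vert\whP)=\E_{\whP}\!\left[\breg_g\!\left(r(\vx),p(\vx)/\whp(\vx)\right)\right]$, as claimed. (I read $r(\vx)=\nabla g^*(T(\vx))$ throughout, consistent with the definition of the primal estimate; the $f^*$ in the theorem statement is a typo for $g^*$.)

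The only genuine obstacle is the regularity needed for the Fenchel--Young equality to be attained: one must know that $\nabla g^*(T(\vx))$ is well defined (e.g.\ that $T(\vx)$ lies in the interior of $\dom g^*$) and that $\nabla g$ and $\nabla g^*$ are mutual inverses there, so that $t=\nabla g(r)$ indeed holds; this is satisfied for the auxiliary divergences used in practice such as $\KL$ and $\Dchi$, whose generators and conjugates are smooth and strictly convex on the relevant domains. I would also note that every quantity involved is invariant under the affine shift $g\mapsto g+c(u-1)$, so the normalization $g(1)=0$ plays no role here, and that finiteness of $\D_{g,T}^{\mathrm{dual}}$ justifies splitting it into the two separate integrals above.
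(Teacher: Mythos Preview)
Your pointwise reduction via the Fenchel--Young equality is correct and reaches the claimed identity by a more direct route than the paper. The paper's own proof first writes the gap as a Bregman divergence in the \emph{dual} variable, $\mathbb{E}_{\whP}\bigl[\breg_{g^*}(T(\vx),T^{\mathrm{opt}}(\vx))\bigr]$, using $\D_g=\D_{g,T^{\mathrm{opt}}}^{\mathrm{dual}}$ and the fact that $\nabla g^*(T^{\mathrm{opt}})=p/\whp$, and only then transfers to the primal side through the Bregman duality relation $\breg_g(a,b)=\breg_{g^*}(\nabla g(a),\nabla g(b))$. Your argument bypasses both the explicit appearance of $T^{\mathrm{opt}}$ and that duality lemma: it needs only the single-point equality $g^*(t)=rt-g(r)$ together with $t=\nabla g(r)$ at $r=\nabla g^*(t)$. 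This is a bit more elementary and makes the regularity requirements (differentiability of $g$ at $r$, of $g^*$ at $t$) transparent.

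One caveat on conventions: the appendix in fact defines $\breg_g(x,y)=g(x)-g(y)-\langle\nabla g(y),x-y\rangle$ with the gradient taken at the \emph{second} argument, not the first as you assert. Under that definition your computed expression $g(u)-g(r)-\nabla g(r)(u-r)$ is $\breg_g(u,r)=\breg_g\bigl(p(\vx)/\whp(\vx),\,r(\vx)\bigr)$, i.e.\ with the two arguments reversed relative to the theorem statement. This is a harmless inconsistency already present in the paper (the duality step in the paper's proof silently swaps the arguments), and the only downstream use---the strong-convexity lower bound $\breg_g\ge\tfrac{\mu}{2}\lVert\cdot\rVert^2$ feeding into Theorem~\ref{thm:boundedestimation}---is symmetric in the two arguments, so nothing is affected. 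Your reading of $r(\vx)=\nabla g^*(T(\vx))$ rather than $\nabla f^*(T(\vx))$ is the intended one.
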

On the basis of this result, the quality of the approximation will crucially depend on $\nabla g$. We can show that if the auxiliary $g$ is strongly convex, the error in the estimation of $\mathcal{D}_{f,T}^{\mathrm{primal}}$ is bounded:
\begin{theorem}[Bound on the estimation of an \fdiv using an auxiliary $g$-divergence]\label{thm: f g divs}\label{thm:boundedestimation}
Let $f, g: \mathbb{R}^+\to \mathbb{R}$ be such that $g$ is $\mu$-strongly
convex and $f$ is $\sigma$-Lipschitz. For the discriminator $T:\mathcal{X}\rightarrow\mathbb{R}$, let $r\left(\vx\right)=\nabla g\s (T(\vx))$. Then 
\begin{align*}
    \D_g(P\Vert \whP)-\D_{g,T}^{\mathrm{dual}}\le\epsilon 
    \implies 
    \left\vert \mathcal{D}_{f}(P\Vert \whP)-\mathcal{D}_{f,T}^{\mathrm{primal}}(P\Vert \whP)\right\vert \le\sigma\sqrt{\frac{2\epsilon}{\mu}}.
\end{align*}
\end{theorem}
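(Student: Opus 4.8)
The plan is to control the primal estimation error by a short chain of inequalities: (i) move the absolute value inside the defining integral and use that $f$ is $\sigma$-Lipschitz, (ii) apply Jensen's inequality to pass from a first moment to the square root of a second moment, (iii) lower-bound the relevant Bregman divergence by a quadratic using $\mu$-strong convexity of $g$, and (iv) invoke Theorem~\ref{thm:breg} to replace that Bregman term by $\epsilon$. No interchange of limits or integrals is required; every estimate is pointwise under the single integral $\int_{\Xset}\whp(\vx)\,\d\vx$.

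Concretely, I would first write, from the definitions of $\mathcal{D}_{f}$ and $\mathcal{D}_{f,T}^{\mathrm{primal}}$,
\[
\left|\mathcal{D}_{f}(P\Vert\whP)-\mathcal{D}_{f,T}^{\mathrm{primal}}(P\Vert\whP)\right|
=\left|\int_{\Xset}\whp(\vx)\Big(f\big(p(\vx)/\whp(\vx)\big)-f\big(r(\vx)\big)\Big)\d\vx\right|
\le\sigma\,\E_{\whP}\!\left[\,\left|r(\vx)-\tfrac{p(\vx)}{\whp(\vx)}\right|\,\right],
\]
where the inequality uses that $r(\vx)=\nabla g\s(T(\vx))$ lies in $\dom(g)=\reals^+$, so the $\sigma$-Lipschitz bound on $f$ applies pointwise. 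Jensen's inequality (concavity of $\sqrt{\cdot}$) then bounds this by $\sigma\sqrt{\E_{\whP}[(r(\vx)-p(\vx)/\whp(\vx))^2]}$. Since $g$ is $\mu$-strongly convex, its Bregman divergence satisfies $\breg_g(a,b)\ge\tfrac{\mu}{2}(a-b)^2$ for all $a,b\in\reals^+$; taking $a=r(\vx)$, $b=p(\vx)/\whp(\vx)$ and integrating against $\whP$ gives $\E_{\whP}[(r(\vx)-p(\vx)/\whp(\vx))^2]\le\tfrac{2}{\mu}\,\E_{\whP}[\breg_g(r(\vx),p(\vx)/\whp(\vx))]$. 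Finally, Theorem~\ref{thm:breg}, applied with the auxiliary divergence $g$ (so that $r=\nabla g\s(T)$), identifies $\E_{\whP}[\breg_g(r(\vx),p(\vx)/\whp(\vx))]=\mathcal{D}_{g}(P\Vert\whP)-\mathcal{D}_{g,T}^{\mathrm{dual}}(P\Vert\whP)\le\epsilon$; this also certifies that the second moment, and hence every quantity above, is finite, which legitimizes the Jensen step. Composing the four bounds yields $\left|\mathcal{D}_{f}(P\Vert\whP)-\mathcal{D}_{f,T}^{\mathrm{primal}}(P\Vert\whP)\right|\le\sigma\sqrt{2\epsilon/\mu}$.

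I do not expect a genuine obstacle: all four steps are standard. The one place that calls for care is step (iii) — translating ``$g$ is $\mu$-strongly convex'' into the quadratic lower bound on $\breg_g$, and, relatedly, confirming that $\nabla g\s$ is well-defined and $\reals^+$-valued (equivalently, that $g\s$ is $\tfrac1\mu$-smooth on its domain) so that both the Bregman expression and the Lipschitz bound are meaningful at $r(\vx)$. If one only wanted Lipschitzness of $f$ on a sub-interval of $\reals^+$, one would additionally note that both $r(\vx)$ and $p(\vx)/\whp(\vx)$ stay in that interval, but this is not needed under the stated hypotheses.
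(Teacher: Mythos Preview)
Your proposal is correct and follows essentially the same approach as the paper: Lipschitz bound on $f$, Jensen's inequality to pass to the second moment, the quadratic lower bound on $\breg_g$ from $\mu$-strong convexity, and Theorem~\ref{thm:breg} to identify the Bregman term with the dual gap $\epsilon$. The paper merely orders the steps differently (establishing the second-moment bound first, then chaining the Lipschitz and Jensen steps), and your added remarks on finiteness and the range of $r(\vx)$ are more careful than the paper's exposition.
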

If $T$ successfully maximizes $\D_{g,T}^{\mathrm{dual}}$, then the primal estimation converges to $\Df$. To implement this approach, we propose the following simplified version of the algorithm: Repeat until convergence these 3 steps:
\begin{enumerate}
\item Let $x^{\mathrm{real}}_{1}\ldots x^{\mathrm{real}}_{N}\sim P$ and $x^{\mathrm{fake}}_{1}\ldots x^{\mathrm{fake}}_{N}\sim\widehat{P}_{G}$.
\item Update the parameters of $T$ by ascending the gradient 
$$\nabla\mathcal{L}_{T}=\frac{1}{N}\nabla\left\{ {\textstyle\sum}_{i=1}^N T\left(x^{\mathrm{real}}_{i}\right)-{\textstyle\sum}_{i=1}^Ng\s\left(T (x^{\mathrm{fake}}_{i})\right)\right\}. $$
\item Update the parameters of $G$ by descending the gradient
$$\nabla\mathcal{L}_{G}=\frac{1}{N}\nabla\left\{ {\textstyle\sum}_{i=1}^Nf\left(\nabla g\s\left(T(x^{\mathrm{fake}}_{i})
\right)\right)\right\}. $$
\end{enumerate}
This method closely parallels the GAN training procedure, with the key distinction that T and G optimize objectives based on different $f$. In practice, our implementation uses a stochastic gradient descent, fully detailed in Algorithm~\ref{app:alg:training process} in Appendix~\ref{app:sec:training}.

\section{Experiments}\label{sec:xp}
In this section, we employ the auxiliary loss approach outlined in Section~\ref{sec: optimizing for PR} to train various models. 

Specifically, we train NFs on 2D synthetic data, MNIST and FashionMNIST, while we train BigGAN on CIFAR-10, CelebA64, ImageNet128, and FFHQ256. All models are tested in terms of IS and FID with 50k samples, and on P\&R with 10k samples using the method of \citet{kynkaanniemi_improved_2019} with $k=3$ for MNIST and FashionMNIST and with $k=5$ for CelebA64, ImageNet128 and FFHQ256. Also, we test every model in terms of Density and Coverage \cite{naeem_reliable_2020} on 10k samples with $k=5$.  In this paper, we present a selection of experimental results. For a comprehensive set of results, including model parameters, optimizers, learning rates, and samples, please refer to Appendix~\ref{app:xp}. We also included in this Appendix a set of experiments run using the naive approach based on Equation~\ref{eq:naiveapproach}, thus showing that the discriminator fails to train as explained in Section~\ref{sec: optimizing for PR}. To ensure reproducibility, our models and code are available on the GitHub repository of the project\footnote{\url{https://github.com/AlexVerine/PrecisionRecallBigGan}}.

We show that: 1) the auxiliary loss approach effectively enables the training of a model to minimize the PR-Divergence, 2) this method is suitable for fine-tuning pre-trained models, 3) the choice of trade-off parameter $\lambda$ significantly influences the results on P\&R, and finally, 4) our method scales well with larger dimensions and datasets.

\paragraph{Normalizing Flows on synthetic data: } NFs are typically trained to minimize $\KL$, in addition to their structural limitation \cite{verine_expressivity_2023, cornish_relaxing_2021}, resulting in good recall but poor precision.
Prior work has employed various techniques \cite{stimper_resampling_2022, kingma_glow_2018} to improve model precision post-training, we use our method to directly train the model on a given trade-off.  We demonstrate our approach by training RealNVP models on a 2D synthetic dataset using $\DPR$ with various $\lambda$ values and using $\KL$ and $\rKL$ for baseline comparison. As we can see in Figure~\ref{fig:realnvp2D}, increasing $\lambda$ leads to an increase in precision in the resulting models. Using our $\DPR$ estimation, we compute the corresponding PR curves (Figure~\ref{fig:PRGaus}).
The $\lambda\kern-0.1em=\kern-0.1em0.1$ model, while best at $\alpha_{0.1}$, performs poorly at $\alpha_1$ and $\alpha_{10}$, clearly demonstrating the impact of maximizing $\alpha_\lambda$.This pattern across models validates the efficacy of our method in minimizing the desired trade-off.

\paragraph{Using $\KL$ vs $\Dchi$ on  MNIST and FashionMNIST:}
We now demonstrate that we can improve the precision by directly minimizing the $\DPR$ with the correct $\lambda$ using pre-trained GLOW models \cite{kingma_glow_2018} on both MNIST \cite{yann_lecun_mnist_2010} and FashionMNIST \cite{xiao_fashion-mnist_2017}. Figure~\ref{fig:mnistsampleshort} and Figure~\ref{fig:mnistsmallPR} present the samples obtained. First, we observe that while $\lambda$ increases, the visual quality improves, but the models focus on a few modes only ($0$, $1$, $7$, $6$, $9$ for MNIST and "Trouser" for FashionMNIST).
Then training with both $\KL$ and $\rKL$ divergences aligns with our expectations: $\KL$ training leads to high recall, while $\rKL$, to higher precision.
However, according to Corollary~\ref{cor:KLrKLPR}, $\rKL$ still favors low $\lambda$ values; consequently, our models trained with $\lambda > 0.1$ demonstrate better precision than standard flow-GAN models. Furthermore, we find that both auxiliary $g$ functions ($f_{\mathrm{KL}}$ and $f_{\chi^2}$) used to train the discriminator $T$ perform well. In practice, training with $f_{\chi^2}$ proves to be more stable, particularly for FashionMNIST, with results reported in Appendix~\ref{app:xp:BW}.
For larger models, we use exclusively $g=f_{\chi^2}$.

\begin{figure*}[t]
\begin{minipage}[b]{0.58\linewidth}

    \centering

    \subfigure[ $\lambda = 0.1$]{\includegraphics[width=0.31\textwidth]{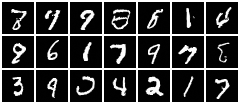} \label{fig:MNISTl01c}}
    \subfigure[ $\lambda = 1$]{\includegraphics[width=0.31\textwidth]{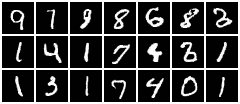}\label{fig:MNISTl1c}}
    \subfigure[ $\lambda = 10$]{\includegraphics[width=0.31\textwidth]{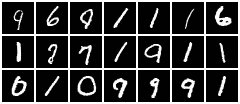}\label{fig:MNISTl10c}}
        \par \vspace*{-2mm}
    \subfigure[ $\lambda = 0.1$]{\includegraphics[width=0.31\textwidth]{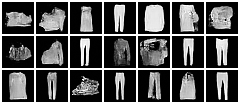} \label{fig:FasMNISTl01c}}
    \subfigure[ $\lambda = 1$]{\includegraphics[width=0.31\textwidth]{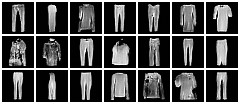}\label{fig:FasMNISTl1c}}
    \subfigure[ $\lambda = 10$]{\includegraphics[width=0.31\textwidth]{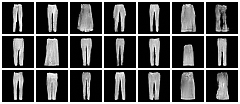}\label{fig:FasMNISTl10c}}
     \vspace*{-2mm}
    \caption{Samples from NFs - GLOW trained on MNIST (\ref{fig:MNISTl01c} to \ref{fig:MNISTl10c}) and FashionMNIST (\ref{fig:FasMNISTl01c} to \ref{fig:FasMNISTl10c}). Recall decreases as precision increases for $\lambda$ between $0.1$ and $10$. }\label{fig:mnistsampleshort}
\end{minipage}
\hspace{0.02\linewidth}
\hfill
\begin{minipage}[b]{0.4\linewidth}
\centering
\includegraphics[width=1\textwidth]{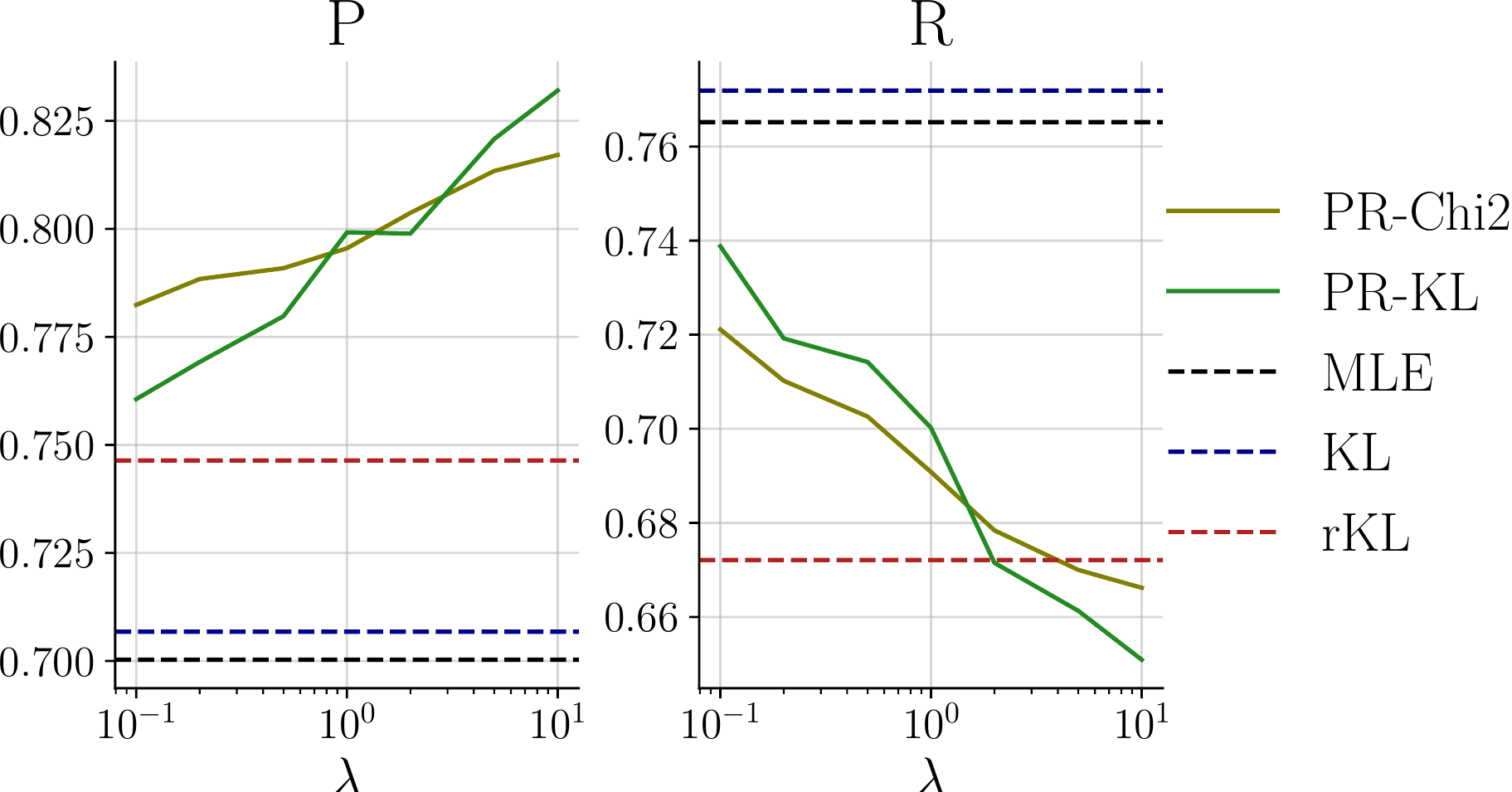}

 \vspace*{-3mm}
 \caption{Precision and Recall as a function on $\lambda$ for GLOW trained on MNIST with $T$ trained with $f_{\mathrm{KL}}$ or $f_{\chi^2}$, and for models trained  $\KL$, $\rKL$ and MLE.}
  \label{fig:mnistsmallPR}
 \end{minipage}
\end{figure*}

\paragraph{Training BigGAN on CIFAR-10 and CelebA64} Now we demonstrate that our method can also be used to train large generative models. Our choice to adopt the BigGAN architecture \cite{brock_large_2019} was informed by several factors: its competitive performance close to state-of-the-art models; its  versatility,  
\begin{figure}[H]
\vspace{-4mm}
\begin{minipage}[b]{0.71\linewidth}
     permitting diverse experimental explorations; and the fact that it is publicly accessible, ensuring experiment reproducibility.  We train BigGAN using both the baseline method (i.e., hinge loss) and our proposed method on CIFAR-10 \cite{alex_krizhevsky_learning_2009} and CelebA64 \cite{liu_deep_2015}. A notable observation when training with different precision-recall trade-offs is the early elimination of modes from the target distribution at higher values of $\lambda$. As illustrated in Figure~\ref{fig:trainingcifar}, models with low values of $\lambda$ converge to achieve maximum recall, while those with $\lambda > 1$ rapidly saturate to a lower recall value. A similar behavior can be observed for models trained on CelebA, as shown in Figure~\ref{app:fig:celeba training}.
In Table~\ref{tab:biggan1}, we present the quantitative metrics (Precision, Recall, and FID) for the baseline BigGAN, the BigGAN models trained with varying trade-offs and the current state-of-the-art models: EDM-G++ \cite{kim_refining_2023} for CIFAR-10 and ADM-IP \cite{ning_input_2023} for CelebA64. Employing our proposed method enables us to adjust the trade-off, allowing us to train models that closely approach the state-of-the-art recall and, for high $\lambda$, even outperform state-of-the-art models in terms of precision. 
\end{minipage}
\hspace{0.03\linewidth}
\hfill
\begin{minipage}[b]{0.25\linewidth}
    \centering
    \includegraphics[width=0.9\linewidth]{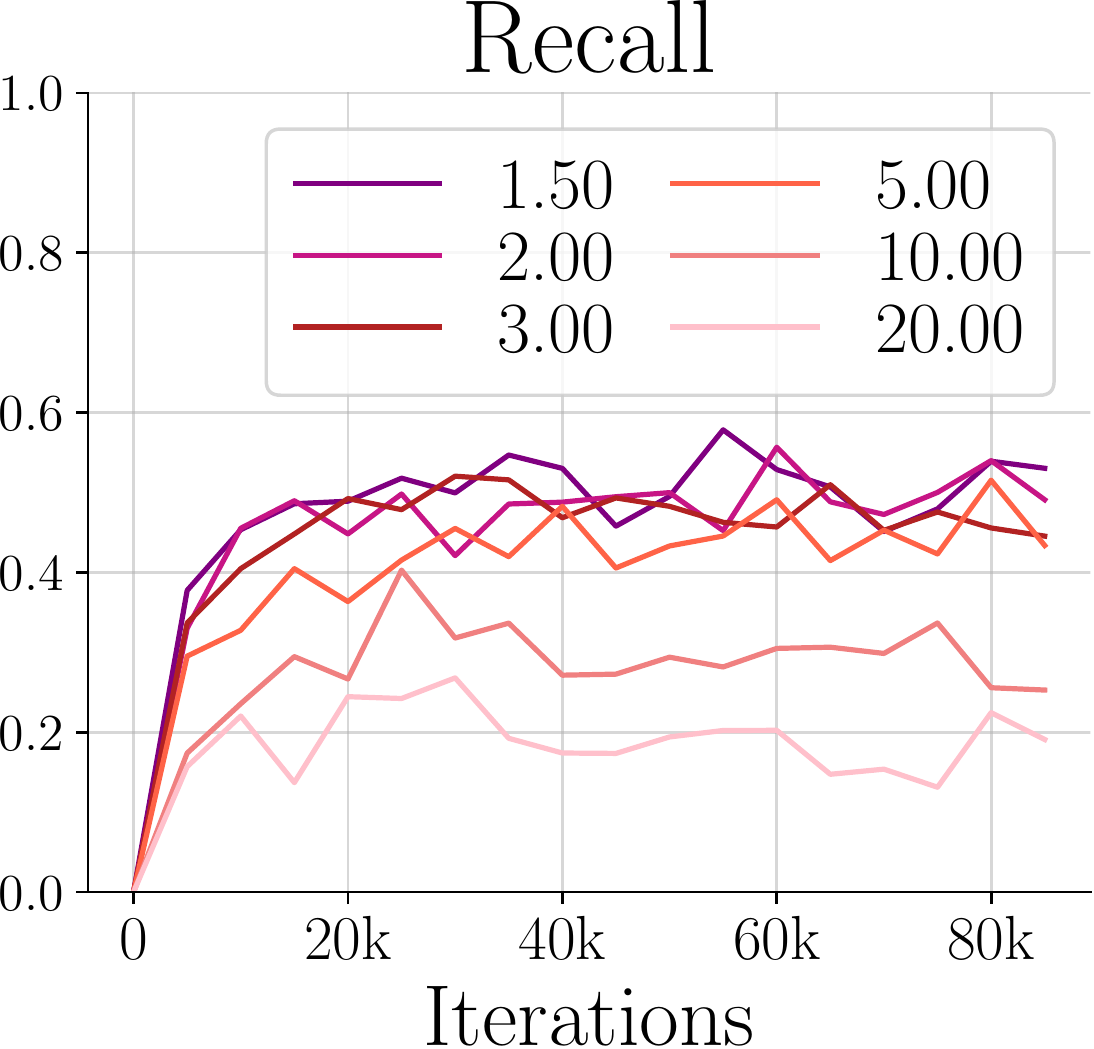}
    \vspace*{-3mm}
    \caption{Recall during training of BigGAN optimizing different trade-off on CIFAR-10. Models for high $\lambda$ saturates in an early stage at decreasing values of recall. }
    \label{fig:trainingcifar}
\end{minipage}
\end{figure}
\vspace{-3mm}

\begin{table}[H]
\caption{BigGAN trained with the vanilla approach \cite{brock_large_2019} and with a variety of $\lambda$ using our approach on CIFAR-10 and CelebA64.  We compare our approach with hard truncation on the baseline model. FID ($\downarrow$),  Precision ($\uparrow$), Recall ($\uparrow$), Density ($\uparrow$) and Coverage ($\uparrow$) are reported. In \textbf{bold}, our best model is highlighted and the state-of-the-art FID is marked with an exponent $^*$.}
\vspace*{-1mm}
\label{tab:biggan1}
\begin{footnotesize}
\begin{sc}
\begin{center}
\setlength{\tabcolsep}{4.8pt}
\begin{tabular*}{\textwidth}{lcccccccccc}
\bottomrule
Model &  \multicolumn{5}{c}{CIFAR-10 $32 \times 32$} & \multicolumn{5}{c}{CelebA $64\times64$} \\ 
  & FID & P & R & D & C & FID & P & R & D & C  \\ \hline
Baseline BigGAN  & $13.37$ & $86.51$ & $65.66$ & $0.76$ & $0.81$ & $9.16$ & $78.41$ & $\mathbf{51.42}$ & $0.89$ & $0.48$ \\ \hline
Hard $\psi=2.0$ & $13.95$ & $86.82$ & $63.58$ & $0.77$ & $0.79$ & $10.60$ & $80.81$ & $48.21$ & $0.96$ & $0.50$ \\
Hard $\psi=1.0$ & $17.23$ & $88.03$ & $53.63$ & $0.83$ & $0.75$ & $17.97$ & $\mathbf{84.30}$ & $37.46$ & $1.11$ & $0.49$ \\
Hard $\psi=0.5$ & $20.11$ & $87.87$ & $44.98$ & $0.83$ & $0.70$ & $25.70$ & $83.70$ & $28.81$ & $1.08$ & $0.42$ \\ \hline
$\lambda=0.05$ & $13.29$ & $81.10$ & $70.63$ & $0.61$ & $0.80$ & - & - & - & - & - \\
$\lambda=0.1$ & $\mathbf{11.62}$ & $81.78$ & $\mathbf{74.58}$ & $0.66$ & $\mathbf{0.83}$ & - & - & - & - & - \\
$\lambda=0.2$ & $13.36$ & $84.85$ & $65.13$ & $0.74$ & $0.82$ & $8.79$ & $83.37$ & $44.07$ & $1.09$ & $\mathbf{0.54}$ \\
$\lambda=0.5$ & $14.50$ & $83.27$ & $68.23$ & $0.70$ & $0.81$ & $\mathbf{6.03}$ & $77.60$ & $55.98$ & $0.88$ & $0.50$ \\
$\lambda=1.0$ & $14.03$ & $83.04$ & $69.35$ & $0.68$ & $0.79$ & $13.07$ & $81.70$ & $36.85$ & $1.00$ & $0.47$ \\
$\lambda=2.0$ & $16.94$ & $84.93$ & $59.79$ & $0.75$ & $0.78$ & $14.23$ & $82.98$ & $32.87$ & $1.16$ & $0.49$ \\
$\lambda=5.0$ & $32.54$ & $83.39$ & $56.94$ & $0.68$ & $0.73$ & $22.45$ & $83.96$ & $25.81$ & $\mathbf{1.21}$ & $0.43$ \\
$\lambda=10.0$ & $39.69$ & $84.11$ & $39.29$ & $0.75$ & $0.67$ & - & - & - & - & - \\
$\lambda=20.0$ & $67.03$ & $\mathbf{90.03}$ & $21.81$ & $\mathbf{0.98}$ & $0.56$ & - & - & - & - & - \\\hline\addlinespace[0.2em]
DenseFlow \cite{grcic_densely_2021}  & $-$ & $88.90$ & $60.81$ & $0.86$ & $0.71$& $-$ & $85.83$ & $38.22$ & $1.17$ & $0.82$   \\
ADM-IP \cite{ning_input_2023}& $3.25$ & $80.67$ & $83.65$ & $0.65$ & $0.87$ & $1.53^*$& $23.42$ &$64.48$ & $0.09$ & $0.24$ \\
EDM G++ \cite{kim_refining_2023} & $1.77^*$& $78.48$ & $85.83$ & $0.60$ & $0.87$ & - & - & - & - & - \\
StyleGAN-XL \cite{sauer_stylegan-xl_2022} & $1.85$ & $85.11$ & $70.04$ & $0.75$ & $0.85$ & - & - & - & - & - \\\hline
\end{tabular*}
\end{center}
\end{sc}
\end{footnotesize}
\end{table}
In every experiment, we compare our approach with traditional post-training techniques. In Table~\ref{tab:biggan1}, we give the results for the hard truncation (also called \emph{the truncation trick} in \cite{brock_large_2019}) of the latent distribution $Q$ for different $\psi$. We observe that this method enables to improve solely the precision by trading off the recall; however, note that the truncation can be use in addition to our approach. 

\paragraph{Fine-tuning BigGAN on Imagenet128 and FFHQ}
Finally, we apply our method to pre-trained BigGAN models. To accomplish this, we implement a straightforward technique: initially, we train the discriminator for a brief period, allowing the model to transition from the vanilla training objective to the $\Ddu_g$. This approach enables us to train BigGAN on large datasets such as ImageNet128 \cite{russakovsky_imagenet_2015} and datasets with high dimensions such as FFHQ256 \cite{karras_style-based_2019}. We compare our method with both hard truncation and soft truncation (also denoted temperature in \cite{kingma_glow_2018}). Both methods can be used in addition to our approach. The metrics presented in Table~\ref{tab:biggan2} demonstrate that (1) we enhance a given model's precision (by $+2.83\%$) or recall (by $+1.17\%$) on ImageNet, thereby achieving state-of-the-art precision, and (2) our method compromises less on the trade-off than truncation. For instance, in FFHQ, for a similar precision improvement ($\approx +15.5\%$), recall is decreased by more than $5\%$ for truncation methods and only by $1.65\%$ with our approach.

\begin{table}[H]
\caption{BigGAN fine-tune with the vanilla approach \cite{brock_large_2019} and with a variety of $\lambda$ using our approach on ImageNet128 and FFHQ256.  We compare our approach with hard truncation on the baseline model. FID ($\downarrow$),  Precision ($\uparrow$), Recall ($\uparrow$), Density ($\uparrow$) and Coverage ($\uparrow$) are reported. In \textbf{bold}, our best model is highlighted and the state-of-the-art FID is marked with an exponent $^*$.}
\vspace*{-1mm}
\label{tab:biggan2}
\begin{footnotesize}
\begin{sc}
\begin{center}
\setlength{\tabcolsep}{4.9pt}
\begin{tabular*}{\textwidth}{lcccccccccc}
\bottomrule
Model &  \multicolumn{5}{c}{ImageNet $128\times128$}  & \multicolumn{5}{c}{FFHQ $256\times256$}\\ 
  & FID & P & R & D & C & FID & P & R & D & C  \\ \hline
Baseline BigGAN  & $9.83$ & $28.04$ & $41.21$ & $0.14$ & $0.17$ & $41.41$ & $65.57$ & $\mathbf{10.17}$ & $0.52$ & $0.47$ \\\hline
Soft $\psi=0.7$ & $11.39$ & $23.04$ & $31.13$ & $0.11$ & $0.15$ & $56.43$ & $76.59$ & $4.87$ & $0.70$ & $0.41$ \\
Soft $\psi=0.5$ & $15.49$ & $20.20$ & $19.83$ & $0.10$ & $0.14$ & $82.05$ & $84.48$ & $1.58$ & $0.89$ & $0.32$ \\ \hline
Hard $\psi=2.0$ & $\mathbf{9.69}$ & $25.83$ & $39.89$ & $0.13$ & $\mathbf{0.18}$ & $43.32$ & $68.84$ & $8.66$ & $0.58$ & $0.47$ \\
Hard $\psi=1.0$ & $12.12$ & $21.86$ & $35.42$ & $0.11$ & $0.15$ & $56.19$ & $76.44$ & $4.76$ & $0.75$ & $0.44$ \\
Hard $\psi=0.5$ & $15.21$ & $21.13$ & $29.55$ & $0.10$ & $0.13$ & $71.32$ & $80.99$ & $4.84$ & $0.84$ & $0.36$ \\\hline
$\lambda=0.2$ & $9.92$ & $26.69$ & $42.04$ & $0.13$ & $0.17$ & $35.66$ & $78.70$ & $9.45$ & $0.88$ & $0.60$ \\
$\lambda=0.5$ & $10.82$ & $26.83$ & $\mathbf{42.38}$ & $0.13$ & $0.16$ & $\mathbf{35.24}$ & $78.41$ & $9.66$ & $0.89$ & $0.60$ \\
$\lambda=1.0$ & $20.42$ & $29.72$ & $28.21$ & $\mathbf{0.15}$ & $0.15$ & $35.91$ & $78.95$ & $8.32$ & $0.90$ & $0.57$ \\
$\lambda=2.0$ & $20.21$ & $30.27$ & $30.49$ & $0.14$ & $0.14$ & $36.33$ & $81.10$ & $8.69$ & $1.05$ & $\textbf{0.64}$ \\
$\lambda=5.0$ & $20.76$ & $\mathbf{30.87}$ & $28.38$ & $\mathbf{0.15}$ & $0.15$ & $38.16$ & $\mathbf{84.31}$ & $8.52$ & $\mathbf{1.15}$ & $0.63$ \\\hline\addlinespace[0.2em]
ADM  \cite{ho_denoising_2020} & $2.97$ & $26.63$ & $68.54$ &$0.14$ & $0.16$ & - & - & - & - & -\\
StyleGAN-xl \cite{sauer_stylegan-xl_2022}  & $1.81^*$ & $11.35$ & $68.04$ & $0.04$ & $0.09$ & $2.19^*$ & $79.91$ & $38.79$ & $0.86$ & $0.73$ \\\hline
\end{tabular*}
\end{center}
\end{sc}
\end{footnotesize}
\vspace{-4mm}
\end{table}

\section{Conclusion}

In this paper, we present a novel method for training generative models using a new PR-divergence, $\DPR$. Our approach offers a unique advantage over existing methods as it allows for explicit control of the precision-recall trade-off in generative models. By varying the trade-off parameter $\lambda$, one can train a variety of models ranging from mode seeking (high precision) to mode covering (high recall), as well as more balanced models that may be more suitable for various applications. Through extensive experiments, we demonstrate the validity of our method and show that it scales well with larger dimensions and datasets. Our approach also provides insights into the implicit P\&R trade-offs made by models trained with other \fdivs. 
By introducing the $\DPR$ divergence and providing a systematic approach for training generative models based on user-specified trade-offs, we contribute to the development of more customizable generative models. Our method currently applies to GANs and NFs only, it is still unclear if a similar approach can be applied to trained diffusion models: a promising work  \cite{kim_refining_2023} uses a discriminator to refine diffusion models, and could be used to estimate the $\DPR$.

\section*{Acknowledgment}
We are grateful for the grant of access to computing resources  at the IDRIS Jean Zay cluster under allocations No. AD011011296 and No. AD011014053 made by GENCI.

\newpage
\bibliographystyle{plainnat}
\bibliography{references}

\newpage
\appendix
\renewcommand\thefigure{\thesection.\arabic{figure}}

\section{Supplementary background}
\subsection{\fdivs}\label{app:subsec:fdiv}
$f$-divergence between two probability distributions $P$ and $Q$ over a common support $\mathcal{X}$ is defined as:
\begin{equation} \label{app:eq:fdiv}
    \Df(P\|Q) = \int_{\mathcal{X}} q(\vx) f\left(\frac{p(\vx)}{q(\vx)}\right) \, \mathrm{d}\vx,
\end{equation}
where $p(\vx)$ and $q(\vx)$ denote the densities of $P$ and $Q$, respectively, and $f$ is a convex function such that $f(1) = 0$.
Any $\Df$  admits a dual variational form \cite{nguyen_surrogate_2009}, with \cal T denoting the set of all measurable functions $\cal X \to \mathbb R$:
\begin{equation}
\begin{aligned}
\label{eq:dual}
	\Df(P \Vert \wh P)
    =\sup_{\T \in \cal T} \left( \E_{P}\left[T(\vx) \right] - \E_{\wh P}\left[ f^*(T(\vx))\right] \right),
\end{aligned}
\end{equation}
The properties of \fdiv are as follows:
\begin{itemize}
    \item \textbf{Non-Negativity:} For any two probability distributions $P$ and $\whP$ on $\Xset$, we have $\Df(P\|\whP) \geq 0$. The equality holds if and only if $P = \whP$.
    
    \item \textbf{Convexity in $f$:} If $f(u)$ is a convex function, then $\Df(P\|\whP)$ is convex in the input distributions $P$ and $\whP$.

    \item \textbf{Non-Symmetry:} Generally, \fdiv is not symmetric, i.e., $\Df(P\|\whP) \neq \Df(\whP\|P)$.

    \item \textbf{Linearity:} \fdiv is linear in $f$, i.e., for any two convex functions $f_1(u)$ and $f_2(u)$, and for any two real numbers $a$ and $b$, the divergence $\D_{af_1 + bf_2}(P\|\whP) = a\D_{f_1}(P\|\whP) + b\D_{f_2}(P\|\whP)$.
\end{itemize}
Specific choices of the function $f$ in the \fdiv definition can yield various well-known divergence measures. We will review some of them of the principal properties. For every divergence, we report here the generator function $f$, its convex conjugate $f\s$, the domain of the convex conjugate $\dom(f\s)$, the typical activation function used to ensure that $T(\vx)\in\dom(f\s)$ and finally the optimal discriminator $T\s$.
\paragraph{Kullback Leibler:}
\begin{itemize}
    \item Notation: $\KL$,
    \item Generator Function: $f(u)=u\log(u)$,
    \item Convex Conjugate domain: $\dom(f\s)=\reals$,
    \item Convex Conjugate: $f\s(t)=\exp(t-1)$,
    \item Activation: $v\mapsto v$,
    \item Optimal discriminator: $T\s(\vx)= 1+\log\left(\frac{p(\vx)}{\whp(\vx)}\right)$.
\end{itemize}

\paragraph{Reverse Kullback Leibler:}
\begin{itemize}
    \item Notation: $\rKL$,
    \item Generator Function: $f(u)=-\log(u)$,
    \item Convex Conjugate domain: $\dom(f\s)=\reals^-$,
    \item Convex Conjugate: $f\s(t)=-1 - \log -t$,
    \item Activation: $v\mapsto -\exp v $,
    \item Optimal discriminator: $T\s(\vx)= -\frac{\whp(\vx)}{p(\vx)}$.
\end{itemize}
\paragraph{$\chi^2$ Pearson:}
\begin{itemize}
    \item Notation: $\Dchi$,
    \item Generator Function: $f(u)=(u-1)^2$,
    \item Convex Conjugate domain: $\dom(f\s)=\reals$,
    \item Convex Conjugate: $f\s(t)=t^2/4 + t$,
    \item Activation: $v\mapsto v $,
    \item Optimal discriminator: $T\s(\vx)= 2\left(\frac{p(\vx)}{\whp(\vx)}-1\right)$.
\end{itemize}
\paragraph{GAN:}
\begin{itemize}
    \item Notation: $\GAN$,
    \item Generator Function: $f(u)=u\log(u)- (u+1)\log(u+1)$,
    \item Convex Conjugate domain: $\dom(f\s)=\reals^-$,
    \item Convex Conjugate: $f\s(t)=-\log\left(1-\exp(t)\right)$,
    \item Activation: $v\mapsto -\exp -\log\left(1-\exp(v)\right)$,
    \item Optimal discriminator: $T\s(\vx)= \log\frac{p(\vx)}{p(\vx)+\whp(\vx)}$.
\end{itemize}
\paragraph{Total Variation:}
\begin{itemize}
    \item Notation: $\TV$,
    \item Generator Function: $f(u)=\vert u -1\vert$,
    \item Convex Conjugate domain: $\dom(f\s)=\left[-\frac{1}{2},\frac{1}{2}\right]$,
    \item Convex Conjugate: $f\s(t)=t$,
    \item Activation: $v\mapsto \tanh(v) $,
    \item Optimal discriminator: $T\s(\vx)= \frac{1}{2}\sign\left(\frac{p(\vx)}{\whp(\vx)}-1\right)$.
\end{itemize}
\paragraph{PR-Divergence:}
\begin{itemize}
    \item Notation: $\DPR$ for $\lambda>0$,
    \item Generator Function: $f(u)=\max(\lambda u ,1)-\max(\lambda, 1)$,
    \item Convex Conjugate domain: $\dom(f\s)=\left[0,\lambda\right]$,
    \item Convex Conjugate: $f_\lambda\s\left(t\right) = t/\lambda$ for $\lambda\leq 1$ and $f_\lambda\s\left(t\right) = t/\lambda+\lambda-1$ otherwise, 
    \item Activation: $v\mapsto \lambda \sigma(v)$, where $\sigma$ is the sigmoid function, 
    \item Optimal discriminator: $T\s(\vx)= \lambda\sign\left(\frac{p(\vx)}{\whp(\vx)}-1\right)$.
\end{itemize}

\subsection{Bregman Divernce}\label{app:subsec:bregman}
The Bregman Divergence under a strictly convex function \(f : \mathbb{R}^n \rightarrow \mathbb{R}\) with a continuously differentiable interior, between two points \(\boldsymbol{x}\) and \(\boldsymbol{y}\) in the interior of the domain of \(f\), is denoted by \(\mathrm{Breg}_f\) and is defined as:

\[
\mathrm{Breg}_f(\boldsymbol{x}, \boldsymbol{y}) = f(\boldsymbol{x}) - f(\boldsymbol{y}) - \langle \nabla f(\boldsymbol{y}), \boldsymbol{x}-\boldsymbol{y} \rangle
\]

where \(\nabla f(\boldsymbol{y})\) is the gradient of \(f\) at \(\boldsymbol{y}\), and \(\langle . , . \rangle\) is the inner product in \(\mathbb{R}^n\).
It follows some properties as a distance metrics:
\begin{itemize}
  \item \textbf{Non-Negativity:} For any $\boldsymbol{x}, \boldsymbol{y}$ in the interior of the domain of $f$, we have $\mathrm{Breg}_f(\boldsymbol{x}, \boldsymbol{y}) \geq 0$. The equality holds if and only if $\boldsymbol{x} = \boldsymbol{y}$.

  \item \textbf{Convexity:} The Bregman Divergence $\mathrm{Breg}_f(\boldsymbol{x}, \boldsymbol{y})$ is convex in its first argument. That is, for any $\boldsymbol{x}_1, \boldsymbol{x}_2, \boldsymbol{y}$ in the interior of the domain of $f$ and any $t \in [0, 1]$, we have:
   \[
   \mathrm{Breg}_f(t\boldsymbol{x}_1 + (1-t)\boldsymbol{x}_2, \boldsymbol{y}) \leq t\mathrm{Breg}_f(\boldsymbol{x}_1, \boldsymbol{y}) + (1-t)\mathrm{Breg}_f(\boldsymbol{x}_2, \boldsymbol{y})
   \]

  \item \textbf{Non-Symmetry:} Unlike some other distances or divergences, the Bregman Divergence is not symmetric, meaning that in general, $\mathrm{Breg}_f(\boldsymbol{x}, \boldsymbol{y}) \neq \mathrm{Breg}_f(\boldsymbol{y}, \boldsymbol{x})$.

  \item \textbf{Additivity:} If $f(\boldsymbol{x}) = f_1(x_1) + \dots + f_d(x_d)$ where each $f_i$ is a convex function, then the Bregman Divergence decomposes into a sum of Bregman Divergences, i.e., $\mathrm{Breg}_f(\boldsymbol{x}, \boldsymbol{y}) = \mathrm{Breg}_{f_1}(x_1, y_1) + \dots + \mathrm{Breg}_{f_d}(x_d, y_d)$.

  \item \textbf{Taylor Approximation:} Bregman Divergence is essentially the error of the first-order Taylor approximation of $f$ around the point $\boldsymbol{y}$ at the point $\boldsymbol{x}$.

  \item \textbf{Connection with Dual Functions:} If $f$ is convex, then it has a convex conjugate (or dual function) $f^*$. The Bregman Divergence $\mathrm{Breg}_f(\boldsymbol{x}, \boldsymbol{y})$ is related to the Bregman Divergence $\mathrm{Breg}_{f^*}(\nabla f(\boldsymbol{y}), \nabla f(\boldsymbol{x}))$ between the gradients of $f$ at $\boldsymbol{x}$ and $\boldsymbol{y}$, where the gradient map $\nabla f$ serves as a Legendre transformation between the primal and dual spaces.
\end{itemize}

\section{Proof and supplementary for Section~\ref{sec:PRdiv} }

\subsection{Proof for Theorem~\ref{thm:DPR}}
\label{app:thm:DPR}
	We have to prove that $\alpha(\lambda)$ can be written as a function of an \fdiv for any $\lambda\in\reals^+$.
	First we can develop the expression of $\alpha(\lambda)$:
\begin{align} 
	\alpha(\lambda) &= \int_{\setX} \min \left(\lambda p(\vx), \whp(\vx) \right)\dx \\
			&= \int_{\setX} \whp(\vx) \min\left(\lambda \frac{p(\vx)}{\whp(\vx)}, 1 \right)\dx
\end{align}
For this integral to be considered as an \fdiv, we need $f$ to be first convex lower semi-continuous and then to satisfy $f(1)=0$. However, for every $a,b\in\reals$, the $\min$ satisfies $\min(a,b) = a+b - \max(a,b)$. Therefore, 
\begin{align} 
	\alpha(\lambda) &= \int_{\setX} \whp(\vx) \left[\lambda \frac{p(\vx)}{\whp(\vx)} + 1 - \max\left(\lambda \frac{p(\vx)}{\whp(\vx)}, 1 \right)\right]\dx \\
			&=  \lambda \int_{\setX}p(\vx)\dx + 1 - \int_{\setX}\max\left(\lambda \frac{p(\vx)}{\whp(\vx)}, 1 \right)\dx\\
			&= \lambda +1 - \int_{\setX}\whp(\vx)\max\left(\lambda \frac{p(\vx)}{\whp(\vx)}, 1 \right)\dx
\end{align}
Thus, we can take $f(u) = \max(\lambda u , 1) - \max(\lambda, 1)$ such that $f(1)=0$. The precision becomes: 
\begin{align} 
	\alpha(\lambda) &= \lambda +1 - \int_{\setX}\whp(\vx)f(\left(\frac{p(\vx)}{\whp(\vx)} \right) - \max(\lambda, 1)\int_{\setX}\whp(\vx)\dx \\
			&= \min(\lambda, 1)- \int_{\setX}\whp(\vx)f(\left(\frac{p(\vx)}{\whp(\vx)} \right)\dx = \min(\lambda, 1) - \DPR(P, \whP).
\end{align}
Consequently, $\alpha(\lambda)$ can be written as a function of an \fdiv $\DPR$ with $f(u) = \max\left(\lambda u, 1\right) - \max\left(\lambda, 1\right)$.

Now we prove the converse. Suppose there exists a strictly decreasing \textbf{linear}\footnote{We omitted the critical constraint of $h$ being linear in the original statement of the theorem in our submission by mistake. We apologize for this oversight. We stress again that the first part of the Theorem, which is the most important part, remains completely unaffected. }  function $h: [0,1]\to \reals^+$ and an \fdiv $\cal D_{f}$ such that $h(\alpha_\lambda(P\Vert \wh P)) = \cal D_{f}(P\Vert \whP)$ for all $P,\wh P\in \setP(\setX)$. 

For $P = \whP$, we get from the definition of $\alpha_\lambda$ that $\alpha_\lambda(P\Vert P) = \min(\lambda, 1)$. Hence, 
\begin{align*}
    0 = \Df(P\Vert P) = h(\alpha_\lambda(P\Vert P)) = h(\min(\lambda, 1)).
\end{align*}
Combining the above with the fact that $h$ is a strictly decreasing linear function, we see that for any fixed $\lambda$, $h$ must be of the form, $h(u) = c_\lambda(\min(\lambda, 1) - u)$, where $c_\lambda>0$ is a constant. Now,
\begin{align*}
    \cal D_{f}(P\Vert \whP)
    = h(\alpha_\lambda(P\Vert \wh P))
    = c_\lambda(\min(\lambda, 1) - \alpha_\lambda(P\Vert \wh P))
    = c_\lambda \DPR(P\Vert \whP),
\end{align*}
where the last equality follows from the first part of the theorem, which shows that $\alpha_\lambda(P\Vert \wh P) = \min(\lambda, 1) - \DPR(P\Vert \whP)$. Rewriting the above inequality, we get the following. 
\begin{align*}
    \DPR(P\Vert \whP) = \frac{1}{c_\lambda} \cal D_{f}(P\Vert \whP)
    = \cal D_{\frac{1}{c_\lambda}f}(P\Vert \whP).
\end{align*}
By the uniqueness theorem of $f$-divergence 
$f(u) = \frac{c_1}{c_\lambda}f_\lambda(u) + c_2(u-1)$ for some constants $c_1, c_2\in \mathbb{R}$.

\subsection{Proof of Proposition~\ref{prop:DPR}}
\label{app:prop:DPR}
 If the generator function $f$ of the Precision-Recall Divergence is 
$f(u) = \max(\lambda u , 1) - \max(\lambda, 1)$ then its Fenchel conjugate function is:  
\begin{align} 
		f\s(t)  = \sup_{u\in \dom(f)}\left\{tu - f(u)\right\} = \max(\lambda, 1)  + \sup_{u\in\reals^+}\left\{tu - \max\left(\lambda u, 1\right)\right\}
\end{align}
If $t>\lambda$ or $\lambda<0$, then the $\sup_{u\in\reals^+}\left\{tu - \max\left(\lambda u, 1\right)\right\} =\infty$ for respectively $u\rightarrow \infty$ and  $u\rightarrow -\infty$. The domain of $f\s$ is thus restricted to $\left[0, \lambda\right]$. 
Thus for $0\leq t\leq \lambda$, the supremum is obtained for $u = 1/\lambda$ since $0$ is in the sub-differential of the function in $1/\lambda$ as Figure~\ref{fig:fenchelpr}. 
\begin{figure}[H]
	\subfigure[Function $f(u)$ for different values of $\lambda$.]{\includegraphics[width = 0.5\linewidth]{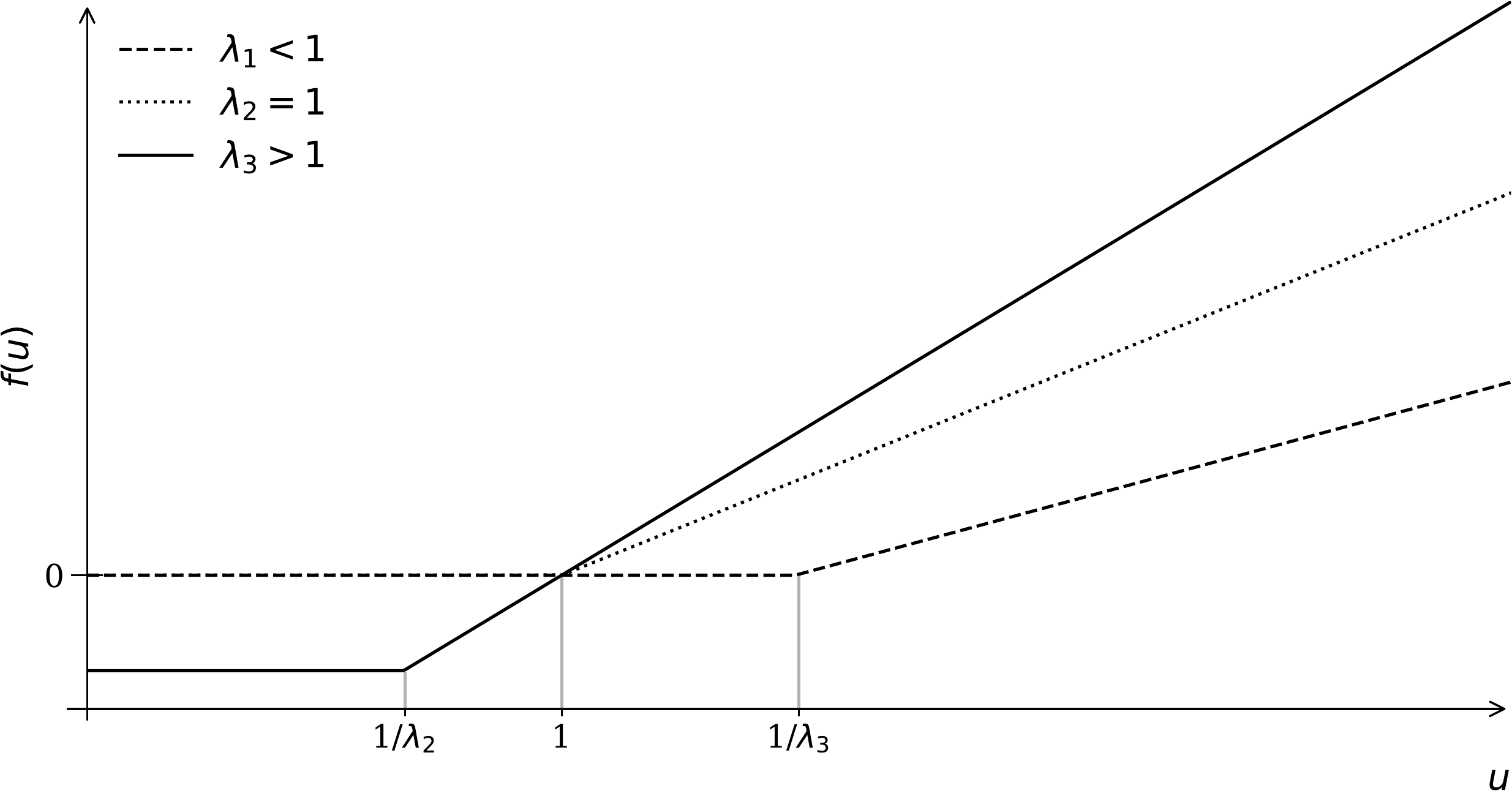} \label{fig:fpr}}
	\subfigure[Function $u\mapsto ut - \max( \lambda u, 1)$ for values of $t$ between $0$ and $\lambda$.]{\includegraphics[width=0.5\linewidth]{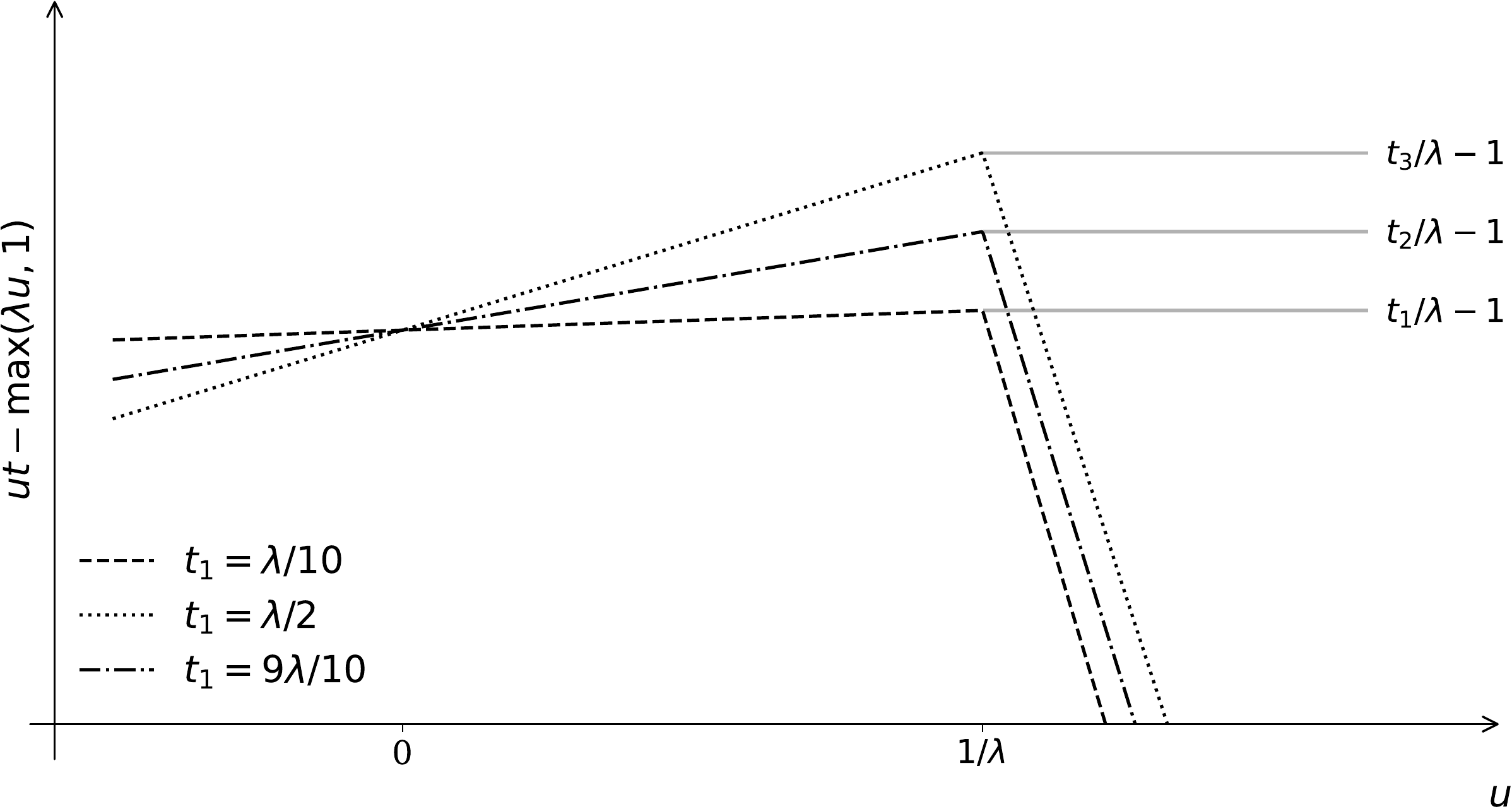}\label{fig:fenchelpr}}
 \caption{Graphical illustration of $f_\lambda$ and $f\s_\lambda$. Both are piece-wise linear}
\end{figure}
Consequently the Fenchel conjugate of $f$ is:
\begin{align} 
	\forall t\in\left[0,  \lambda\right], \quad		f\s(t)  = \max(\lambda, 1)  + t\lambda - 1 = \begin{cases} t/\lambda & \mbox{if } \lambda \leq 1, \\ t/\lambda -1+\lambda & \mbox{otherwise. } \end{cases}
\end{align}
Finally, the optimal discriminator $\Topt$ by taking the derivative of $f$ in $\frac{p(\vx)}{\whp(\vx)}$, we get:
\begin{align}
	\Topt(\vx) = \nabla f \left(\frac{p(\vx)}{\whp(\vx)}\right) = \begin{cases} \lambda & \si \frac{p(\vx)}{\whp(\vx)}\leq 1/\lambda , \\ 0 & \mbox{otherwise}. \end{cases}
\end{align}

Then we can compute the compute the reverse $\DPR$:
 \begin{align}
     \DPR(\whP\Vert P)& = \int_{\setX}p(\vx) f_{\lambda} \left(\frac{\whp(\vx)}{p(\vx)}\right)\d \vx \\
     & = \int_{\setX}\max(\lambda \whp(\vx), p(\vx)) - p(\vx)\max(\lambda, 1)\, \d \vx \\
     & = \lambda\left(  \int_{\setX}\max( \whp(\vx), p(\vx)/\lambda )\d \vx - \max(1, 1/\lambda )\right)  \\
    & = \lambda\int_{\setX}\whp(\vx)\max( 1, \frac{p(\vx)}{\whp(\vx)}/\lambda ) -\whp(\vx) \max(1, 1/\lambda )\, \d \vx \\
    & = \lambda  \int_{\setX}\whp(\vx) f_{1/\lambda} \left(\frac{p(\vx)}{\whp(\vx)}\right)\d \vx \\
    & = \lambda  \fullDPR{\frac{1}{\lambda}}(P\Vert \whP).
 \end{align}
 With this results, we can show that : 
 \begin{align}
     \TV(P\Vert \whP) &= \int_{\setX} \left\vert p(\vx) - \whp(\vx)\right\vert \d\vx \\
      &=\int_{\setX} \max( p(\vx) - \whp(\vx), 0) + \max( \whp(\vx) - p(\vx), 0)  \d\vx
 \end{align}
 Then since $\fullDPR{1}(P\Vert \whP) = \int_{\setX}\max( \whp(\vx), p(\vx)) - p(\vx)\d \vx = \int_{\setX}\max( \whp(\vx)-p(\vx), 0)\d \vx $ and  $\fullDPR{1}(P\Vert \whP) = \fullDPR{1}(\whP\Vert P)$,  we have:
  \begin{align}
     \TV(P\Vert \whP) &= \fullDPR{1}(P\Vert \whP) + \fullDPR{1}(\whP\Vert P) \\
      &=2\fullDPR{1}(P\Vert \whP).
 \end{align}
\subsection{Proof of Theorem~\ref{thm:breg}}
\label{app:thm:breg}

From now on, assume the support of $P$ and $\whP$ coincide. For any $T:\mathcal{X}\rightarrow\mathbb{R}$, 
\begin{align*}
\Ddual& =\mathbb{E}_{x\sim P}\left[d\left(x\right)\right]-\mathbb{E}_{x\sim Q}\left[f\s\left(d\left(x\right)\right)\right]\\
 & =\mathbb{E}_{x\sim Q}\left[\frac{p(\vx)}{\whp(\vx)}d\left(x\right)-f\s\left(d\left(x\right)\right)\right]
\end{align*}

Let $\Topt\in\arg\sup \D_{f,T}^{\mathrm{dual}}(P\Vert\whP)$. For any $T:\mathcal{X}\rightarrow\mathbb{R}$

\begin{align*}
\Df(P\Vert\whP)-D_{f,T}^{\mathrm{dual}}(P\Vert\whP) & =\D_{f,\Topt}^{\mathrm{dual}}(P\Vert\whP)-\D_{f,T}^{\mathrm{dual}}(P\Vert\whP)\\
 & =\mathbb{E}_{\whP}\left[\frac{p(\vx)}{\whp(\vx)}\left(\Topt(\vx)-T(\vx)\right)-f\s\left(\Topt(\vx)\right)+f\s\left(T(\vx)\right)\right]
\end{align*}

It is known that for all $x\in\mathcal{X}$ we have $\nabla f\s(\Topt(\vx))=\frac{p(\vx)}{\whp(\vx)}$:

\begin{align*}
\Df(P\Vert\whP)-\D_{f,T}^{\mathrm{dual}}(P\Vert\whP) & =\mathbb{E}_{\whP}\left[\nabla f\s(\Topt(\vx))\left(\Topt(\vx)-T(\vx)\right)-f\s\left(\Topt(\vx)\right)+f\s\left(T(\vx)\right)\right]
\end{align*}

Recall that for any continuously differentiable strictly convex function$f$, the Bregman divergence of $f$ is $\breg_{f}\left(a,b\right)=f(a)-f(b)-\left\langle \nabla f(b),a-b\right\rangle $.
So we have

\begin{align*}
\Df(P\Vert\whP)-\D_{f,T}^{\mathrm{dual}}(P\Vert\whP) & =\mathbb{E}_{\whP}\left[\breg_{f\s}\left(T(\vx),\Topt(\vx)\right)\right]
\end{align*}

Let us now use the following property: $\breg_{f}\left(a,b\right)=\breg_{f\s}\left(a\s,b\s\right)$
where $a\s=\nabla f(a)$ and $b\s=\nabla f(b)$.

\begin{align*}
\Df(P\Vert\whP)-\D_{f,T}^{\mathrm{dual}}(P\Vert\whP) & =\mathbb{E}_{\whP}\left[\breg_{f}\left(\nabla f\s(T(\vx)),\nabla f\s(\Topt(\vx))\right)\right]\\
 & =\mathbb{E}_{\whP}\left[\breg_{f}\left(\nabla f\s(T(\vx)),\frac{p(\vx)}{\whp(\vx)}\right)\right]
\end{align*}

Let us define $r\left(\vx\right)=\nabla f\s T(\vx)$
as our estimator of $p(\vx)/\whp(\vx)$. So finally, we have

\begin{align*}
\Df(P\Vert\whP)-\D_{f,T}^{\mathrm{dual}}(P\Vert\whP) & =\mathbb{E}_{\whP}\left[\breg_{f}\left(r(\vx),\frac{p(\vx)}{\whp(\vx)}\right)\right]
\end{align*}

\subsection{Proof of Theorem~\ref{thm:boundedestimation}}
\label{app:thm:boundedestimation}
Now assume that $f$ is $\mu$-strongly convex, then $\breg_{f}(a,b)\ge\frac{\mu}{2}\left\Vert a-b\right\Vert ^{2}$
If $\mathbb{E}_{\whP}\left[\breg_{f}\left(r(\vx),\frac{p(\vx)}{\whp(\vx)}\right)\right]\le\epsilon$
and if $f$ is $\mu$-strongly convex, then

\begin{align}\label{eq: thm bdd est}
\mathbb{E}_{\whP}\left[\left(r(\vx)-\frac{p(\vx)}{\whp(\vx)}\right)^{2}\right]\le\frac{2\epsilon}{\mu}.
\end{align}

Consider an arbitrary f-divergence $\D_{g}(P\Vert\whP)=\int g\left(\frac{dP}{\d\whP}\right)\d\whP$. Define $\D_{g,T}^{\mathrm{primal}}(P\Vert \whP)=\int g\left(r(\vx)\right)\d\whP$. Then, 
\begin{align*}
\left|\D_{g}(P\Vert\whP)-\D_{g,T}^{\mathrm{primal}}(P\Vert \whP)\right| & =\left|\mathbb{E}_{\whP}\left[g\left(\frac{p(\vx)}{\whp(\vx)}\right)-g\left(r(\vx)\right)\right]\right|\\
 & \le\mathbb{E}_{\whP}\left[\left|g\left(\frac{p(\vx)}{\whp(\vx)}\right)-g\left(r(\vx)\right)\right|\right]\\
 &\stackrel{(a)}{\le} \mathbb{E}_{\whP}\left[\sigma\left|e(x)\right|\right]\\
 &= \sigma\mathbb{E}_{\whP}\left[\left|e(x)\right|\right]\\
  &\stackrel{(b)}{\le} \sigma\sqrt{\mathbb{E}_{\whP}\left[e(x)^2\right]}\\
   &\stackrel{(c)}{\le} \sigma \sqrt{\frac{2\epsilon}{\mu}},
\end{align*}
where $(a)$ follows from the $\sigma$-Lipschitz assumption on $g$, $(b)$ follows from Jensen's inequality and finally, $(c)$ follows from equation~\eqref{eq: thm bdd est}.
s

\subsection{Proof of Theorem~\ref{thm:fdivPR}}
\label{app:thm:fdivPR}
Let $c:\reals^+ \mapsto \reals$ be a $\mathcal{C}^2$ function and take $u_{\min}$ and $u_{\max}$. The goal is to express $f(u)$ for all $u\in[u_{\min}, u_{\max}]$ as a weighted average of $f^{\mathrm{PR}}_\lambda$:
\begin{align}
	\forall u\in\reals_*^+, \int_{1/u_{\max}}^{1/u_{\min}}c''(\lambda)f^{\mathrm{PR}}_\lambda(u
)\d\lambda =  \int_{1/u_{\max}}^{1/u_{\min}}c''(\lambda)\left[\max(\lambda u, 1)-\max\left(\lambda, 1\right)\right]\d\lambda
\end{align}
First, let us assume that $u_{\min}\leq1$ and $u_{\max}\geq1$, then the terms can be decomposed and the integral split to evaluate the $\max$:
\begin{align}
	\int_{1/u_{\max}}^{1/u_{\min}}c''(\lambda)f^{\mathrm{PR}}_\lambda(u
)\d\lambda &= \int_{1/u_{\max}}^{1/u_{\min}}c''(\lambda)\max(\lambda u, 1)\d\lambda -\int_{1/u_{\max}}^{1/u_{\min}}c''(\lambda)\max\left(\lambda, 1\right)\d\lambda \\
		&= \begin{multlined}[t]
		    \int_{1/u_{\max}}^{1/u}c''(\lambda)\max(\lambda u, 1)\d\lambda +\int_{1/u}^{1/u_{\min}}c''(\lambda)\max(\lambda u, 1) \d\lambda \\
-\int_{1/u_{\max}}^{1}c''(\lambda)\max\left(\lambda, 1\right)\d\lambda  -\int_{1}^{1/u_{\min}}c''(\lambda)\max\left(\lambda, 1\right)\d\lambda 
		\end{multlined}\\
            &=  \int_{1/u_{\max}}^{1/u}c''(\lambda)\d\lambda +\int_{1/u}^{1/u_{\min}}c''(\lambda)\lambda u\d\lambda
-\int_{1/u_{\max}}^{1}c''(\lambda)\d\lambda  -\int_{1}^{1/u_{\min}}c''(\lambda)\lambda\d\lambda .
\end{align}
By integrating by parts, we have: $\int_{1/u_{\max}}^{1/u_{\min}}c''(\lambda)\lambda\d\lambda = \left[c'(\lambda)\lambda\right]_{1/u_{\max}}^{1/u_{\min}} - \int_{1/u_{\max}}^{1/u_{\min}}c'(\lambda)\d\lambda$ so it satisfies:
\begin{align}
	\int_{1/u_{\max}}^{1/u_{\min}}c''(\lambda)f^{\mathrm{PR}}_\lambda(u
)\d\lambda &=  \begin{multlined}[t] \int_{1/u_{\max}}^{1/u}c''(\lambda)\d\lambda + u \left[c'(\lambda)\lambda \right]_{1/u}^{1/u_{\min}} - u \int_{1/u}^{1/u_{\min}}c'(\lambda)\d\lambda\\
 -\int_{1/u_{\max}}^{1}c''(\lambda)\d\lambda - \left[c'(\lambda)\lambda \right]_{1}^{1/u_{\min}} +  \int_1^{1/u_{\min}}c'(\lambda)\d\lambda\end{multlined}\\
            &=  \begin{multlined}[t]\left[c'(\lambda)\right]_{1/u_{\max}}^{1/u} + u \left[c'(\lambda)\lambda \right]_{1/u}^{1/u_{\min}} - u \left[c(\lambda)\right]_{1/u}^{1/u_{\min}}  \\ -\left[c'(\lambda)\right]_{1/u_{\max}}^{1} - \left[c'(\lambda)\lambda \right]_{1}^{1/u_{\min}} +  \left[c(\lambda)\right]_1^{1/u_{\min}}\end{multlined}\\
            &= \begin{multlined}[t] c'\left(\frac{1}{u}\right) - c'(0) + u c'\left(\frac{1}{u_{\min}}\right) - u c'\left(\frac{1}{u}\right)\frac{1}{u} - u c\left(\frac{1}{u_{\min}}\right) +uc\left(\frac{1}{u}\right)-c'(1) \\
            + c'(0)  -  c'\left(\frac{1}{u_{\min}}\right)\frac{1}{u_{\min}} + c'(1)\times1 +c\left(\frac{1}{u_{\min}}\right) - c(1) \end{multlined} \\
            &= \left[c'\left(\frac{1}{u_{\min}}\right)\frac{1}{u_{\min}} - c\left(\frac{1}{u_{\min}}\right)\right]\left(u-1\right) + uc\left(\frac{1}{u}\right)-c(1).
\end{align}
We would like $\int_{1/u_{\max}}^{1/u_{\min}}c''(\lambda)f^{\mathrm{PR}}_\lambda(u
)\d\lambda$ to be equal to $f$ between $u_{\min}$ and $u_{\max}$. But since two \fdivs generated by $f$ and $g$ are equals if there is a $c\in\reals$ such that $f(u)=g(u)+c(u-1)$, the Divergence generated by $\int_{1/u_{\max}}^{1/u_{\min}}c''(\lambda)f^{\mathrm{PR}}_\lambda(u
)\d\lambda$ is equal to the divergence generated by $uc\left(\frac{1}{u}\right)-c(1)$. Therefore, we require $c$ to satisfy: 
\begin{align*}
    \forall u\in[u_{\min}, u_{\max}], \quad f(u) = uc\left(\frac{1}{u}\right)-c(1).
\end{align*}
By differentiating with respect to $u$, we have:
\begin{align}
    f'(u) =  \lim_{\lambda \rightarrow \infty}\left[c'(\lambda)\lambda - c(\lambda)\right] + c\left(\frac{1}{u}\right) -\frac{1}{u}c'\left(\frac{1}{u}\right).
\end{align}
And finally:
\begin{align}
    f''(u) &=  -\frac{1}{u^2}c\left(\frac{1}{u}\right) +\frac{1}{u^2}c'\left(\frac{1}{u}\right) +\frac{1}{u^3}c''\left(\frac{1}{u}\right) \\
    & = \frac{1}{u^3}c''\left(\frac{1}{u}\right).
\end{align}
Consequently, with $\lambda = 1/u$, we have that:
\begin{align}
    c''(\lambda) = \frac{1}{\lambda^3}f''\left(\frac{1}{\lambda}\right).
    \label{eq:coeffPR}
\end{align}
With such a results, with $m=\min_{\setX}(\frac{\whp(\vx)}{p(\vx)})$ and $M=\max_{\setX}(\frac{\whp(\vx)}{p(\vx)})$, we can write any \fdiv as:
\begin{align*}
    \Df(P\Vert \whP) & = \int_{\setX} \whp(\vx) f\left(\frac{p(\vx)}{\whp(\vx)}\right)\dx  \\
    &=  \int_{\setX} \whp(\vx) \int_{m}^{M} \frac{1}{\lambda^3}f''\left(\frac{1}{\lambda}\right) f^{\mathrm{PR}}_\lambda\left(\frac{p(\vx)}{\whp(\vx)}\right) \d\lambda \dx \\
     &=  \int_{m}^{M}\int_{\setX}  \frac{1}{\lambda^3}f''\left(\frac{1}{\lambda}\right) \whp(\vx)  f^{\mathrm{PR}}_\lambda\left(\frac{p(\vx)}{\whp(\vx)}\right) \d\lambda \dx \\
      &= \int_{m}^{M} \frac{1}{\lambda^3}f''\left(\frac{1}{\lambda}\right)\left[\int_{\setX}  \whp(\vx)  f^{\mathrm{PR}}_\lambda\left(\frac{p(\vx)}{\whp(\vx)}\right)\dx  \right] \d\lambda \\
        &=  \int_{m}^{M} \frac{1}{\lambda^3}f''\left(\frac{1}{\lambda}\right)\DPR(P\Vert\whP) \d\lambda \\
\end{align*}

\subsection{Proof of Corollary~\ref{cor:KLrKLPR}}
\label{app:cor:KLrKLPR}
In particular for the $\KL$, $f(u)=u\log u$, therefore $f''(u) = 1/u$ which gives: 
\begin{align}
    \KL(P\Vert\whP)=\int_{m}^{M}\frac{1}{\lambda^2} \DPR(P\Vert\whP)\d\lambda.
\end{align}
And for the $\rKL$ we can either use Equation~\ref{eq:coeffPR} with $f(u)=-\log u$ or use the fact that $\DPR(P\Vert\whP) = \lambda\DRP(\whP\Vert P)$:
\begin{align}
    \rKL(P\Vert\whP)=\int_{m}^{M}\frac{1}{\lambda} \DPR(P\Vert\whP)\d\lambda.
\end{align}

\section{Minimizing the Precision-Recall divergence}\label{app:sec:training}
In Section~\ref{sec: optimizing for PR}, we outline a novel strategy to train models for challenging $f$-divergences. Rather than directly employing the $f$-GAN framework and minimizing the dual variational form of $\DPR$, we introduce an auxiliary function, denoted by $g$, which is easier to train. Practical choices for $g$ could include functions such as $f_\mathrm{KL}$ or $f_{\chi^2}$, which have been shown to be easily trainable.

Using this approach, a discriminator $T$ is trained with the objective of maximizing $\D_{g,T}^{\mathrm{dual}}$. This is achieved by drawing samples from both $P$ and $\whP$ and computing the estimate in the dual form. Under optimal conditions, we find $\nabla g^*(\Topt(\vx)) = p(\vx) / \wh p(\vx)$, enabling us to estimate the $f$-divergence based on $T$ using what we term the primal estimate.

The primal estimate is defined as $\D^{\mathrm{primal}}_{f, T}(P \Vert \wh P) = \int_{\Xset} \whp(\vx)f\left(r(\vx)\right)\d\vx$, where $r(\vx) = \nabla g\s (T(\vx))$. This strategy and its corresponding training procedure are visually represented in Figure~\ref{app:fig:training process}.
\begin{figure}[H]
 \centering
    \includegraphics[width=0.6\linewidth]{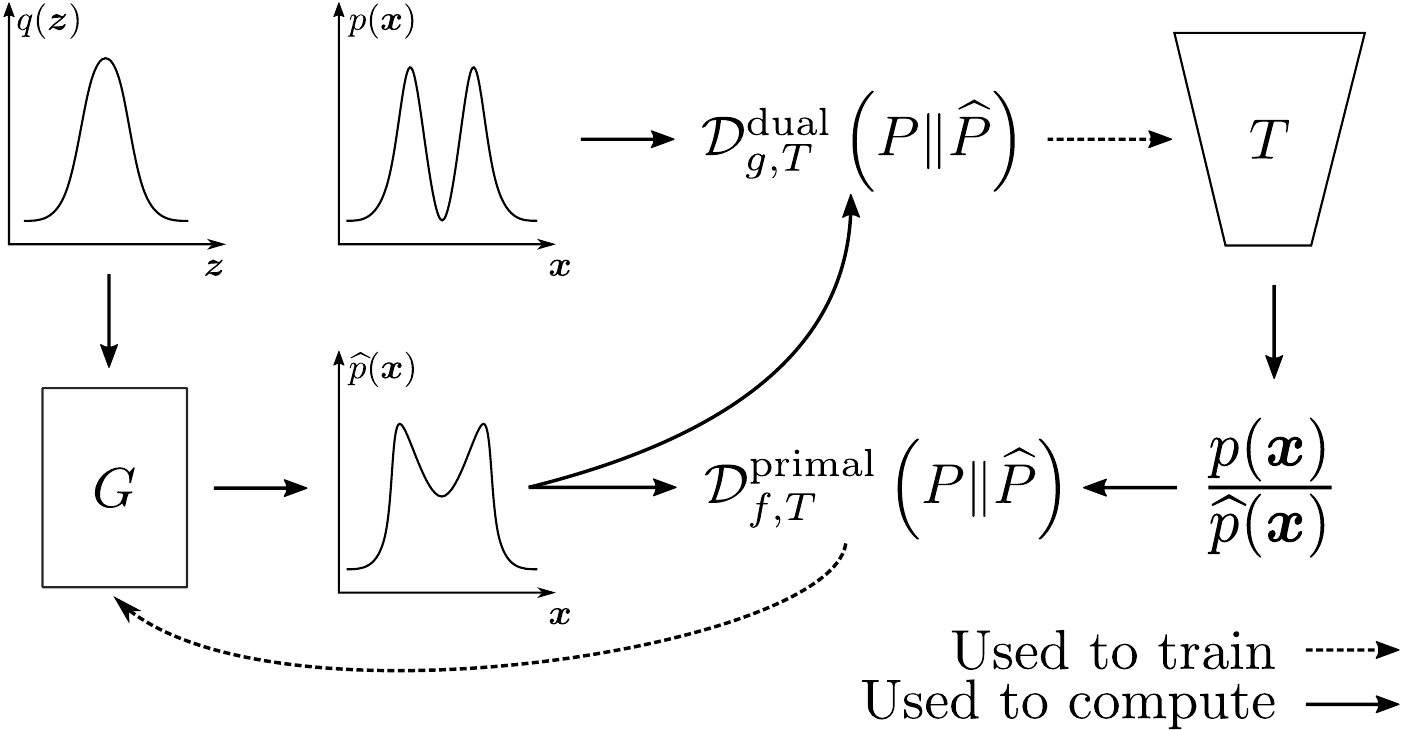}
    \caption{Training procedure: the discriminator $T$ is trained based on $\D_{g,T}^{\mathrm{dual}}$ and $G$ is trained on $\D^{\mathrm{primal}}{f, T}$.}
    \label{app:fig:training process}
\end{figure}

We have shown that the choice of $g$ affects the quality of the estimation of $\Df$. We have shown that to have guarantees on the quality of the estimation, $g$ must be strictly convex. We show empirically that $g=f_{\chi^2}$ leads to a better approximation than $g=f_\mathrm{KL}$. We train 200 discriminators on 20 Gaussian random two-dimensional mixtures to maximize $\D_{g,T}^{\mathrm{dual}}$ with $g=f_\mathrm{KL}$ and $g=f_{\chi^2}$. In Figure~\ref{app:fig:ch2KL}, we report the results. On the $x$-axis, we report the \textit{training} loss quality $\Dg(P\Vert \whP) - \D_{g,T}^{\mathrm{dual}}(P\Vert \whP)$ and on the $y$-axis we report the \textit{estimation quality} : $\Df(P\Vert \whP) -\D_{f, T}^{\mathrm{primal}}(P\Vert \whP)\vert$. The dotted lines correspond to the Mahalanobis distance of each set of experiments. In this experiment, we use a 4 linear layers 2-1024-512-256-1 neural network with LeakyRelu activation between layers. The last activation is set to match $\dom(f\s)$ (see Section~\ref{app:subsec:fdiv}). We use a learning rate of $2.10^{-5}$ with Adam optimizer. 
\begin{figure}[H]
    \centering
    \includegraphics[width=0.8\linewidth]{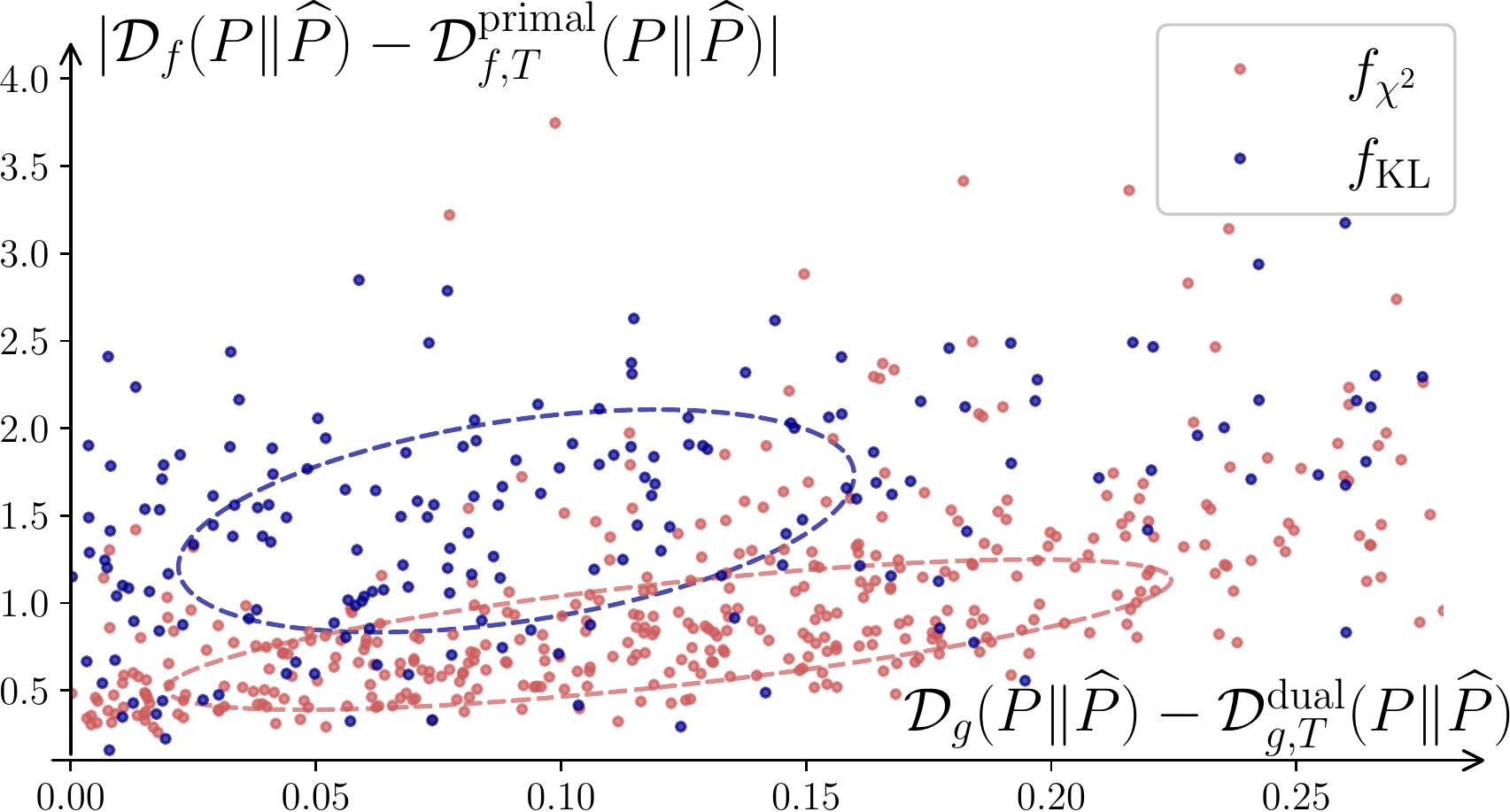}
    \caption{We compare the quality of the primal estimation using two discriminators, one trained using the $\KL$ and $\Dchi$ over 200 experiments.  
    The lower the point is on the $y$-axis,  the better the estimate. As we can see, even distant dual estimates of the $f_{\chi^2}$ (points on the right side of the chart) tend to provide better estimates of the primal. }

    \label{app:fig:ch2KL}
\end{figure}
We observe in this experiment that 1) the difference between $\KL$  and its dual  is usually lower (the blue  circle is lower on $x$-axis) and yet 2) the estimation of $\Df$ is usually better with $g=f_{\chi^2}$  (the red  circle is lower on $y$-axis). These observations corroborate our claim that employing $f_{\chi^2}$ as the auxiliary divergence is a more advantageous choice compared to $f_\mathrm{KL}$. Although both divergence measures exhibit satisfactory performance in the context of the MNIST and FashionMNIST experiments, we find $f_{\chi^2}$ to demonstrate superior stability, reinforcing its suitability in this training framework.

 \paragraph{Complete version of the algorithm}In Section~\ref{sec: optimizing for PR}, we introduce a condensed version of our training algorithm. For practical purposes, however, we implement a more refined variant of this algorithm using stochastic gradient descent, detailed in Algorithm~\ref{app:alg:training process}. This approach involves segmenting the dataset into batches of size $N$ and computing the distinct losses for each individual batch. Our algorithm is closely related to the GAN training procedure, operating on an iterative principle of alternating updates to the discriminator $T$ and the generator $G$ until a convergence state is reached.
In particular, our method differs from the standard GAN training protocol in two key aspects. First, the generator $G$ in our case is trained based on an estimation of a distinct $f$-divergence. Second, this specific $f$-divergence is calculated in terms of its primal form, rather than its dual form, a marked departure from the established GAN training practice.
\begin{algorithm}[H]
   \caption{Stochastic Gradient Descent for the two-step approach for minimizing $\DPR$.
   For each batch $B$ composing the dataset $D$, the discriminator $T$ is trained on $\mathcal{L}_{T}$, the empirical estimation of $\Ddual$ and the generator $G$ is trained on $\mathcal{L}_{G}$, the empirical estimation of $\DPR^{\mathrm{primal}}(P\Vert \whP)$. }
   \label{app:alg:training process}
\begin{algorithmic}
   \STATE {\bfseries Input:} Generator $G$, Discriminator  $T$, Dataset $\DD$
   \STATE $g\s \leftarrow f\s_{\chi^2} \mathrm{ or }f\s_{\mathrm{KL}}$
    \STATE $f \leftarrow f_\lambda$
   \FOR{ epoch $e=1, \dots, E$}
   \FOR{$B\in D$}
     \STATE $\mathcal{L}_{T}, \mathcal{L}_{G} \leftarrow 0, 0$
   \FOR{$\vx_1^{\mathrm{real}}, \dots, \vx_N^{\mathrm{real}}  \in B$}
   \FOR{$i=1$ to $N$ }
   \STATE Generate $\vx_i^{\mathrm{fake}} = G(\vz)$ with $\vz \sim \N(\vzero_d, \mI_d)$
   \STATE $ \mathcal{L}_{T} \leftarrow \mathcal{L}_{T} + T(\vx_i^{\mathrm{real}}) - f\s(T(\vx_i^{\mathrm{fake}}))$
 \ENDFOR
   \STATE Update parameters of $T$ by ascending the gradient $\nabla \mathcal{L}_T$.
   \FOR{$i=1$ to $N$ }

   \STATE $\mathcal{L}_{G} \leftarrow \mathcal{L}_{G} + g\left(\nabla f\s(T(\vx_i^{\mathrm{fake}})\right)$
   \ENDFOR
   \STATE Update parameters of $G$ by descending the gradient $\nabla \mathcal{L}_G$.
   \ENDFOR
   \ENDFOR   
   \ENDFOR
\end{algorithmic}
\end{algorithm}

\section{Experiments}
\label{app:xp}

\subsection{Naive Approach} 
In this section, we juxtapose the straightforward method with our technique delineated in Section~\ref{sec: optimizing for PR}. We achieve this by evaluating the optimization process of two identical $G$ models: one is trained using the conventional approach and the other using our methodology. In reality, there exist two significant variations between these two training procedures: training losses and the final layer of the discriminator $T$. In both procedures, we use $f\s(T(\vx))$, implying that the co-domain must be encapsulated within $\dom(f\s)$. More information on activation can be found in the Appendix~\ref{app:subsec:fdiv}. Aside from these disparities, we utilize the identical training process as outlined in the previous section to train the models on CIFAR-10. The training procedures are compared in Figure~\ref{fig:naivetraining}. To evaluate training procedures, we plot the loss $\mathcal{L}_G$, used to train $G$, either the dual estimation of $\DPR$ in the naive approach or the primal estimation of $\DPR$ in our method. To track the evolution of the discriminator, we plot discriminating predictions on the input conditioned on the class (real or fake). Precisely, we compute the accuracy on a batch of real or fake images, how many the discriminator achieves to identify as such. Finally, we plot the Precision and Recall computed at each epoch. 

We observe that for varying values of $\lambda$, the discriminator $T$ undergoes the training process using the naive approach, leading to an unsatisfactory estimation of $\D^{\mathrm{dual}}_{f, T}$.
\begin{figure}[H]
    \centering
    \includegraphics[width=\linewidth]{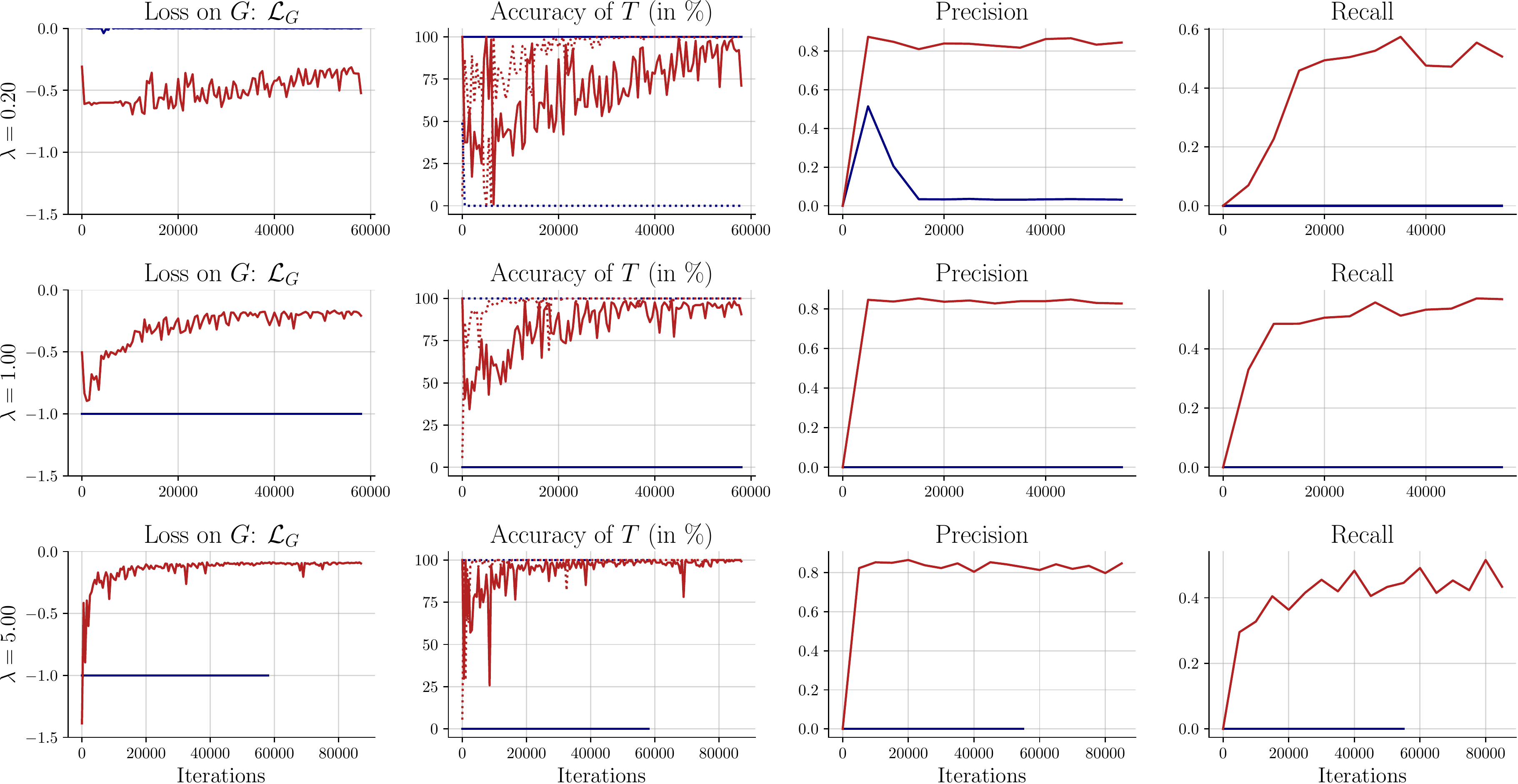}
    \caption{Results for training BigGAN models on CIFAR-10 using the naive approach (in blue) and our method (in red). For $\lambda=0.2$, $\lambda=1$ and $\lambda=5$, the loss represented by $G$ in learned. We obverse that for the naive approach, this loss is constant. Then, we plot the accuracy of the discriminator. We plot the accuracy conditioned on the class: in the dotted line the accuracy on the images of the dataset $\vx^{\mathrm{real}}$ and in the solid line the accuracy on the generated images $\vx^{\mathrm{fake}}$. Then we plot the Precision and Recall during training. }
    \label{fig:naivetraining}
\end{figure}

 \subsection{Synthetic data: 8 Gaussians} 
 \label{app:xp:2d}
 \begin{figure}[H]
\begin{minipage}[b]{0.65\linewidth}
 For the synthetic data, we use a RealNVP \cite{dinh_density_2017} for the generator $G$. We use an 8-coupling step composed of each of 2 linear layers 2-256-2 with LeakyRelu activation in between.  For the discriminator, we used a 4 linear layers 2-1024-512-256-1 neural network with LeakyRelu activation between layers. For both, we use Adam optimizer with a learning rate of $2.10^{-5}$ for $G$  and $1.10^{-4}$ for $T$. $G$ has $540$k parameters and $660$k for $T$.
\newline 

 Then, to estimate the PR curves, both methods from \citet{sajjadi_assessing_2018} and \citet{simon_revisiting_2019} are developed for image dataset. To estimate these curves, we use our own estimation of $\DPR$. We computed $\DPR$ for $\lambda=\tan(\theta)$ with $\theta$ between $0$ and $\pi/2$. 
The observations from the generated curves underscore a compelling finding: Each model has been effectively optimized for a unique, specific $\lambda$, resulting in its superior performance on the corresponding $\DPR$. This indicates a clear correspondence between the intended optimization and the achieved performance, suggesting the efficacy of our training algorithm in optimizing models for specific performance metrics. Such a successful matching of model optimization to $\lambda$ underscores the feasibility of tailoring model training according to a specific precision-recall trade-off objective.
 \end{minipage} 
\hfill
\begin{minipage}[b]{0.30\linewidth}
     \centering
     \includegraphics[width=1\linewidth]{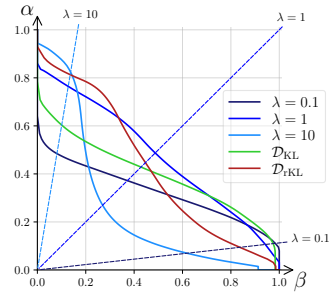}
     \caption{PR Curves of the models in Figure~\ref{fig:realnvp2D}. In shades of blue, the curves correspond to the model trained to minimize $\DPR$, with dark blue for $\lambda =0.1$, medium blue for $\lambda =1$, and light blue for $\lambda =1$. The green curve corresponds to $\KL$ and the red curve to $\rKL$.}
     \label{fig:my_label}
      \end{minipage}

 \end{figure}

 \subsection{MNIST and FashionMNIST}
  \label{app:xp:BW}
  The training procedure for MNIST and FashionMNIST is strictly the same. For both we use a multiscale glow \cite{kingma_glow_2018}. The model has three levels processing images of size $4\times 16\times 16$, $16\times 8\times 8$, $64\times 8\times 8$. Each level has 16 blocks of affine coupling with 3 layers of 512 channels of convolutional operations, leading to a total of $85.2$M parameters. For the discriminator, we use a 4 linear layers 1024-1024-512-256-1 neural network with LeakyRelu activation between layers, with $1.7$M parameters. Both are trained with Adam using a learning rate of $1.10^{-5}$ for $T$ and $1.10^{-6}$ for $G$. For both dataset, we train a model for 250 epochs using maximum likelihood estimation (MLE) in 4 GPUs V100 ($\sim$ 200 hours). The models are then fine-tuned with their different losses on 12 V100 GPUs for 30 epochs ($\sim$ 2 hours). For two epochs we train the discriminator only and then both.  For MNIST, we report results as graphs in Figure~\ref{app:fig:MNISTall}, the quantitative results in Table~\ref{app:tab:xpnf} and samples in Figure~\ref{app:fig:mnistsamples}. For FashionMNIST, we report the results as graph in Figure~\ref{app:fig:FashionMNISTall}, the quantitative results in Table~\ref{app:tab:xpnf} and samples in Figure~\ref{app:fig:fashionmnistsamples}.

\begin{table}[H]
\caption{ Quantitative evaluation of various Glow \cite{kingma_glow_2018} models on the MNIST and FashionMNIST datasets. Models differ by their choice of $\Df$, $\Dg$, and $\lambda$. The performance metrics include the FID ($\downarrow$), P ($\downarrow$)  and R ($\uparrow$). The best model is highlighted in \textbf{bold}. FID is calculated for 50k samples.P and R are culculated for 10k samples and $k=3$.}
\label{app:tab:xpnf}
\begin{small}
\begin{center}
\begin{tabular}{ccccccccc}
\bottomrule
\multicolumn{3}{c}{Model} & \multicolumn{3}{c}{MNIST} & \multicolumn{3}{c}{FashionMNIST}  \\ \hline
  $\Dg$&  $\Df$ & $\lambda$  & FID & P & R & FID & P & R   \\\hline
 \multicolumn{3}{c}{MLE}    &  $7.68$ & $70.03$ & $76.52$  & $66.64$ & $58.19$ & $47.35$ \\\hline
 $\KL$ & $\KL$& - &  $7.76$ & $70.68$ & $\mathbf{77.19}$  &  $71.72$ & $51.44$ & $45.61$ \\
$\rKL$ & $\rKL$ & -& $10.50$ & $74.64$ & $67.21$ & $51.65$ & $67.68$ & $40.76$ \\ \hline
$\Dchi$ & $\DPR$ & $0.10$ & $6.84$ & $78.24$ & $72.11$ & $48.11$ & $66.57$ & $45.80$ \\
$\Dchi$ & $\DPR$ & $0.20$ & $6.48$ & $78.84$ & $71.02$ & $\mathbf{41.97}$ & $66.41$ & $46.62$ \\
$\Dchi$ & $\DPR$ & $0.50$ & $6.57$ & $79.09$ & $70.26$  & $46.04$ & $71.20$ & $46.37$ \\
$\Dchi$ & $\DPR$ & $1.00$ & $7.95$ & $79.55$ & $69.08$  & $52.88$ & $73.32$ & $36.44$ \\
$\Dchi$ & $\DPR$ & $2.00$ & $10.01$ & $80.37$ & $67.84$  & $101.33$ & $\mathbf{76.47}$ & $39.40$ \\
$\Dchi$ & $\DPR$ & $5.00$ & $15.31$ & $81.34$ & $67.00$  & $118.54$ & $73.44$ & $35.14$ \\
$\Dchi$ & $\DPR$ & $10.00$ & $20.88$ & $81.71$ & $66.62$  & $91.20$ & $72.14$ & $19.48$ \\ \hline
$\KL$ & $\DPR$ & $0.10$ & $4.66$ & $76.06$ & $73.88$  & $42.85$ & $66.48$ & $\mathbf{49.73}$ \\ 
$\KL$ & $\DPR$ & $0.20$ & $\mathbf{4.45}$ & $76.92$ & $71.92$  & $48.25$ & $72.87$ & $49.52$ \\ 
$\KL$ & $\DPR$ & $0.50$ & $5.94$ & $77.98$ & $71.42$  & $54.12$ & $71.28$ & $41.70$ \\ 
$\KL$ & $\DPR$ & $1.00$ & $8.21$ & $79.92$ & $70.02$  & $62.33$ & $62.58$ & $42.89$ \\ 
$\KL$ & $\DPR$ & $2.00$ & $9.40$ & $79.89$ & $67.15$  & $64.74$ & $72.78$ & $41.41$ \\ 
$\KL$ & $\DPR$ & $5.00$ & $14.01$ & $82.08$ & $66.13$  & $83.25$ & $73.12$ & $33.80$ \\ 
$\KL$ & $\DPR$ & $10.00$ & $27.61$ & $\mathbf{83.20}$ & $65.09$  & $79.37$ & $70.74$ & $27.05$ \\ \hline
\end{tabular}
\end{center}
\end{small}
\end{table}

 \begin{figure}[H]
    \centering
    \includegraphics[width=\linewidth]{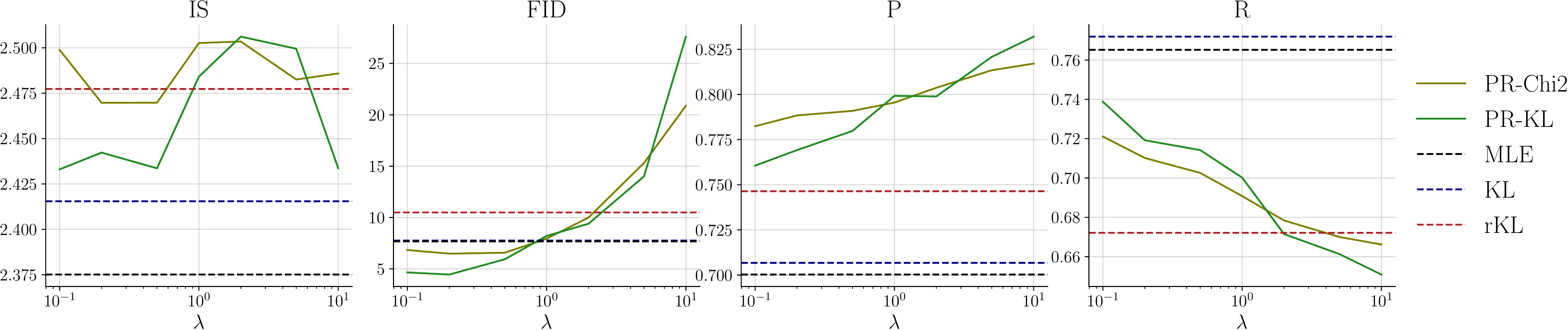}
    \caption{MNIST: Glow models \cite{kingma_glow_2018} are trained for different $\lambda$. From left to right, we plot IS ($\uparrow$),  FID ($\downarrow$), P ($\uparrow$) and R ($\uparrow$). IS and FID are calculated using 50k samples, and P and R are calculated using 5k samples with $k=3$ using \citet{kynkaanniemi_improved_2019}'s method. For comparison, we also report models trained with MLE (in black), for $\KL$ (in blue) and for $\rKL$ in red.}
    \label{app:fig:MNISTall}
\end{figure}
\begin{figure}[H]\label{app:fig:mnistsamples}
    \centering
    \subfigure[MLE]{\includegraphics[width = 0.4\textwidth]{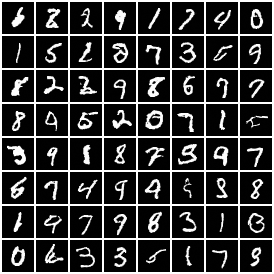} \label{app:fig:MNISTMLE}}
    \subfigure[ $\lambda = 0.1$]{\includegraphics[width=0.4\textwidth]{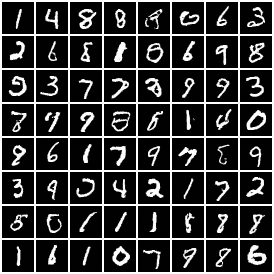} \label{app:fig:MNISTl01}}
    
    \subfigure[ $\lambda = 1$]{\includegraphics[width=0.4\textwidth]{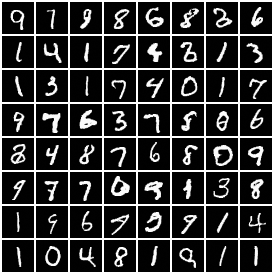}\label{app:fig:MNISTl1}}
    \subfigure[ $\lambda = 10$]{\includegraphics[width=0.4\textwidth]{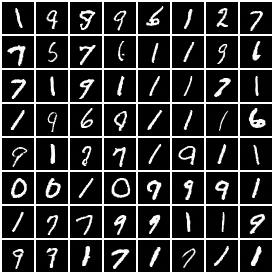}\label{app:fig:MNISTl10}}
    \caption{MNIST: Samples are drawn from different models (only models trained with $g=f_{\chi^2}$). From \ref{app:fig:MNISTMLE} to \ref{app:fig:MNISTl10}, we observe that visual quality improves while diversity decreases. Models geared towards high recall, specifically MLE and those with $\lambda = 0.1$, are found to generate a wide variety of samples. However, these are characterized by lower precision, and approximately 20\% of the generated samples are incoherent. On the contrary, the model trained with $\lambda = 10$ appears to focus on a smaller subset of modes, demonstrating higher precision but limited diversity. Specifically, this model primarily generates classes 1, 6, 7, and 9.  }
\end{figure}

\begin{figure}[H]
    \centering
    \includegraphics[width=\linewidth]{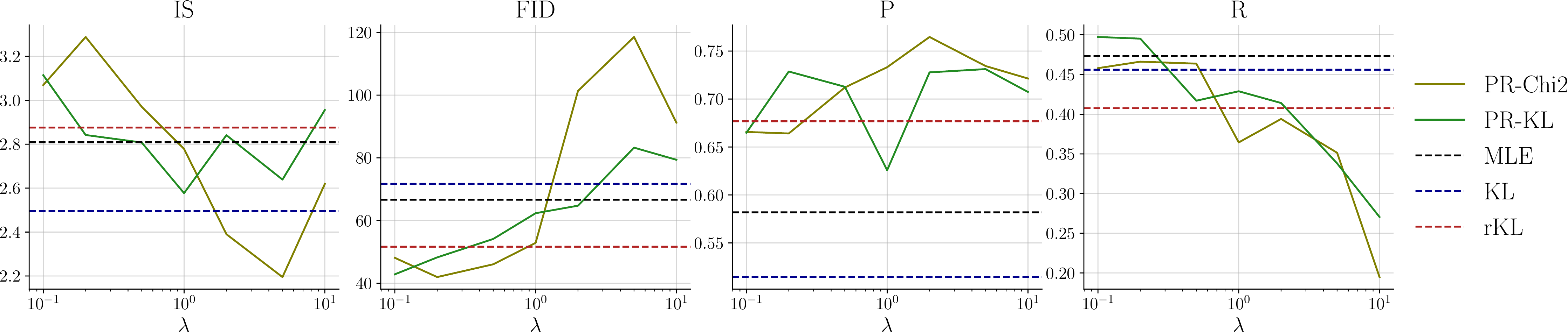}
    \caption{FashionMNIST: Glow models \cite{kingma_glow_2018} are trained for different $\lambda$. From left to right, we plot IS ($\uparrow$),  FID ($\downarrow$), P ($\uparrow$) and R ($\uparrow$). IS and FID are calculated using 50k samples, and P and R are calculated using 10k samples with $k=3$ using \citet{kynkaanniemi_improved_2019}'s method. For comparison, we also report models trained with MLE (in black), for $\KL$ (in blue) and for $\rKL$ in red.}
    \label{app:fig:FashionMNISTall}
\end{figure}

\begin{figure}[H]\label{app:fig:fashionmnistsamples}
    \centering
    \subfigure[MLE]{\includegraphics[width = 0.4\textwidth]{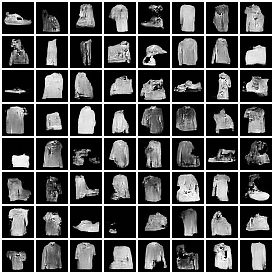} \label{app:fig:FasMNISTMLE}}
    \subfigure[ $\lambda = 0.1$]{\includegraphics[width=0.4\textwidth]{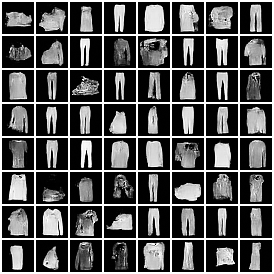} \label{app:fig:FasMNISTl01}}
    \subfigure[ $\lambda = 1$]{\includegraphics[width=0.4\textwidth]{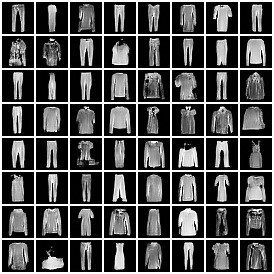}\label{fapp:ig:FasMNISTl1}}
    \subfigure[ $\lambda = 10$]{\includegraphics[width=0.4\textwidth]{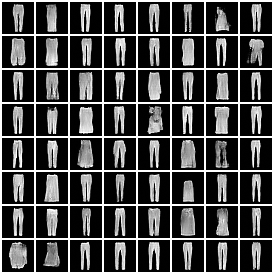}\label{app:fig:FasMNISTl10}}
    \caption{FashionMNIST: Samples are drawn from different models (only models trained with $g=f_{\chi^2}$). From \ref{app:fig:FasMNISTMLE} to \ref{app:fig:FasMNISTl10}, we observe that visual quality improves while diversity decreases. Models geared towards high recall, specifically MLE and those with $\lambda = 0.1$, are found to generate a wide variety of samples. However, these are characterized by lower precision, and approximately 15\% of the generated samples are incoherent. On the contrary, the model trained with $\lambda = 10$ appears to focus on a smaller subset of modes, demonstrating higher precision but limited diversity. Specifically, this model primarily generates the class "trouser". }
\end{figure}

\subsection{Training BigGAN on CIFAR-10 and CelebA64}
In this section we give the details of the experiments on training BigGAN \cite{brock_large_2019} models.  To do this, we modify the official implementation of PyTorch\footnote{\url{https://github.com/ajbrock/BigGAN-PyTorch}} of BigGAN by \citet{brock_large_2019}. $G$ and $T$ respectively count $4.3$M and $4.2$M parameters for CIFAR-10 and $32.0$M and $19.5$M for CelebA64. CIFAR-10's models are trained on 2 A100 80GB GPUs with a batch size of 128 for approximately 100k iterations ($\sim$ 7 hours), while CelebA64's models have been trained on 4 A1200 32GB GPUs with a batch size of 128 for 95k iteration ($\sim$ 20 hours). Quantitative results are reported in Table~\ref{tab:biggan1}. The graphic representation of the results is in Figures~\ref{app:fig:cifarall} and \ref{app:fig:celebaall}. The training curves representative of FID, IS, P and R during training are reported in Figures~\ref{app:fig:cifar10training} and \ref{app:fig:celeba   training}.
\begin{figure}[H]\label{app:fig:cifar10samples}
    \centering
    \subfigure[BigGAN Baseline]{\includegraphics[width = 0.4\textwidth]{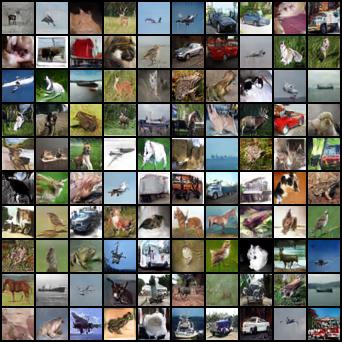} \label{fig:vifarBigGAN}}
    \subfigure[ $\lambda = 0.10$]{\includegraphics[width =   0.4\textwidth]{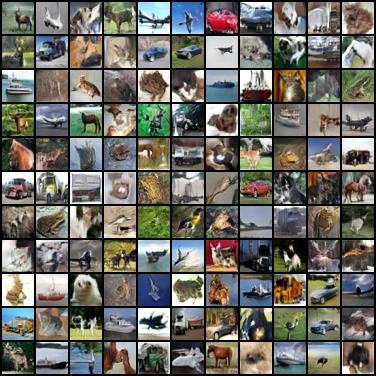} \label{fig:cifarl01}}
    \subfigure[ $\lambda = 1$]{\includegraphics[width=0.4\textwidth]{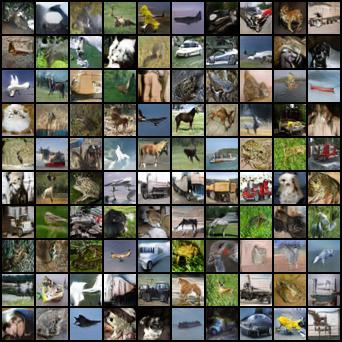}\label{fig:cifarl1}}
    \subfigure[ $\lambda = 10$]{\includegraphics[width=0.4\textwidth]{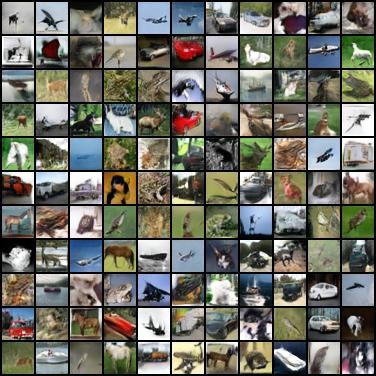}\label{fig:cifarl10}}
    \caption{CIFAR-10: Samples are drawn from different BigGAN trained on the PR-Divergence. We observe that when $\lambda$ increases, the recall decreases going to a various range of colored images to mostly brown images.}
\end{figure}
\begin{figure}[H]
    \centering
    \includegraphics[width=0.8\linewidth]{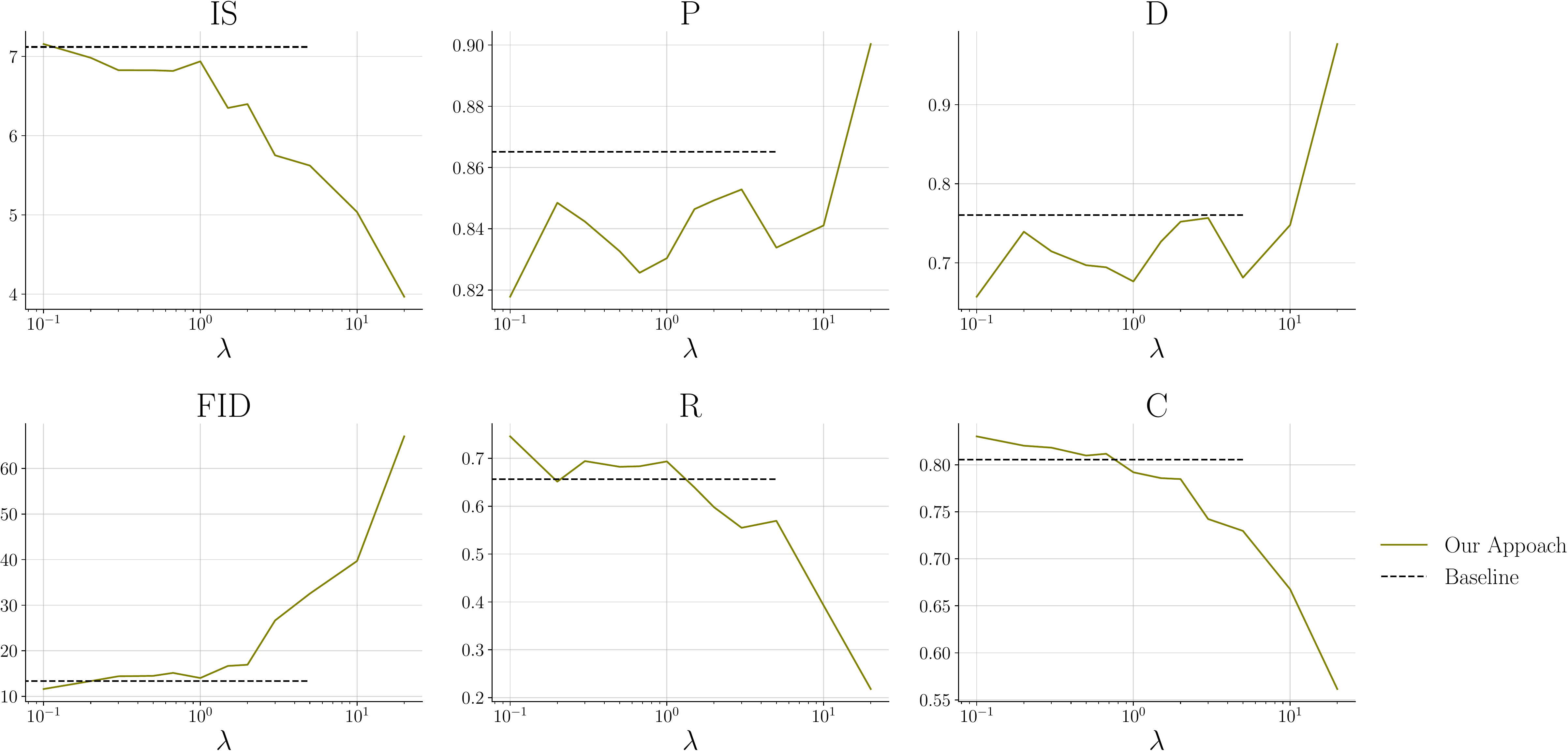}
    \caption{CIFAR-10: BigGAN models \cite{brock_large_2019} are trained for different $\lambda$.  From left to right, we plot IS ($\uparrow$),  P ($\uparrow$),  D ($\uparrow$), FID ($\downarrow$), R ($\uparrow$) and C ($\uparrow$). IS and FID are calculated using 50k samples, and P and R are calculated using 10k samples with $k=5$ using \citet{kynkaanniemi_improved_2019}'s method. For comparison, we also report a model trained hinge loss (in black), following the vanilla training procedure.}
    \label{app:fig:cifarall}
    \vspace{-0.5cm}
\end{figure}
\begin{figure}[H]
    \centering
    \includegraphics[width=\linewidth]{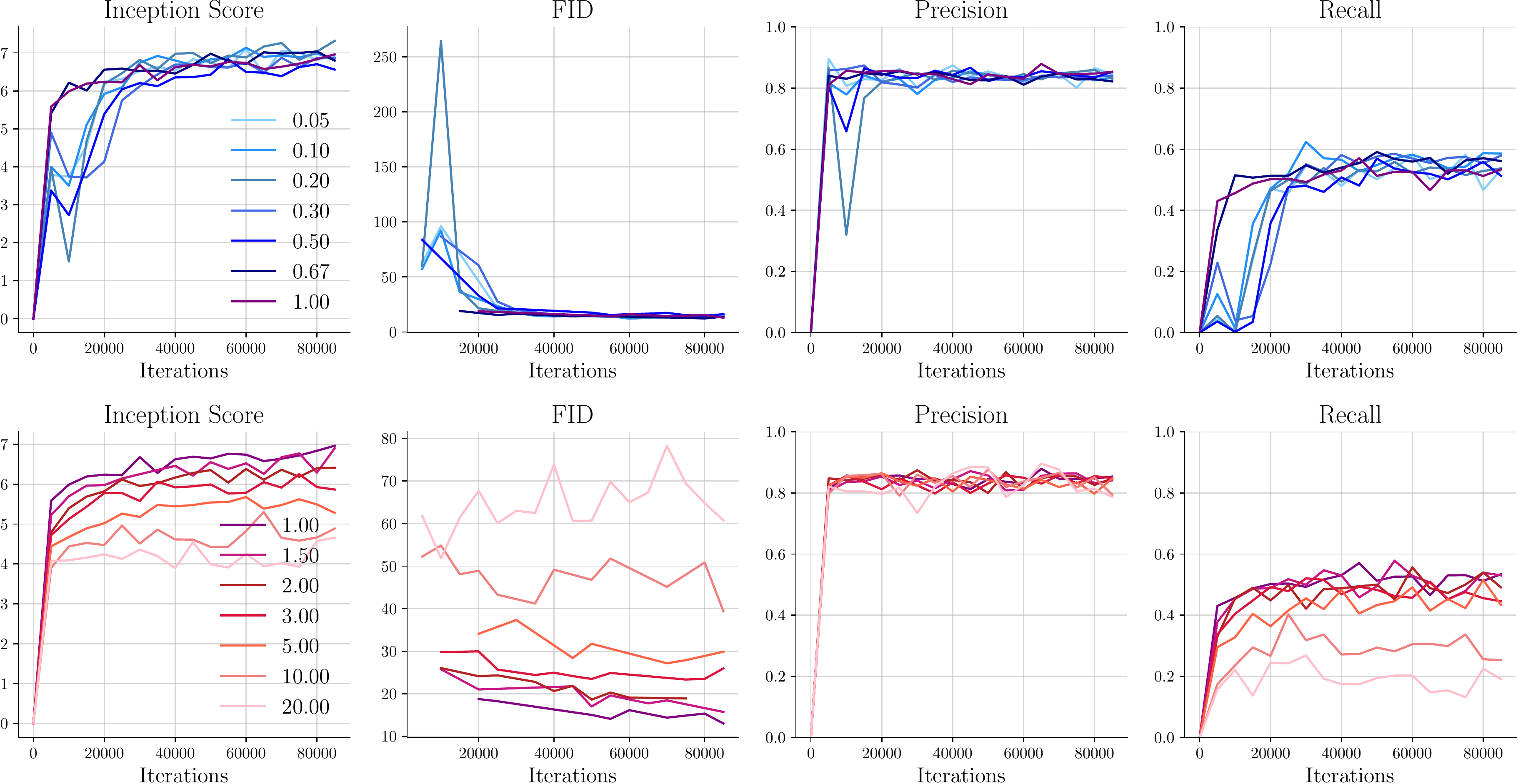}
    \caption{CIFAR-10: Evolution of the IS,  FID, P and R during training. We can observe that the precision quickly achieves its maximal value and saturates. We can also observe that the model is evicting modes of the target distribution early in the training, R is quickly saturates to different levels depending on $\lambda$. However, for low values of $\lambda$, the recall does not increase when $\lambda$ decreases, indicating that the model capacity limits the maximum recall. }
    \label{app:fig:cifar10training}
\end{figure}
\begin{figure}[H]
    \centering
    \includegraphics[width=\linewidth]{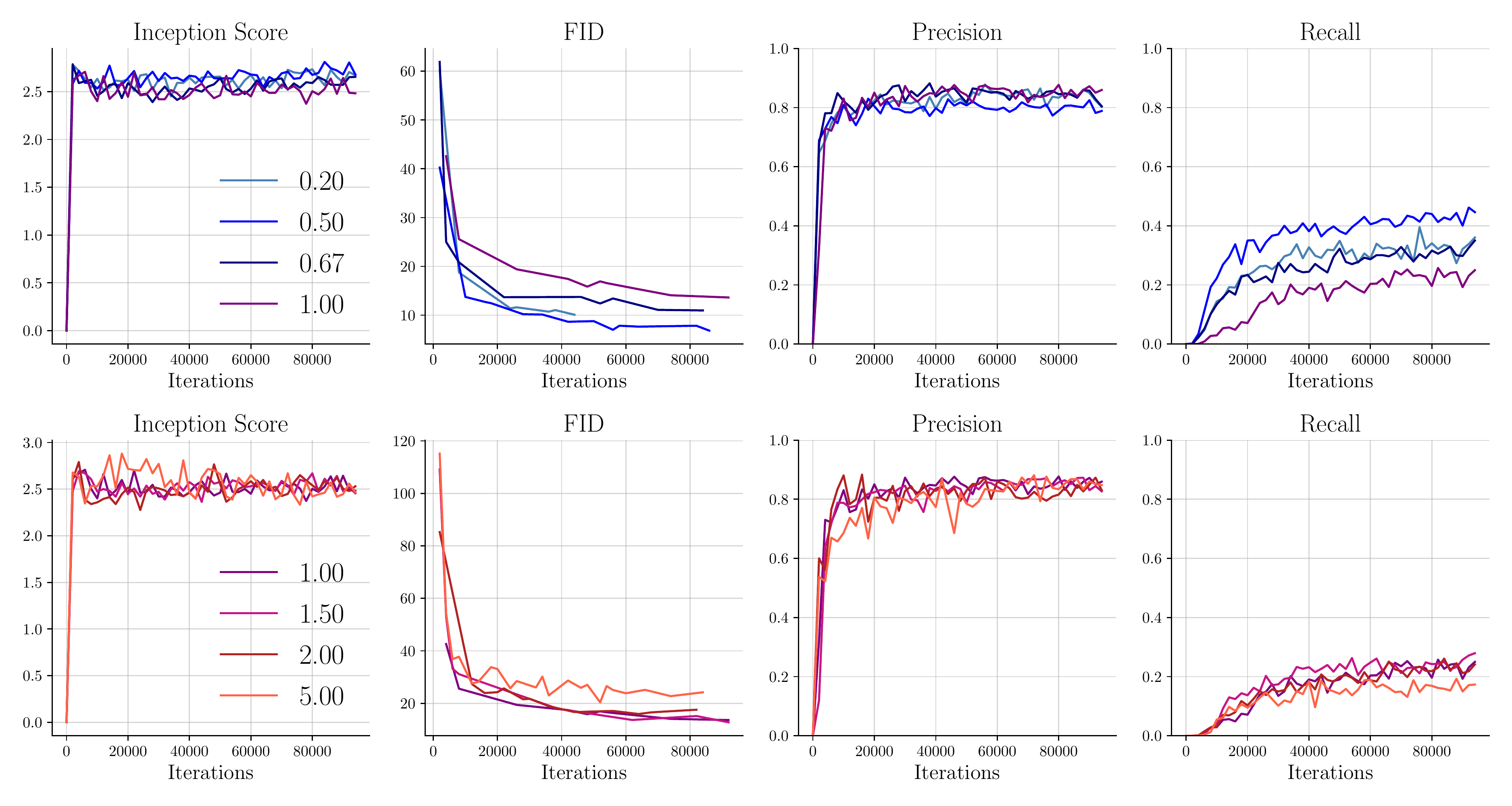}
    \caption{CelebA Evolution of the IS,  FID, P and R during training. We can observe that precision quickly achieves its maximal value and saturates. We can also observe that the model is evicting modes of the target distribution early in the training, R is quickly saturates to different level depending on $\lambda$.} 
    \label{app:fig:celeba   training}
\end{figure}

\begin{figure}[H]\label{app:fig:celebasamples}
    \centering
    \subfigure[BigGAN Baseline]{\includegraphics[width = 0.4\textwidth]{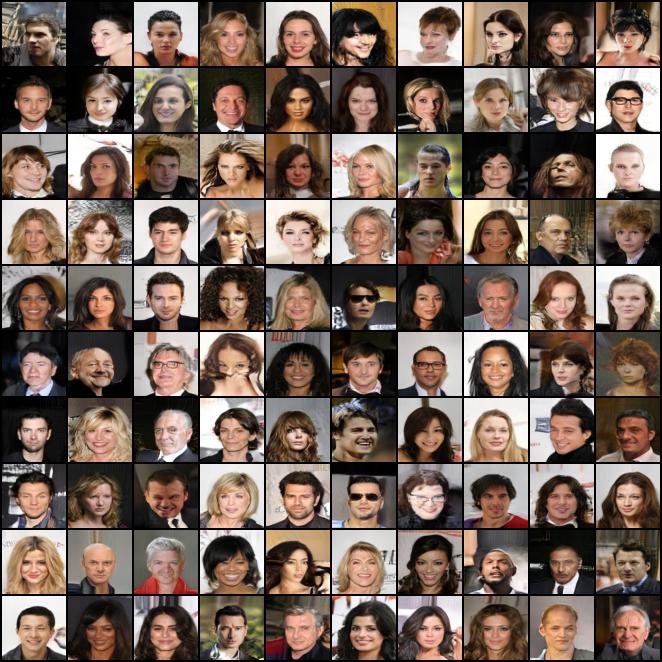} \label{fig:celebaBigGAN}}
    \subfigure[ $\lambda = 0.5$]{\includegraphics[width=0.4\textwidth]{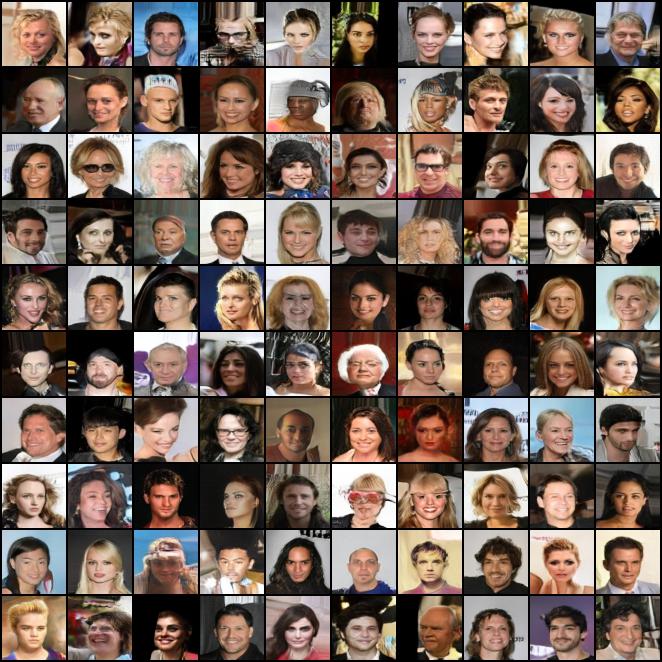} \label{fig:celebal01}}
    \subfigure[ $\lambda = 1$]{\includegraphics[width=0.4\textwidth]{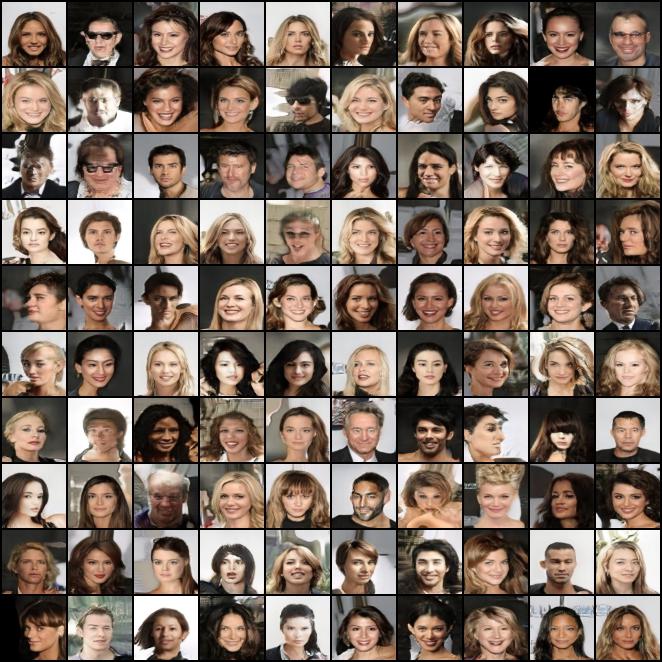}\label{fig:celebal1}}
    \subfigure[ $\lambda = 10$]{\includegraphics[width=0.4\textwidth]{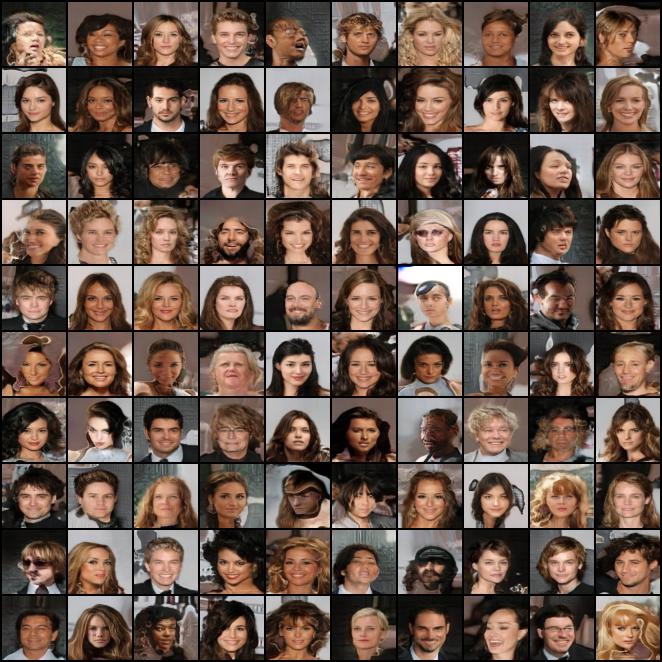}\label{fig:celebal10}}
    \caption{CelebA64: Samples are drawn from different BigGANs with our apprach. For $\lambda =0.5$, the model generated faces with various background, while the model generate only grey, black and brown backgrounds. However there are less artifacts on the latter model.}
\end{figure}
\begin{figure}[H]
    \centering
    \includegraphics[width=0.8\linewidth]{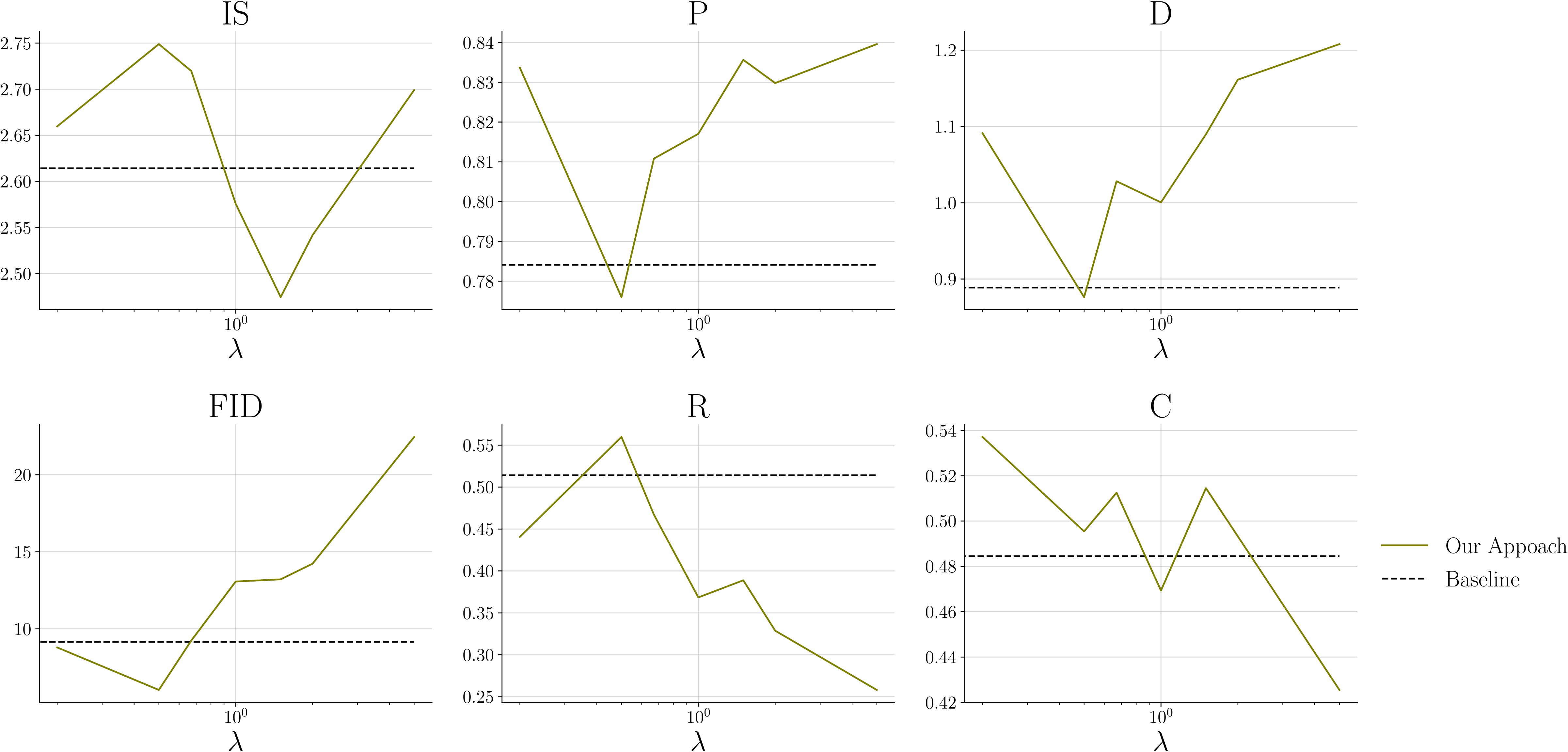}
    \caption{CelabA64: BigGAN models \cite{brock_large_2019} are trained for different $\lambda$.  From left to right, we plot IS ($\uparrow$),  P ($\uparrow$),  D ($\uparrow$), FID ($\downarrow$), R ($\uparrow$) and C ($\uparrow$). IS and FID are computed using 50k samples and P and R are computed using 10k samples with $k=5$ using \citet{kynkaanniemi_improved_2019}'s method. For comparison, we also report a model trained hinge loss (in black), following the vanilla training procedure.}
    \label{app:fig:celebaall}
\end{figure}

\subsection{Fine-tuning  BigGAN on ImageNet128 and FFHQ256.}
In this section, we fine-tune the pre-trained BigGAN models. For ImageNet128, we use the weights of the model trained by \citet{brock_large_2019}, while for FFHQ256 we use a model pre-trained by our self, explaining why the FFHQ models is under performing. We train models on 4 V100 GPUs with a batch size of 128 for 10k iterations each. The graphic representation of the results is reported in Figures~\ref{app:fig:imagenetall} and \ref{app:fig:ffhqall} and samples are shown in Figures~\ref{app:fig:i128samples} and \ref{app:fig:ffhqsamples}.
\begin{figure}[H]
    \centering
    \includegraphics[width=0.8\linewidth]{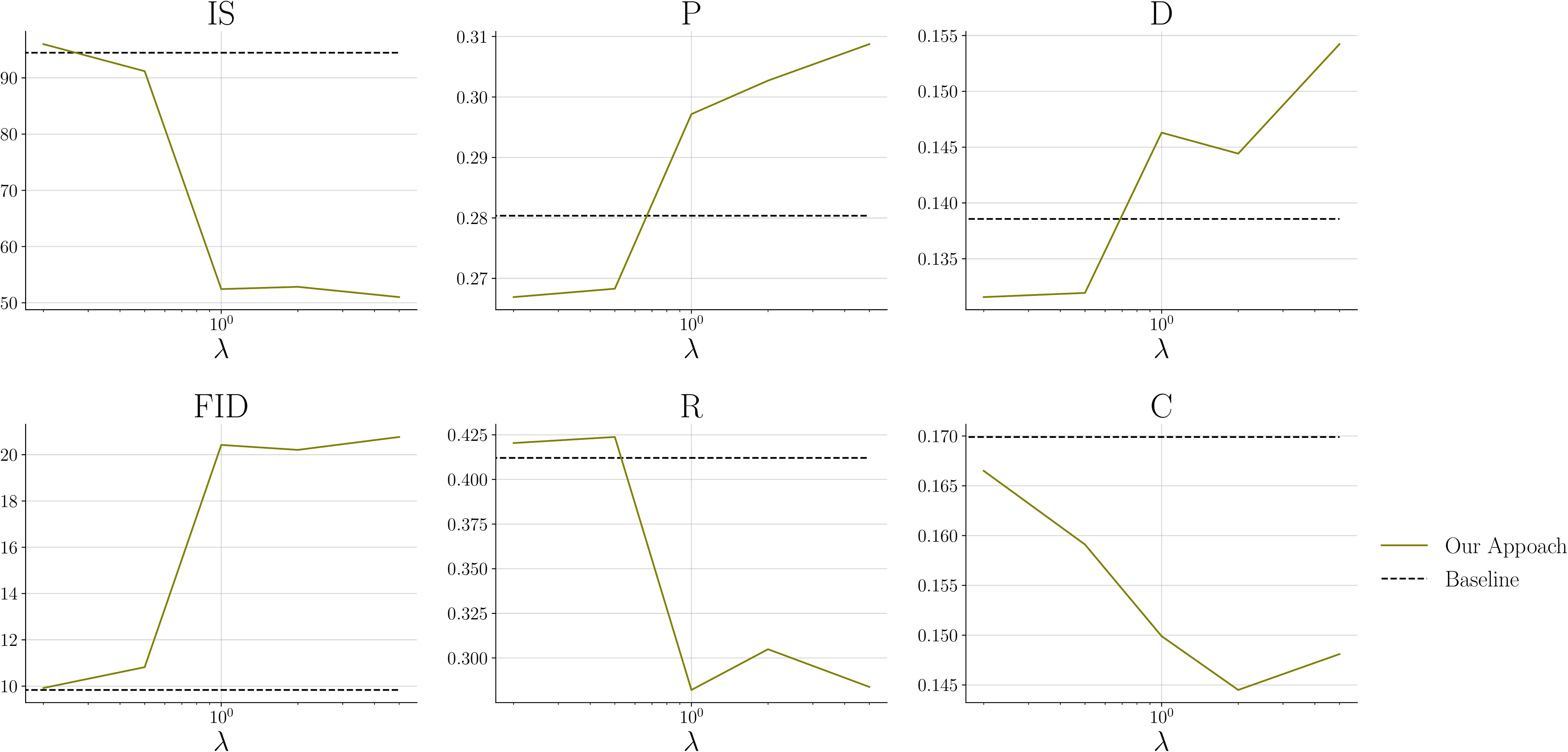}
    \caption{ImageNet128: BigGAN models \cite{brock_large_2019} are trained for different $\lambda$. From left to right, we plot IS ($\uparrow$),  P ($\uparrow$),  D ($\uparrow$), FID ($\downarrow$), R ($\uparrow$) and C ($\uparrow$). IS and FID are calculated using 50k samples, and P and R are calculated using 10k samples with $k=5$ using \citet{kynkaanniemi_improved_2019}'s method. For comparison, we also report a model trained hinge loss (in black), following the vanilla training procedure.}
    \label{app:fig:imagenetall}
\end{figure}
\begin{figure}[H]
    \centering
    \includegraphics[width=0.8\linewidth]{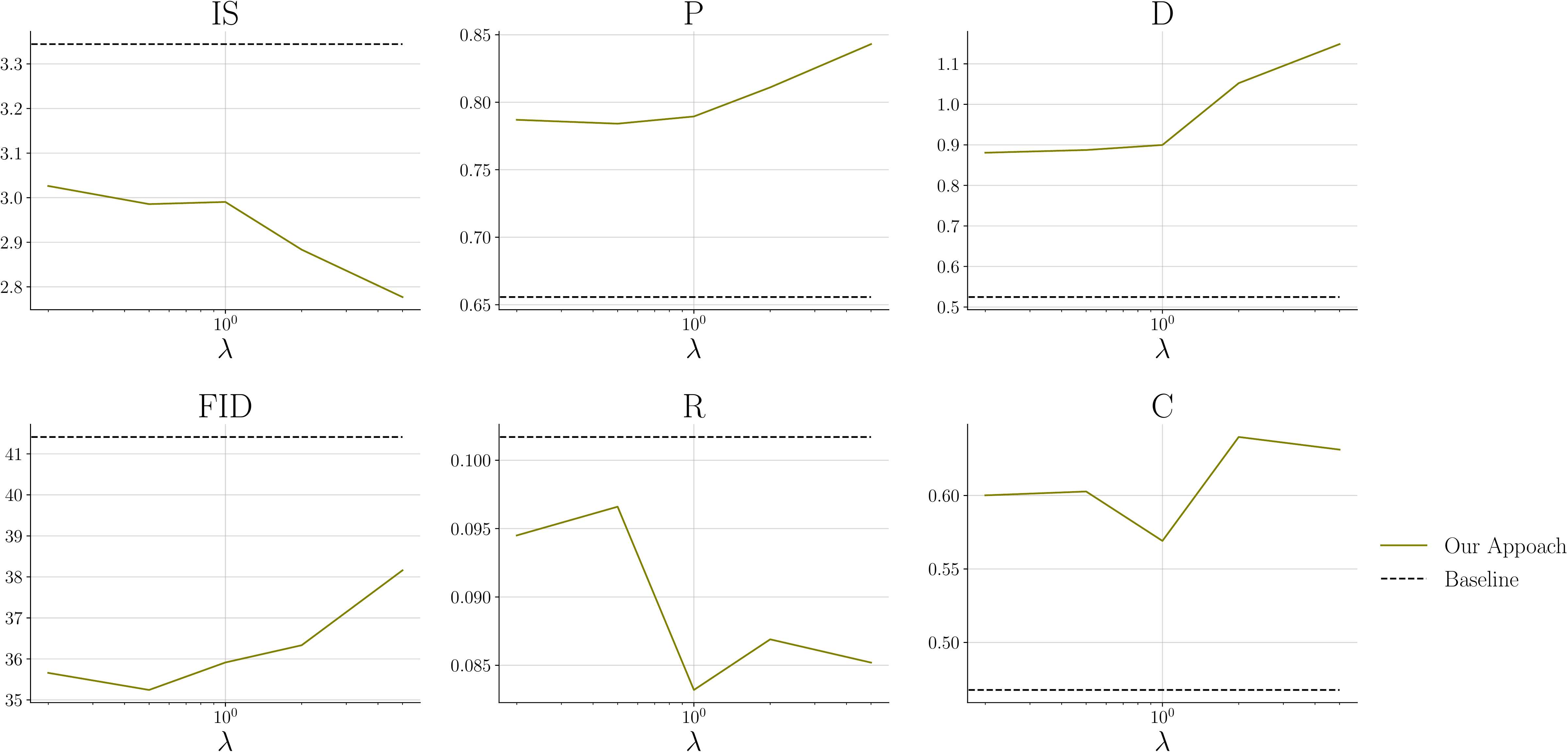}
    \caption{FFHQ: BigGAN models \cite{brock_large_2019} are trained for different $\lambda$. From left to right, we plot IS ($\uparrow$),  P ($\uparrow$),  D ($\uparrow$), FID ($\downarrow$), R ($\uparrow$) and C ($\uparrow$). IS and FID are computed using 50k samples and P and R are computed using 10k samples with $k=5$ using \citet{kynkaanniemi_improved_2019}'s method. For comparison, we also report a model trained hinge loss (in black), following the vanilla training procedure.}
    \label{app:fig:ffhqall}
\end{figure}

\begin{figure}[H]\label{app:fig:i128samples}
    \centering
    \subfigure[BigGAN Baseline]{\includegraphics[width = 0.4\textwidth]{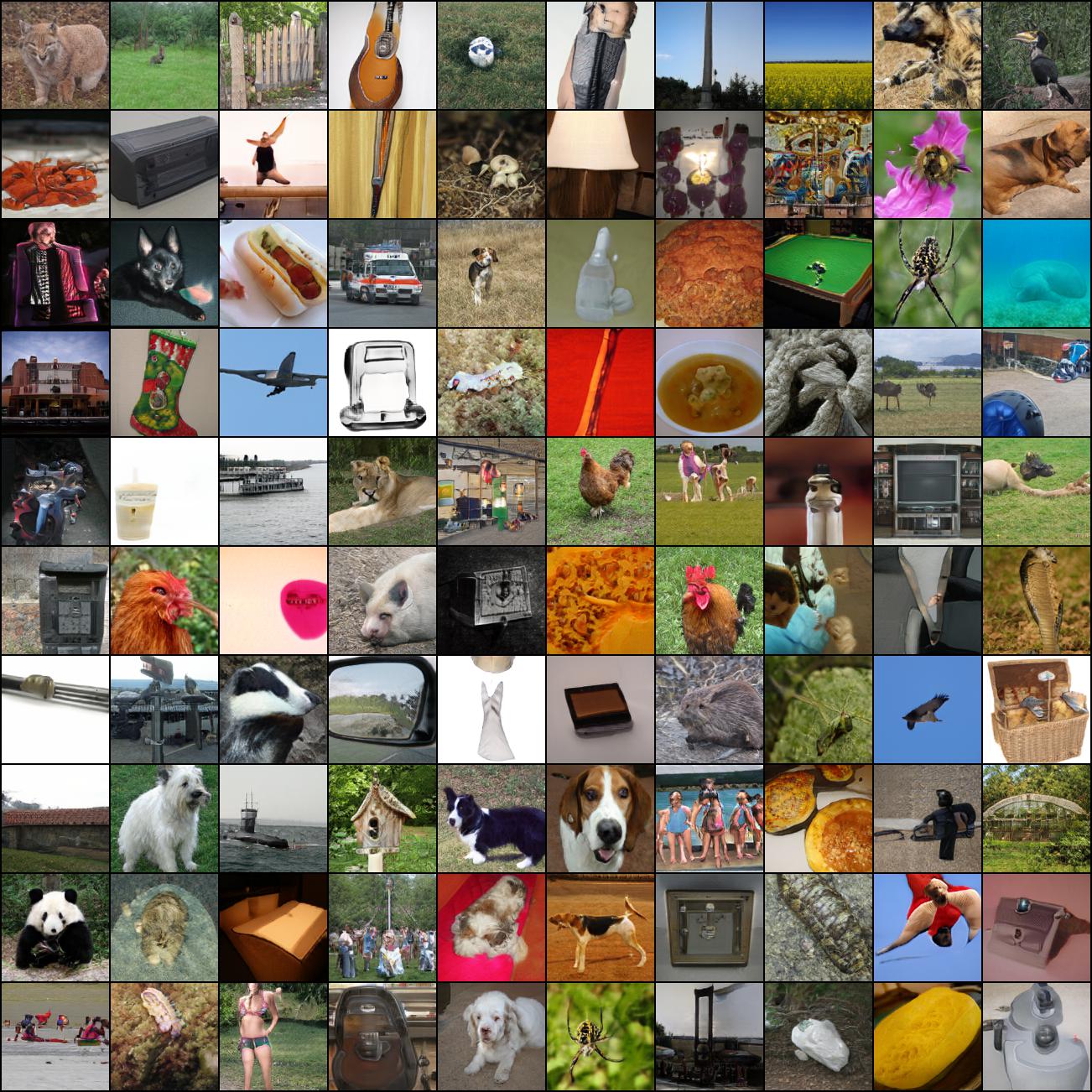} \label{fig:i128BigGAN}}
    \subfigure[ $\lambda = 0.2$]{\includegraphics[width=0.4\textwidth]{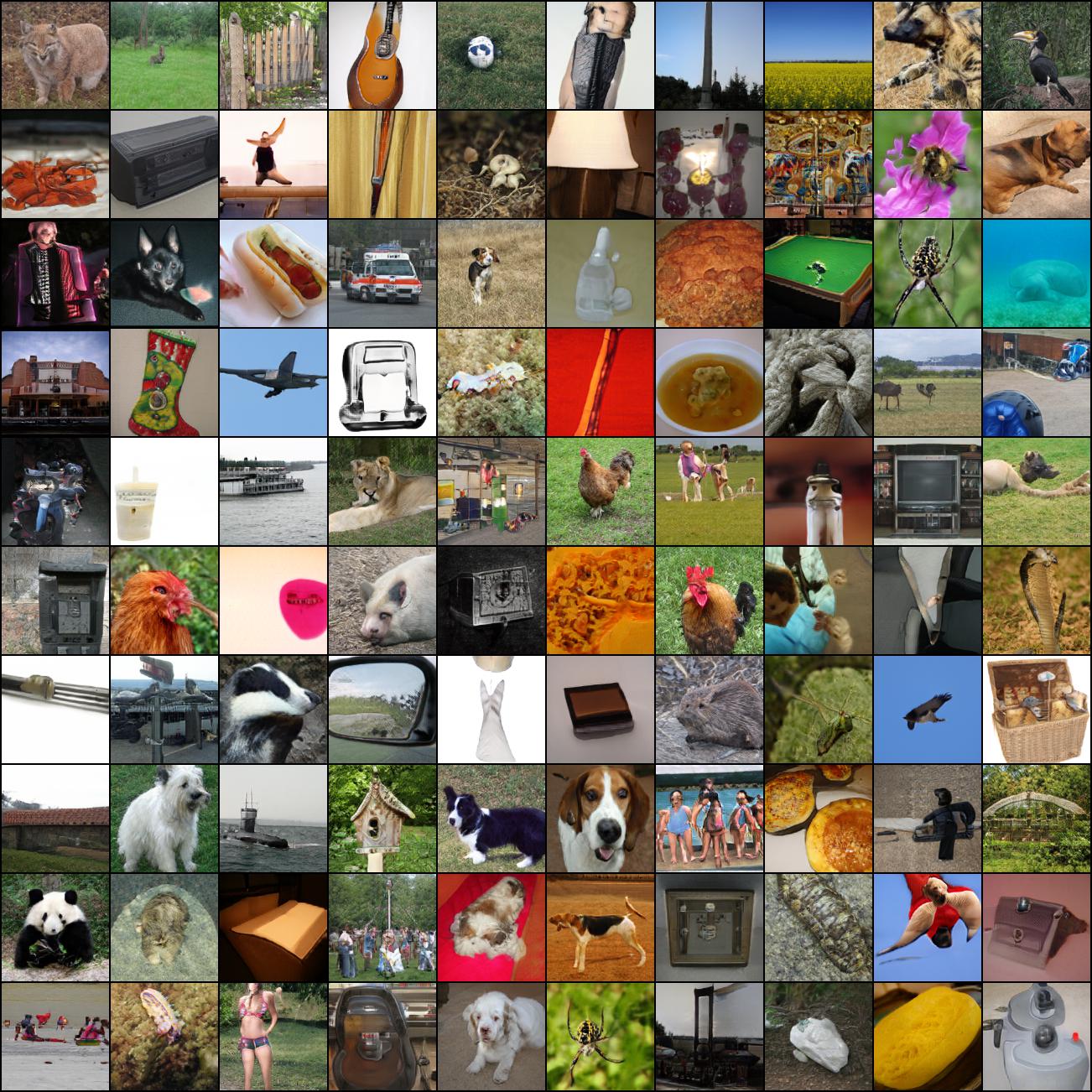} \label{fig:i128l01}}
    \subfigure[ $\lambda = 1$]{\includegraphics[width=0.4\textwidth]{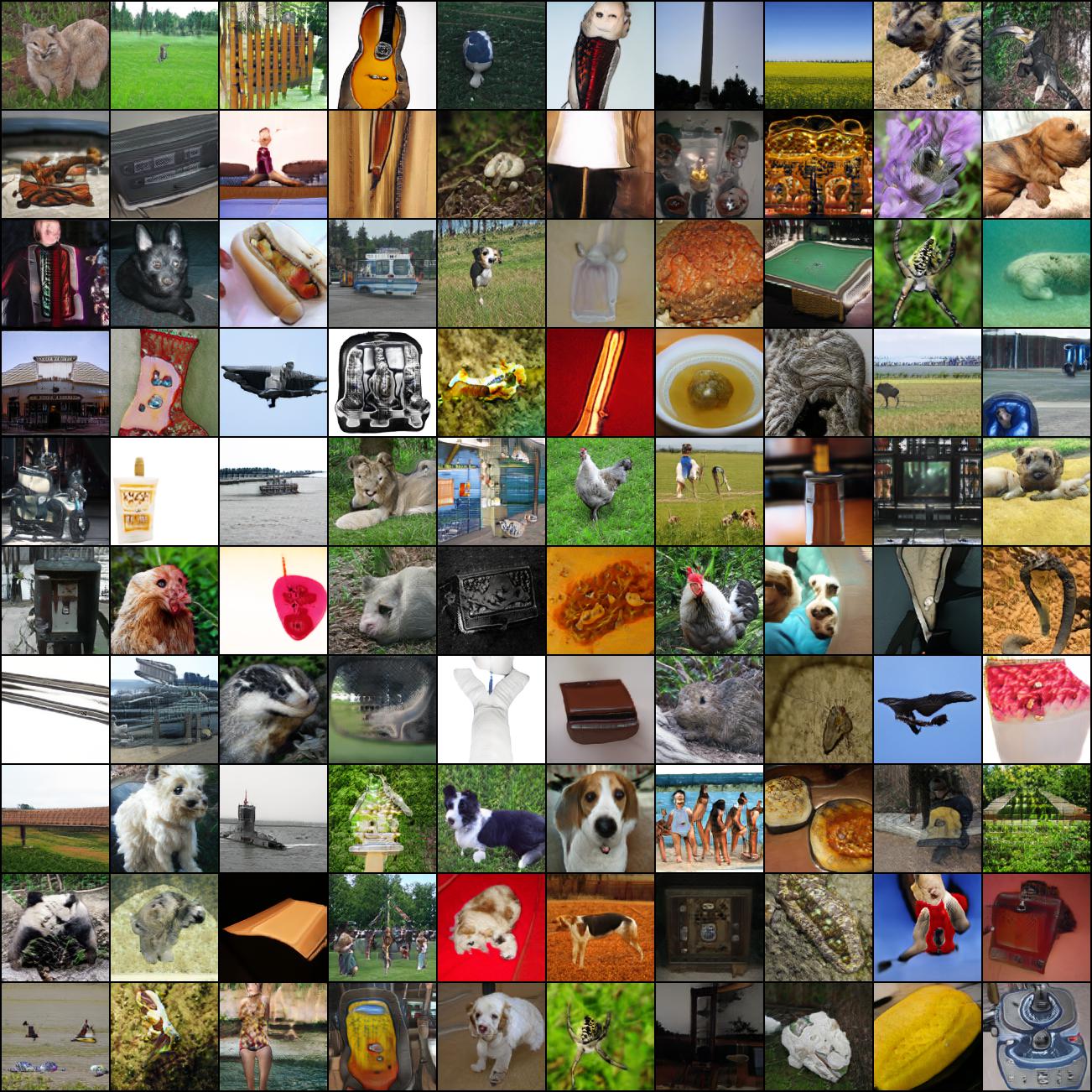}\label{fig:i128l1}}
    \subfigure[ $\lambda = 5$]{\includegraphics[width=0.4\textwidth]{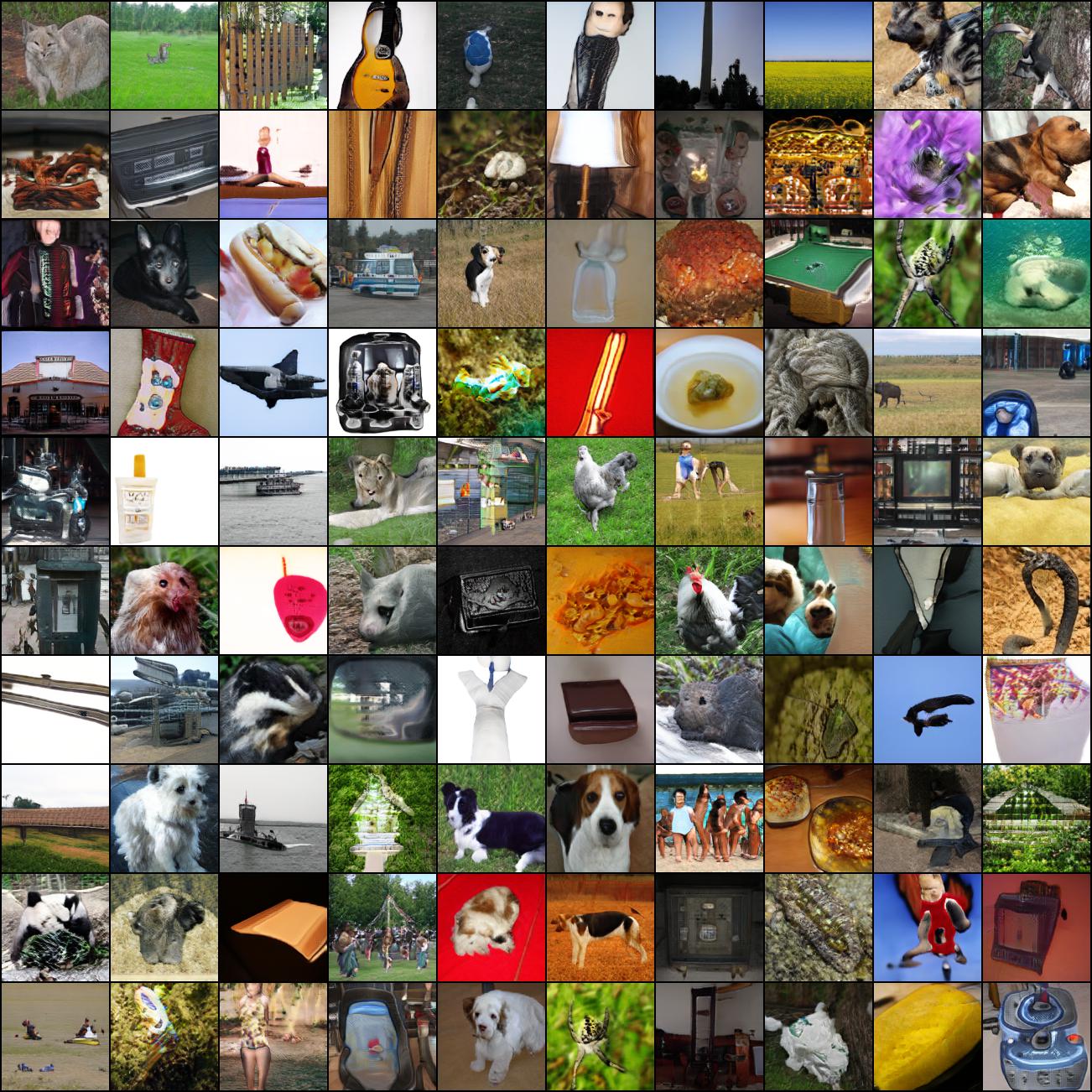}\label{fig:i128l10}}
    \caption{ImageNet128: Samples are drawn from different BigGANs trained on the PR-Divergence. }
\end{figure}

\begin{figure}[H]\label{app:fig:ffhqsamples}
    \centering
    \subfigure[BigGAN Baseline]{\includegraphics[width = 0.4\textwidth]{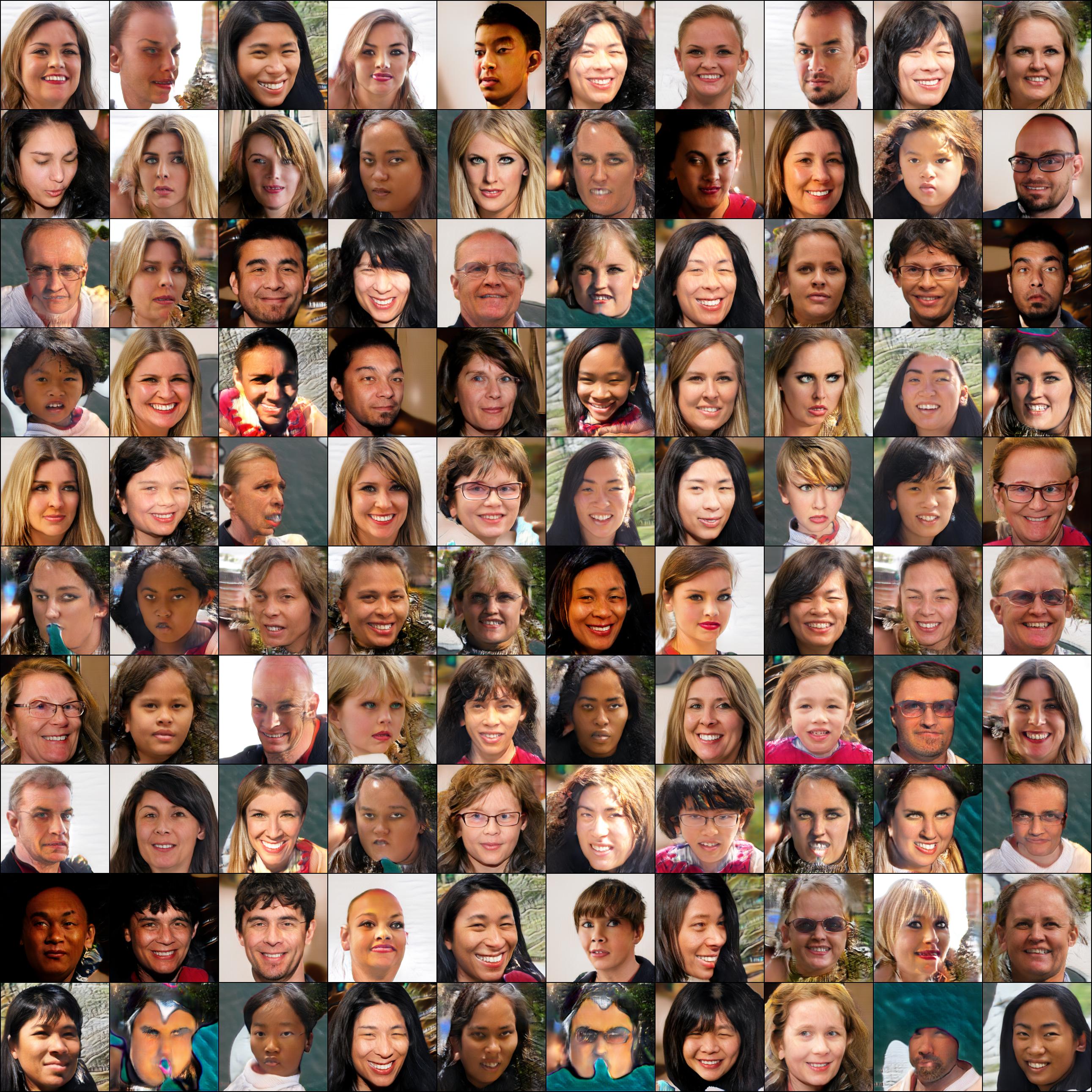} \label{fig:ffhqBigGAN}}
    \subfigure[ $\lambda = 0.5$]{\includegraphics[width=0.4\textwidth]{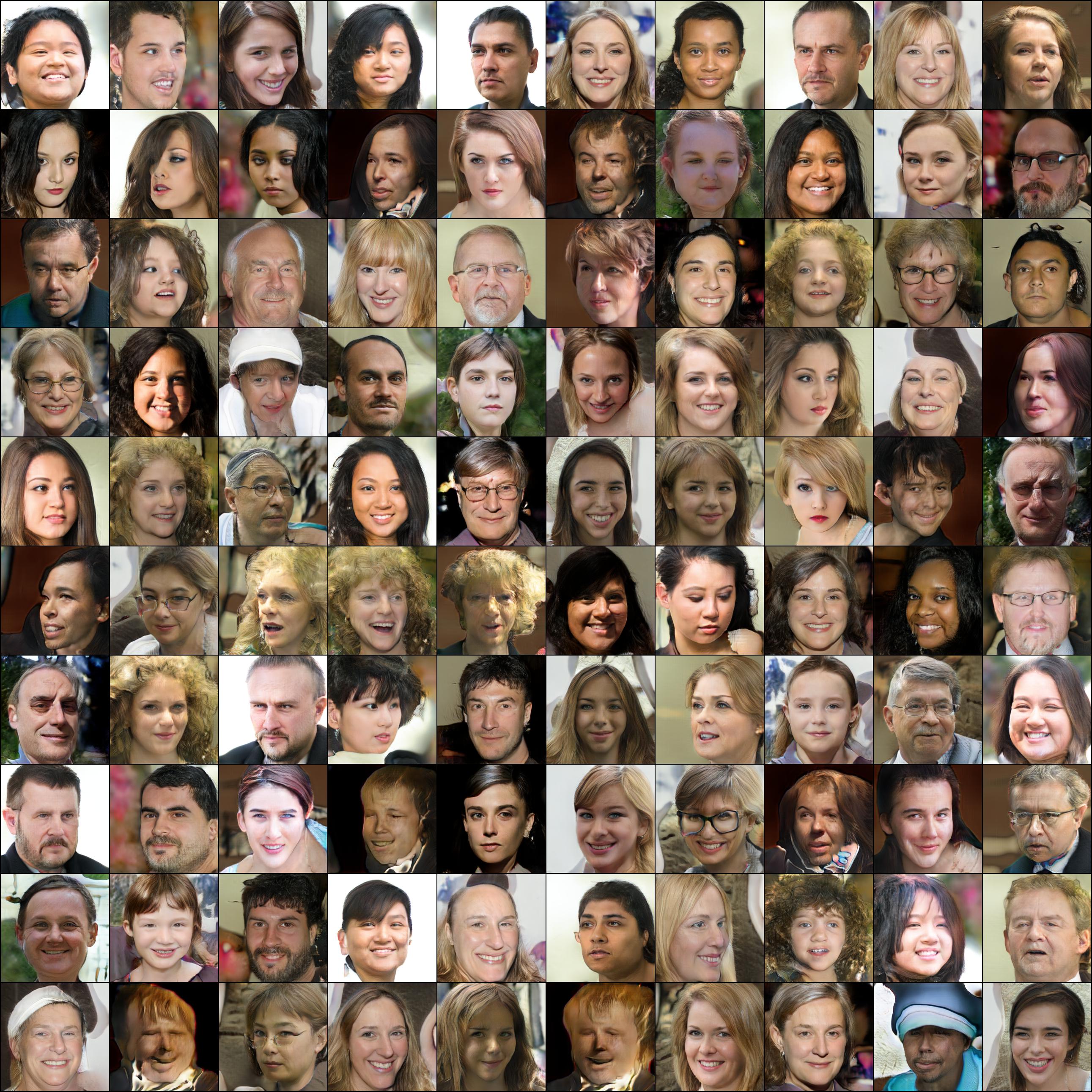} \label{fig:ffhql01}}
    \subfigure[ $\lambda = 1$]{\includegraphics[width=0.4\textwidth]{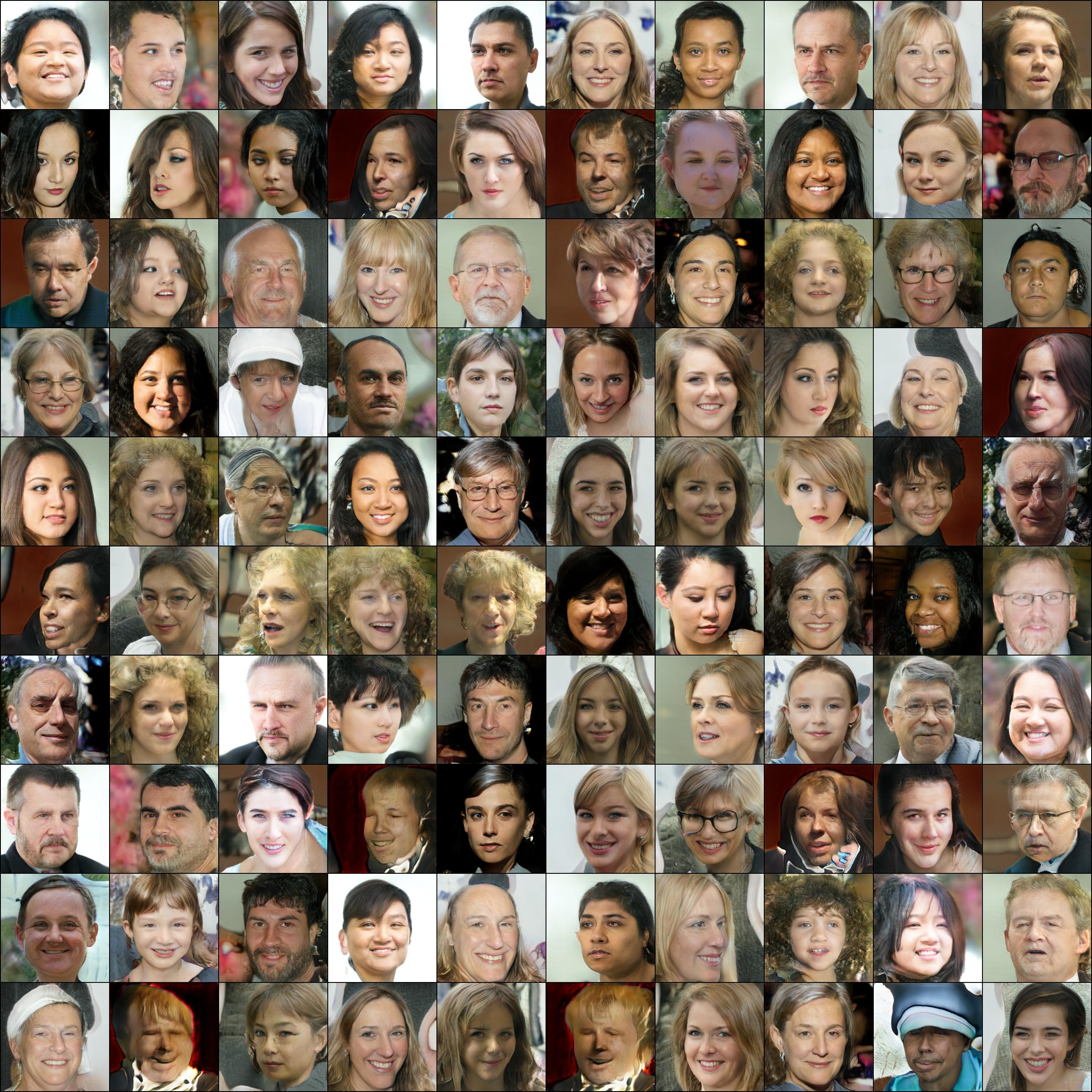}\label{fig:ffhql1}}
    \subfigure[ $\lambda = 2$]{\includegraphics[width=0.4\textwidth]{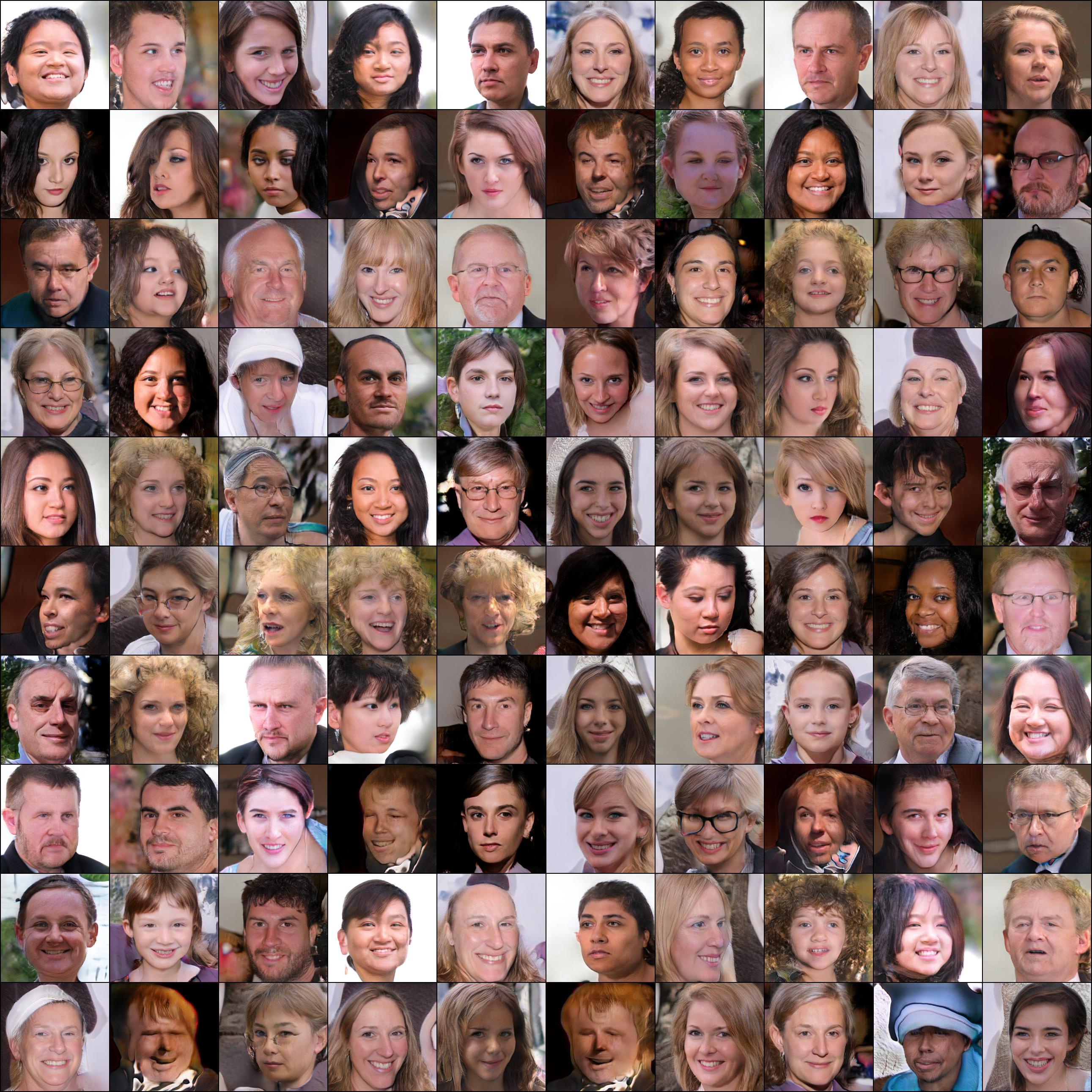}\label{fig:ffhql10}}
    \caption{FFHQ256: Samples are drawn from different BigGAN trained on the PR-Divergence.}
\end{figure}

\section{Training the AUC}
In this paper, we propose a method to optimize any trade-off between precision and recall. In practice, we optimize a model to maximize any $\alpha_\lambda(P\Vert \whP)$. We could also optimize for the area under the PR-Curves $\PRd(P, \whP)$.   In this section, we will show how we can compute the AUC in terms of $\alpha_\lambda(P\Vert\whP)$ and train a model on a small dimension dataset and compare the results with the model trained on several $\lambda$. First, we must compute the AUC:
\begin{proposition}[AUC under the $\partial \PRd$]
The area under the curve is: 
\begin{align}
    \mathrm{AUC} = \int_{0}^{+\infty}\frac{\alpha_\lambda(P\Vert \whP)^2}{\lambda^2}\d\lambda
\end{align}
\label{prop:AUC}
\end{proposition}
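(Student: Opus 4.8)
\textbf{Proof proposal for Proposition~\ref{prop:AUC}.}
The plan is to treat the area under $\partial\PRd$ as a line integral and collapse it to a one‑dimensional integral in the trade‑off parameter using the parametrization of \citet{simon_revisiting_2019}, $\partial\PRd=\{(\alpha_\lambda(P\Vert\whP),\beta_\lambda(P\Vert\whP)):\lambda\in[0,\infty]\}$ with $\beta_\lambda=\alpha_\lambda/\lambda$. First I would record the basic behaviour of $\alpha_\lambda:=\alpha_\lambda(P\Vert\whP)=\int_{\mathcal X}\min(\lambda p(\vx),\whp(\vx))\,\d\vx$: it is nondecreasing and concave in $\lambda$ (a $\lambda$‑integral of pointwise minima of affine functions), it satisfies $\alpha_\lambda\le\lambda\int p=\lambda$ so that $\alpha_\lambda=O(\lambda)$ as $\lambda\to 0$, and $\alpha_\lambda\to\int\whp=1$ as $\lambda\to\infty$ by monotone convergence; correspondingly $\beta_\lambda\to 1$ as $\lambda\to0$ and $\beta_\lambda=\alpha_\lambda/\lambda\to 0$ as $\lambda\to\infty$. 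Hence the curve runs from the corner $(0,1)$ to the corner $(1,0)$, $\alpha_\lambda^2$ and $\beta_\lambda$ are absolutely continuous on compact subintervals of $(0,\infty)$, and $\d\beta_\lambda=(\alpha_\lambda'/\lambda-\alpha_\lambda/\lambda^2)\,\d\lambda$ a.e.

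The key computation is then the following. Close the curve with the two coordinate axis segments (from $(1,0)$ to $(0,0)$ along $\beta=0$, and from $(0,0)$ to $(0,1)$ along $\alpha=0$); on both segments the one‑form $\alpha\,\d\beta-\beta\,\d\alpha$ vanishes identically, so by Green's theorem the area under $\partial\PRd$ equals $\tfrac12$ of $\oint(\alpha\,\d\beta-\beta\,\d\alpha)$ over the closing contour, i.e.\ $\tfrac12\left\lvert\int_{\text{curve}}(\alpha\,\d\beta-\beta\,\d\alpha)\right\rvert$. Since $\lambda$ is exactly the ratio $\alpha_\lambda/\beta_\lambda$, substituting $\beta_\lambda=\alpha_\lambda/\lambda$ gives the clean cancellation
\begin{align*}
\alpha_\lambda\,\d\beta_\lambda-\beta_\lambda\,\d\alpha_\lambda
=\Bigl(\tfrac{\alpha_\lambda\alpha_\lambda'}{\lambda}-\tfrac{\alpha_\lambda^2}{\lambda^2}\Bigr)\d\lambda-\tfrac{\alpha_\lambda\alpha_\lambda'}{\lambda}\,\d\lambda
=-\frac{\alpha_\lambda^2}{\lambda^2}\,\d\lambda ,
\end{align*}
so that $\mathrm{AUC}=\tfrac12\int_0^{\infty}\alpha_\lambda^2/\lambda^2\,\d\lambda$, which is the stated formula (up to the normalization of the area; with this geometric normalization the perfect curve $P=\whP$ has $\alpha_\lambda=\min(\lambda,1)$ and indeed $\tfrac12\int_0^\infty\alpha_\lambda^2/\lambda^2\,\d\lambda=1$). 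Equivalently, one can avoid Green's theorem and write $\mathrm{AUC}=\int_0^1\alpha\,\d\beta=-\int_0^\infty\alpha_\lambda\beta_\lambda'\,\d\lambda=\int_0^\infty\alpha_\lambda^2/\lambda^2\,\d\lambda-\int_0^\infty\alpha_\lambda\alpha_\lambda'/\lambda\,\d\lambda$ and integrate the last term by parts, $\int_0^\infty\tfrac1\lambda\,\d\!\bigl(\tfrac12\alpha_\lambda^2\bigr)=\bigl[\tfrac{\alpha_\lambda^2}{2\lambda}\bigr]_0^\infty+\tfrac12\int_0^\infty\alpha_\lambda^2/\lambda^2\,\d\lambda$, arriving at the same answer.

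I expect the main obstacle to be purely the analytic bookkeeping at the two endpoints $\lambda\to0$ and $\lambda\to\infty$: one has to check that $\int_0^\infty\alpha_\lambda^2/\lambda^2\,\d\lambda$ actually converges — it does, the integrand being $O(1)$ near $0$ (since $\alpha_\lambda=O(\lambda)$) and $O(\lambda^{-2})$ near $\infty$ (since $\alpha_\lambda\le1$) — that the boundary term $\alpha_\lambda^2/(2\lambda)$ vanishes at both ends (it is $O(\lambda)$ as $\lambda\to0$ and $O(\lambda^{-1})$ as $\lambda\to\infty$), and that the integration by parts / Green's theorem step is legitimate even though $\alpha_\lambda$ is only absolutely continuous rather than $C^1$; the latter follows from its concavity together with the fact that $\beta_\lambda$ is convex, which makes both monotone and of bounded variation on compacta. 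A final small point is to confirm that the parametrization sweeps $\partial\PRd$ monotonically exactly once, which is immediate from $\alpha_\lambda$ nondecreasing, $\beta_\lambda$ nonincreasing, and \citet{simon_revisiting_2019}'s characterization of the curve.
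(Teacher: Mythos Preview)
Your argument is correct and takes a genuinely different route from the paper's. The paper parametrizes $\partial\PRd$ in polar form: with $\lambda=\tan\theta$, the point $(\alpha_\lambda,\beta_\lambda)=(\alpha_\lambda,\alpha_\lambda/\lambda)$ has distance $r(\theta)=\alpha_{\tan\theta}/\sin\theta$ from the origin, and the polar area formula $\mathrm{AUC}=\tfrac12\int_0^{\pi/2}r(\theta)^2\,\d\theta$ is then collapsed to the $\lambda$-integral via the substitution $\d\lambda=(1+\lambda^2)\,\d\theta$ and the identity $1/\sin^2\theta=(1+\lambda^2)/\lambda^2$. You instead use the Cartesian shoelace/Green form $\tfrac12\oint(\alpha\,\d\beta-\beta\,\d\alpha)$ (equivalently, $\int\alpha\,\d\beta$ plus integration by parts), and the relation $\beta_\lambda=\alpha_\lambda/\lambda$ makes the cross terms cancel directly. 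Your path is arguably more elementary---no trigonometric change of variables is needed and the cancellation is one line---while the paper's polar picture is more visual and explains \emph{why} the ratio $\lambda$ is the natural sweeping parameter. You are also more careful than the paper about the analytic bookkeeping (convergence of the integral, vanishing boundary terms, absolute continuity of $\alpha_\lambda$), which is a genuine plus.

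One point worth flagging: you obtain $\mathrm{AUC}=\tfrac12\int_0^\infty\alpha_\lambda^2/\lambda^2\,\d\lambda$, which is the geometrically correct value (your sanity check $P=\whP\Rightarrow\mathrm{AUC}=1$ confirms this). The statement as written in the paper omits the factor $\tfrac12$; inspection of the paper's own derivation shows the $\tfrac12$ from the polar area formula $\tfrac12\int r^2\,\d\theta$ is silently dropped when passing to the next line. So your result differs from the stated proposition by a constant factor, but your version is the correct one.
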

\begin{proof}

The AUC can be computed by integrating with respect to an angle $\theta$ in the first quadrant:

\begin{align}
    \mathrm{AUC} =\int_{\theta= 0}^{\pi/2}\int_{u=0}^{r(\theta)} u\d u \d\theta= \int_0^{\pi/2} \frac{r^2(\theta)}{2}\d\theta.
\end{align}
Therefore, with $\lambda = \tan \theta$, we have $r(\theta) = \alpha_{\tan(\theta)}(P\Vert\whP) / \sin\theta$ (see Figure~\ref{app:fig:AUChelp}). Thus:
\begin{align}
    \mathrm{AUC} &= \int_0^{\pi/2}\frac{ \alpha_{\tan(\theta)}(P\Vert\whP)^2 }{\sin^2\theta}\d\theta\\
    &= \int_0^{\pi/2}\alpha_{\tan(\theta)}(P\Vert\whP)^2\left(1+ \frac{1}{\tan^2\theta}\right)\d\theta\\
    & =   \int_0^{+\infty} \alpha_{\lambda}(P\Vert\whP)^2\frac{\left(1+ \frac{1}{\lambda^2}\right)}{1+\lambda^2}\d\lambda \quad \mbox{with } \quad  \d \lambda = (1+\tan^2\theta)\d\theta, \\
      & =   \int_0^{+\infty} \frac{\alpha_{\lambda}(P\Vert\whP)^2}{\lambda^2}\d\lambda.  
    \end{align}
\end{proof}
Using such expression, we can estimate the AUC and train model to directly optimize the AUC. It loses the focus on a specific trade-off between precision and recall, but instead optimizes overall performance. We trained a RealNVP model on a 2D 8 Gaussians example, similar to Section~\ref{sec:xp}. We observe in Figure~\ref{app:fig:AUC} that while the model trained to minimize the AUC performs poorly in terms of $\lambda = 0.1$ and $\lambda=1$. However, we can clearly see that the AUC is greater and the model performs better in $\lambda=1$ than the model trained for $\lambda=1$ itself. We can also observe in Figure~\ref{app:fig:AUC}, the resulting model.   
\begin{figure}[H]
\centering
\begin{minipage}{0.45\linewidth}
    \centering
    \includegraphics[width =0.8 \linewidth]{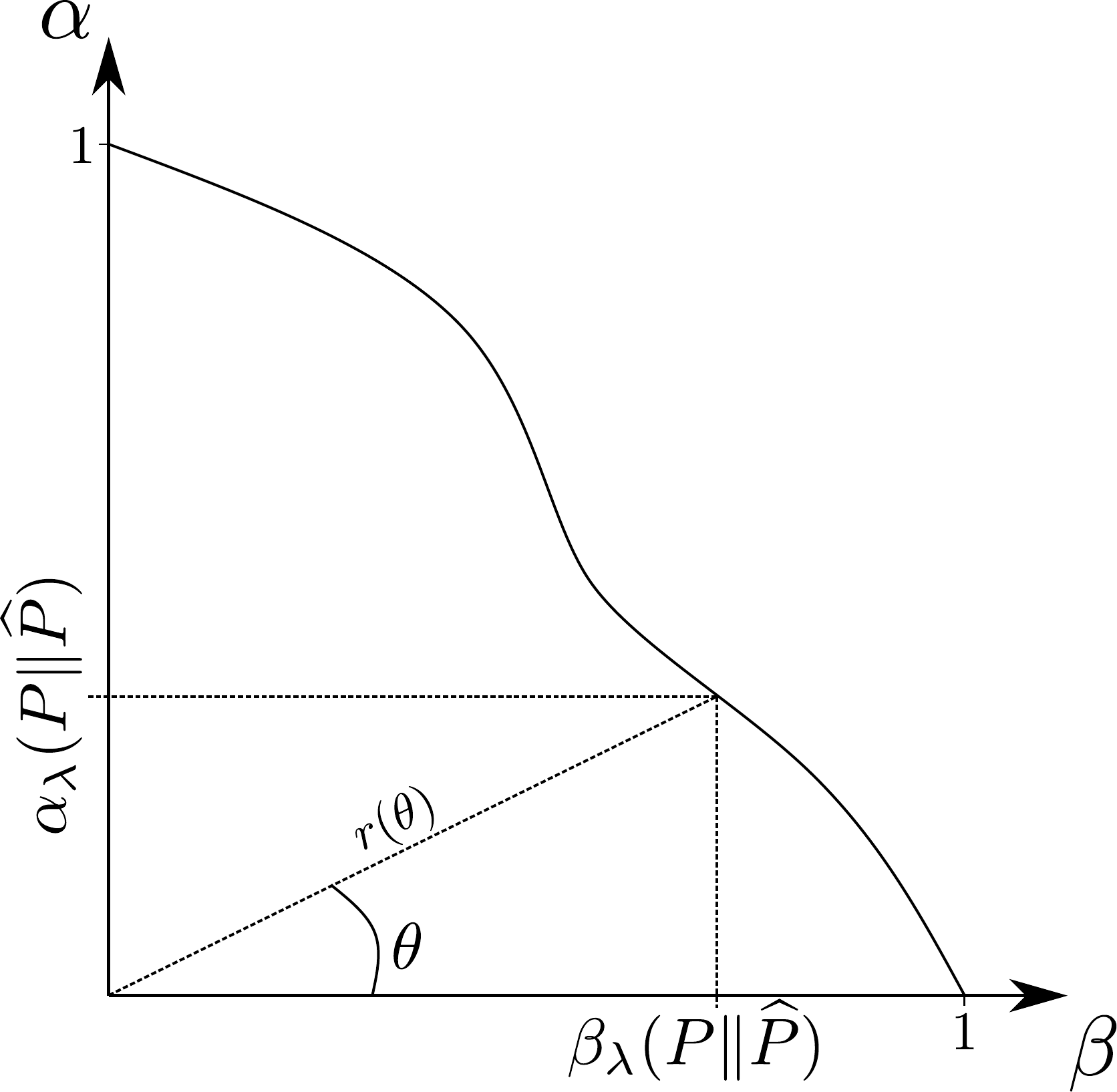}
    \caption{Illustration of the change of variable to compute the AUC. Instead of parametrising the frontier $\partial \PRd$ with $\lambda\in\reals \cup \left[0, \infty \right]$, we take $\theta\in\left[0, \frac{\pi}{2}\right]$ with $\lambda = \tan \theta$.}
    \label{app:fig:AUChelp}
    \end{minipage}
    \hfill
\begin{minipage}{0.45\linewidth}
    \centering
   \includegraphics[width=\linewidth]{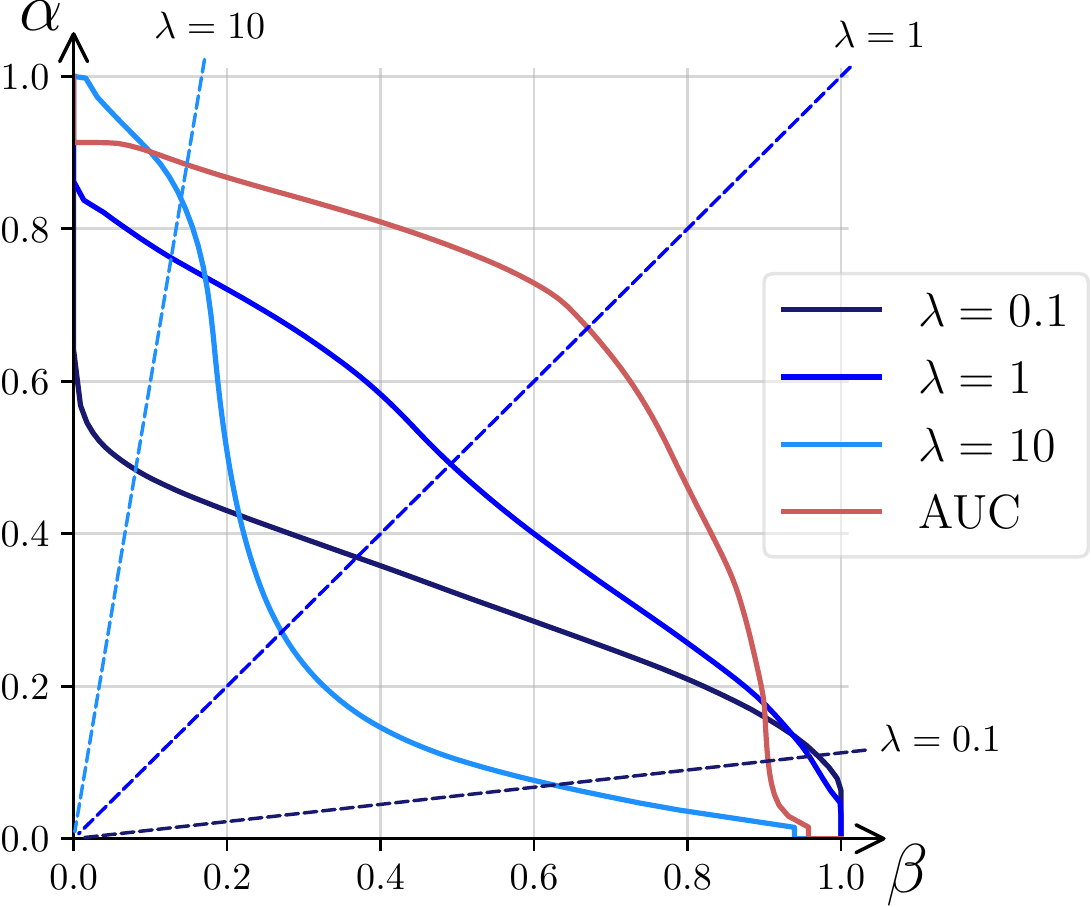}  \label{app:fig:prcurvesAUC}
    \caption{PR-Curves corresponding to the models represented in Figure~\ref{app:fig:2D}.}
    \end{minipage}
\end{figure}

\begin{figure}[H]
    \centering
    \subfigure[ $\lambda = 0.1$]{\includegraphics[width=0.24\textwidth]{figure/Good_8gaussians_10_ll.png} \label{app:fig:l01}}
    \subfigure[ $\lambda = 1$]{\includegraphics[width=0.24\textwidth]{figure/Good_8gaussians_100_ll.png}\label{app:fig:l1}}
    \subfigure[ $\lambda = 10$]{\includegraphics[width=0.24\textwidth]{figure/Good_8gaussians_1000_ll.png}\label{app:fig:l10}}
        \hfill\vline\hfill
    \subfigure[c][AUC]{\includegraphics[width=0.24\linewidth]{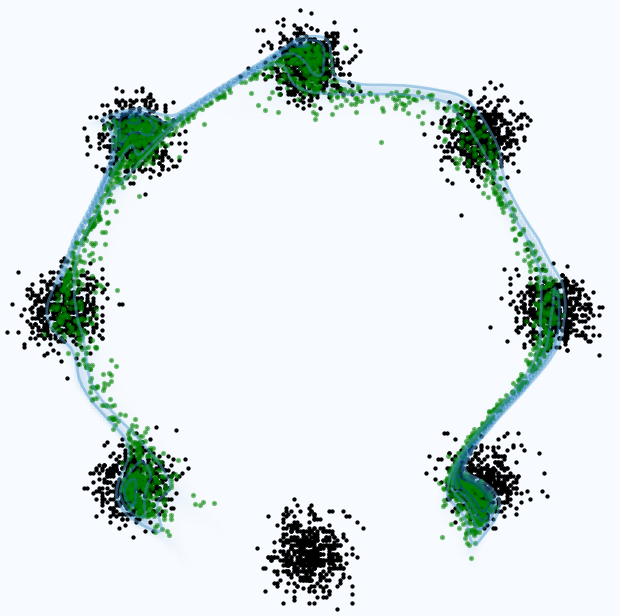} \label{app:fig:AUC}}
    \label{app:fig:2D}
\end{figure}
\end{document}